\newcommand{\citet}[1]{\citeauthor*{#1}~\cite{#1}}
\tikzset{
	->, 
	every state/.style={thick, fill=gray!10}, 
	initial text=$ $, 
}
\pgfplotsset{width=10cm,compat=1.9}
\newif\if@restonecol
\newcommand{\Rmnum}[1]{\expandafter\@slowromancap\romannumeral #1@}
\DeclareMathOperator*{\argmin}{arg\,min}
\DeclareMathOperator*{\argmax}{arg\,max}
\let\mathbbm\mathds
\newcommand{\norm}[1]{\left\lVert#1\right\rVert}
\newcommand{\abs}[1]{\left|#1\right|}
\newcommand{\Pro}{\mathbb{P}}
\newcommand{\one}{\mathbbm{1}}
\newcommand{\set}[1]{\left\{ #1 \right\}}
\renewcommand{\bar}{\overline}
\renewcommand{\tilde}{\widetilde}
\newcommand{\E}{\mathbb{E}}
\newcommand{\calL}{\mathcal{L}}
\newcommand{\calS}{\mathcal{S}}
\newcommand{\calR}{\mathcal{R}}
\newcommand{\calD}{\mathcal{D}}
\newcommand{\calH}{\mathcal{H}}
\newcommand{\calC}{\mathcal{C}}
\newcommand{\calT}{\mathcal{T}}
\newcommand{\calM}{\mathcal{M}}
\newcommand{\calW}{\mathcal{W}}
\newcommand{\calA}{\mathcal{A}}
\newcommand{\calN}{\mathcal{N}}
\newcommand{\expp}{\mathbb{E}}
\newcommand{\mx}{\mathrm{x}}
\newcommand{\R}{\mathbb{R}}
\newcommand{\calF}{\mathcal{F}}
\newcommand{\reg}{\textrm{Reg}}
\newcommand{\eps}{\varepsilon}
\newcommand{\Identity}{\mathrm{Id}}
\newcommandx{\task}[2][1=]{\todo[linecolor=red,backgroundcolor=red!25,bordercolor=red,#1]{#2}}
\newcommandx{\change}[2][1=]{\todo[linecolor=blue,backgroundcolor=blue!25,bordercolor=blue,#1]{#2}}
\newcommandx{\info}[2][1=]{\todo[linecolor=green,backgroundcolor=green!25,bordercolor=green,#1]{#2}}
\newcommand{\htheta}{\widehat{\theta}}
\newcommand{\tref}{\textrm{ref}}
\newcommand{\diff}{\textrm{diff}}
\newcommand{\avg}{\textrm{avg}}
\newtheorem{theorem}{Theorem}[section]
\newtheorem{proposition}[theorem]{Proposition}
\newtheorem{lemma}[theorem]{Lemma}
\theoremstyle{definition}
\newtheorem{definition}[theorem]{Definition}
\newtheorem{assumption}[theorem]{Assumption}
\theoremstyle{remark}
\newtheorem*{statement*}{Statement}
\title{Corruption Robust Offline Reinforcement Learning \\ with Human Feedback}
\begin{document}

\author{Debmalya Mandal\\Dept. of Computer Science\\University of Warwick, UK\\ \texttt{Debmalya.Mandal@warwick.ac.uk}
\and Andi Nika\\Max-Planck Institute for \\Software Systems, Germany\\ \texttt{andinika@mpi-sws.org}
\and Parameswaran Kamalaruban\\Independent Researcher\\ London, UK\\ \texttt{pkamalaruban@gmail.com}
\and Adish Singla\\Max-Planck Institute for \\Software Systems, Germany\\ \texttt{adishs@mpi-sws.org}
\and Goran Radanovi\'c\\Max-Planck Institute for \\Software Systems, Germany\\ \texttt{gradanovic@mpi-sws.org}
}
\maketitle
\begin{abstract}
     We study data corruption robustness for reinforcement learning with human feedback (RLHF) in an offline setting. Given an offline dataset of pairs of trajectories along with feedback about human preferences, an $\varepsilon$-fraction of the pairs is corrupted (e.g., feedback flipped or trajectory features manipulated), capturing an adversarial attack or noisy human preferences. We aim to design algorithms that identify a near-optimal policy from the corrupted data, with provable guarantees. Existing theoretical works have separately studied the settings of corruption robust RL (learning from scalar rewards directly under corruption) and offline RLHF (learning from human feedback without corruption); however, they are inapplicable to our problem of dealing with corrupted data in offline RLHF setting. To this end, we design novel corruption robust offline RLHF methods under various assumptions on the coverage of the data-generating distributions. At a high level, our methodology robustifies an offline RLHF framework by first learning a reward model along with confidence sets and then learning a pessimistic optimal policy over the confidence set. Our key insight is that learning optimal policy can be done by leveraging an offline corruption-robust RL oracle in different ways (e.g., zero-order oracle or first-order oracle), depending on the data coverage assumptions. To our knowledge, ours is the first work that provides provable corruption robust offline RLHF methods.
\end{abstract}

\doparttoc
\faketableofcontents

\section{Introduction}\label{sec:intro}

Reinforcement Learning from Human Feedback (RLHF) has emerged as a powerful paradigm for addressing complex tasks across diverse domains, ranging from large language models (LLMs) to robotics and game-playing~\cite{christiano2017deep,ziegler2019fine,stiennon2020learning,ouyang2022training,bai2022training,shin2023benchmarks}. At the core of RLHF is its unique ability to model reward functions solely from preference data, making it particularly well-suited for scenarios where explicit reward signals are challenging to define. Following reward model estimation, traditional RLHF approaches employ online reinforcement learning algorithms for subsequent policy optimization. However, the integration of offline RL within the RLHF pipeline holds promise for alleviating limitations inherent to online RL, notably in terms of sample efficiency and safety concerns~\cite{levine2020offline,kidambi2020morel}. By incorporating offline RL algorithms, RLHF becomes more adaptable to scenarios where online data collection proves prohibitive, facilitating the reuse of valuable pre-existing datasets~\cite{shin2023benchmarks}.

The real-world deployment of RLHF faces substantial challenges rooted in the reliability of the preference data, which is integral to its effectiveness. These challenges primarily arise from two sources: adversarial corruption and inherent noise~\cite{casper2023open,xue2023reinforcement,chhan2024crowd}. Adversarial entities, acting with malicious intent, may deliberately manipulate feedback labels or trajectory features, introducing potential biases in the reward model. Simultaneously, inherent human subjectivity within crowd-sourced preference data can contribute substantial noise, impeding accurate reward estimation. In light of these challenges, a pivotal research question emerges: \emph{Can we devise a robust variant of RLHF that efficiently learns from adversarially corrupted or noisy preference data, exhibiting graceful scalability amidst increasing corruption levels?}


\begin{table*}[!t]
\centering
\begin{tabular}{m{8em}|m{10em}|m{11em}|c}\hline\hline
Type of Coverage & Suboptimality Gap & Robust RL Oracle & \# Oracle Calls\\\hline\hline
Uniform ($\xi$) & $O\left( \frac{H^3 + \sqrt{Hd}}{\xi} \varepsilon^{1 - o(1)}\right)$ & \texttt{R-LSVI}~\parencite{zhang2022corruption}, zero-order access & 1\\ \hline 
Relative Condition Number ($\alpha$) & $\tilde{O}\left(H^2 d \kappa \sqrt{\alpha \varepsilon}\right)$  $+\ \tilde{O}\left(H^{5/4} d^{3/4} (\alpha \varepsilon)^{1/4}\right)$ & \texttt{R-LSVI}~\parencite{zhang2022corruption}, zero-order access & $\tilde{O} \left( \frac{H^{3/2}d^5   }{\varepsilon^3}\right)$\\ \hline 
Generalized  Coverage Ratio ($\nu$) & ${O}\left( \nu \kappa \sqrt{\varepsilon} H^2 d^{3/2}\right)$ & \Cref{alg:offline-primal-dual} (\textcolor{blue}{Our method}), first-order access& $O\left( \frac{1}{\varepsilon \nu}\right)$\\ \hline 
\end{tabular}
\caption{We design \emph{corruption robust RLHF} through reduction to \emph{corruption robust offline RL} problem. Here $H$ is the horizon length, $d$ is the dimension of the features, and $\kappa$ and $\alpha$ are constants . Under uniform coverage and low relative condition number, we use \texttt{R-LSVI} as an oracle, and obtain suboptimality gap of $O(\varepsilon^{1-o(1)})$ and $O(\varepsilon^{1/4})$ respectively, in terms of $\varepsilon$ (fraction of corrupted data). Calls to \texttt{R-LSVI} are zero-order i.e. we only obtain a robust policy and an estimate of the value function. Under bounded generalized coverage ratio, we design a new robust offline RL method (algorithm~\eqref{alg:offline-primal-dual}) that also returns an estimate of the sub-gradient (first order access). Using algorithm~\eqref{alg:offline-primal-dual}, we can improve the dependence on $\varepsilon$ to $O(\sqrt{\varepsilon})$ and also significantly reduce the number of oracle calls.}
\end{table*}

\raggedbottom

In this paper, we initiate the study of \emph{corruption-robust offline reinforcement learning from human feedback}. Although there are several works on corruption robust offline reinforcement learning~\cite{zhang2022corruption, YYGZ23}, and provable preference based reinforcement learning~\cite{zhan2023provable, ZJJ23}, ours is the first work to combine these two threads and provide provable corruption robust offline RLHF methods. We design corruption robust offline RLHF methods through reduction to corruption robust offline RL methods. In particular, we modify the existing RLHF framework through three steps -- (1) Robustly learn a reward model by solving a robust logistic regression problem, (2) Construct a confidence set around the learned model, and (3) learn a pessimistic optimal policy over the confidence set through reduction to offline RL. We instantiate this general framework for datasets with various types of coverage assumptions, and as is often the case in offline RL, different coverage assumptions require different algorithms. 

In particular, we consider the standard \emph{Huber contamination} model where $\varepsilon$-fraction of the data (human feedback, features of the trajectories or both) are corrupted. Moreover, we consider a linear Markov decision process~\cite{jin2020provably} with horizon length $H$, and feature dimension $d$. Then, we prove the following set of results.
\begin{enumerate}[itemsep=0.05cm]
    \item When the offline data has \emph{uniform coverage},  we show that it is possible to learn a policy with sub-optimality gap at most $O\big ( H^3 \sqrt{d} \varepsilon^{1-o(1)}\big )$.\footnote{As $\varepsilon \rightarrow 0$, $\varepsilon^{1-o(1)}$ approaches $\varepsilon$. We actually show a dependence of $\varepsilon \cdot \exp(\sqrt{\log(1/\varepsilon)})$ which is $\varepsilon^{1-o(1)}$.}
    \item When the offline data satisfies the condition of \emph{low relative condition number}, a condition substantially weaker than the uniform coverage, we bound the sub-optimality gap by  $\tilde{O}\left( H^2 d \sqrt{\varepsilon} + H^{5/4} d^{3/4}\varepsilon^{1/4}\right)$. For $\varepsilon$ small (i.e. $< 1/d$) it can be checked that the upper bound is $O(H^2 d^{3/4} \varepsilon^{1/4})$. In order to achieve this bound, we reduce our problem to corruption robust offline RL, by using an existing corruption-robust method~\parencite{zhang2022corruption} as a biased,  zero-order oracle, and using the technique of \emph{Gaussian approximation}~\cite{NS17} to construct an approximate sub-gradient. We also develop a method of convex optimization with biased zero-order oracle that might be of independent interest.
    \item Finally, we show that we can improve the sub-optimality gap to $\tilde{O}(H^2 d^{3/2} \sqrt{\varepsilon})$ if the offline data satisfies the assumption of \emph{bounded generalized coverage ratio}, an assumption recently considered by \citet{GNOP23} (see also \cite{JYW21} for a similar coverage ratio). In this case, we construct a new corruption robust offline RL that is \emph{first-order} i.e. not only returns an approximately optimal policy but also an approximate sub-gradient of the optimal value function.
\end{enumerate}

\subsection{Related Work}

\textbf{Preference-based RL}: Our work is related to preference-based reinforcement learning (PbRL)~\parencite{wirth2017survey, lee2021b}. Although the field of PbRL is not new, there have been significant recent interests in designing provably optimal RL methods from preferences~\parencite{zhan2023provable,ZJJ23, wang2023rlhf}. In particular, \citet{ZJJ23} proposed a pessimistic maximum likelihood estimation for provable PbRL under clean data. Our algorithm, in particular the reward confidence set construction, is related to the method proposed by \citet{zhan2023provable}. However, unlike \cite{zhan2023provable} we don't build a confidence set around the probability transpition function, but rather use reduction to offline RL. Finally, there are several works on PbRL in online setting~\parencite{pacchiano2021dueling, chatterji2021theory, chen2022human} which are complementary to the  offline setting. 

\textbf{Corruption robust RL}: Our work is closely aligned with the research on corruption robust RL, where the challenge lies in designing agents that can effectively learn in the presence of adversarial corruption on both rewards and transitions~\cite{rakhsha2020policy}. 
\citet{zhang2022corruption} has considered linear MDP, and have designed corruption robust offline RL by robustifying the least squares value iteration method. On the other hand, \cite{YYGZ23} has considered corruption robustness in general MDPs by adopting uncertainty weighting to nonlinear function approximation~\cite{ye2023corruption}. In the online RL setting, \citet{lykouris2021corruption,chen2021improved}  proposed robust RL methods capable of accommodating up to $\epsilon \leq O(1/\sqrt{T})$ fraction of corruptions. \citet{zhang2021robust} developed an online policy gradient method that is resilient against a constant fraction of adaptive corruption.
Furthermore, there are other approaches for robustness in offline RL, including model selection~\cite{wei2022model}, hybrid RL~\cite{panaganti2022robust}, and others~\cite{wu2022copa, yang2023towards}. 


\section{Preliminaries}\label{sec:preliminaries}

\textbf{Markov Decision Process}: Let $\calM = \left( \calS,\calA,P^\star,r^\star,H,\rho\right)$ be an episodic Markov Decision Process (MDP) where $\calS$ denotes the state space and $\calA$ denotes the action space. The initial state is sampled from the distribution $\rho$. $P^\star=(P^\star_1,\ldots,P^\star_{H})$ denote the transition kernels where  for each $h\in[H]$, $P^\star_h(\cdot |s,a)\in \Delta(\calS)$ denotes the distribution over states given that the system is in state $s$ at step $h$ and action $a$ is taken. Let $r^\star:\calS \times \calA$ be the reward function where $r^\star_h(s,a)$ is the reward obtained from taking action $a$ from state $s$ at time-step $h$. We can also  extend the reward function to reward over trajectories by taking the sum of the rewards over the $H$ steps. 
Specifically, given a trajectory $\tau = (s_1,a_1, s_2,\ldots,s_{H+1})$, we define $r^\star(\tau) = \sum_{h=1}^{H}r^\star_h(s_h,a_h)$. 

\textbf{Policy}: Policies denote mappings from histories of traversed state-action pairs to distributions over actions. Formally, a non-stationary history-dependent policy $\pi=(\pi_1,\ldots,\pi_H)$ is a sequence of mappings where, for each $h\in[H]$, $\pi_h : \calH_h \rightarrow \Delta(\calA)$, with $\calH_h = \calS\times (\calS\times\calA)^{H-1}\times \calS$ denoting the history space up to time-step $h$. The space of such policies is denoted by $\Pi_{\textrm{his}}$. We further denote by $q^\pi_h(s,a)=\mathbb{P}(s_h=s,a_h=a|\pi,P^\star)$ the state-action occupancy measure for every time-step $h\in[H]$.
The expected performance of a given policy $\pi$ with respect to the true transitions $P^\star$ and true reward $r^\star$ is denoted by 
$$\textstyle
    V^\pi(P^\star,r^\star) = \expp \Big[ \sum_{h=1}^H r^\star(s_h,a_h) \Big| s_h\sim P^\star_h,a_h\sim\pi_h \ \forall h\Big]~.
$$
\subsection{Offline RLHF}

We have an offline dataset $\calD = \left\{ (\tau^{n,0},\tau^{n,1},o^n\right\}^N_{n=1}$ of $N$ pairs of trajectories, where each pair $(\tau^{n,0},\tau^{n,1})$ is associated with feedback $o^n\in \{ +1,-1\}$ representing the human preference, coming from a latent model assumed to satisfy the following assumption. 

\begin{assumption}[Preference-based model]\label{asm:preference_model}
    Given a pair of trajectories $(\tau^0,\tau^1)$, and a  preference $o\in \{ +1,-1\}$, the probability that $\tau^1$ is preferred over $\tau^0$ satisfies
    \begin{align*}
        \mathbb{P}\left( o=1| \tau^0,\tau^1\right) = \sigma\left( r^\star(\tau^1) - r^\star(\tau^0) \right)~,
    \end{align*}
    where $\sigma:\mathbb{R}\rightarrow [0,1]$ is a monotonically increasing link function.
\end{assumption}
In this paper, we will utilize the sigmoid link function $\sigma (x) = 1/(1+\exp(-x))$, commonly used in the literature on RLHF~\cite{christiano2017deep}. For our setting, the rewards are bounded and the range of the function is bounded away from $0$ and $1$. This also implies that there exists a constant $\kappa$ such that $\sup_{p \in [0,1]} \abs{\frac{d\sigma^{-1}(p)}{dp}} \le \kappa$.
The performance of a given policy $\pi$ is measured by the notion of suboptimality gap with respect to a target policy $\pi^\star$. Formally, we want to minimize 
\begin{align*}
    \textrm{SubOpt}(\pi, \pi^\star) = V^{\pi^\star}(r^\star,P^\star) - V^\pi(r^\star,P^\star)~.
\end{align*}
We will assume that the pairs of trajectories $(\tau^0,\tau^1)$ in our offline dataset $\calD$ are drawn from pairs of behaviour policies $(\mu_0,\mu_1)$, and we will denote it as $\tau^0\sim \mu_0, \tau^1 \sim \mu_1$. Additionally, we will write $\Sigma^\diff_{\mu_0, \mu_1}$ to denote the difference feature covariance matrix, which is defined as 
\begin{equation}\label{eq:diff-cov-matrix}
    \Sigma^\diff_{\mu_0, \mu_1} = \mathop{\E}_{{\tau^0 \sim \mu_0,\tau^1 \sim \mu_1}}\left[\left(\phi(\tau^0) - \phi(\tau^1)\right) \left( \phi(\tau^0) - \phi(\tau^1)\right)^\top\right].
\end{equation}
Similarly, we will write $\Sigma^\avg_{\mu_0, \mu_1}$ to denote the average feature covariance matrix, which is defined as
\begin{equation}\label{eq:avg-cov-matrix}
    \Sigma^\avg_{\mu_0, \mu_1} = \mathop{\E}_{{\tau^0 \sim \mu_0,\tau^1 \sim \mu_1}}\left[\left(\phi(\tau^0) + \phi(\tau^1)\right) \left( \phi(\tau^0) + \phi(\tau^1)\right)^\top\right].
\end{equation}

\subsection{Contamination Model}

In this paper, we study the problem of corruption robustness in offline RLHF. We assume that the collected data contains an $\epsilon$-fraction of contaminated samples, i.e. an attacker, who is given access to the data beforehand, is allowed to arbitrarily modify up to an $\epsilon$-fraction of the data samples (both the the trajectory features, and the human feedback). Formally,  \textit{Huber contamination model of human preferences} is defined as follows.

\begin{assumption}[$\varepsilon$-corruption in Offline RLHF]\label{asm:corruption_model}
    Let $\varepsilon\in[0,1]$ denote the contamination parameter and $\widetilde{\calD}=\{ (\widetilde{\tau^{n,0}},\widetilde{\tau^{n,1}},\widetilde{o}^n\}^N_{n=1}$ be a dataset of $N$ pairs of trajectories and human preferences. An attacker inspects $\widetilde{\calD}$ and arbitrarily modifies up to $\epsilon N$ tuples from $\widetilde{\calD}$. We denote the corrupted dataset by $\calD=\{ (\tau^{n,0},\tau^{n,1},o^n)\}^N_{n=1}$. In other words, there are at most $\epsilon N$ indices $n$, for which we have $\widetilde{o}^n \neq o^n$, or $\widetilde{\tau^{n,1}} \neq \tau^{n,1}$, or $\widetilde{\tau^{n,0}} \neq \tau^{n,0}$.
\end{assumption}

\subsection{Parametric Markov Decision Processes}

It is generally impossible to design provable offline RL algorithms without making any parametric assumptions on the underlying MDP. Therefore, throughout this paper, we will assume that the MDP is linear i.e. the reward and the transition are linear functions of by $d$-dimensional features. 

\begin{definition}[Linear MDP~\cite{jin2020provably}]
    We assume access to known feature map $\phi: \calS \times \calA \rightarrow \R^d$, and that there exist $\{\theta_h\}_{h \in [H]}$  and signed measures $\bm{\mu}_h = (\mu^1_h,\ldots,\mu^d_h)$ over the state space so that 
    $$
    r_h(s,a) = \phi(s,a)^\top \theta_h \textrm{ and } P_h(s' \mid s,a) = \phi(s,a)^\top \bm{\mu}_h(s').
    $$
    We also assume $\norm{\phi(s,a)}_2 \le 1$ for any $s,a$, $\max \set{\norm{\theta_h}_2, \norm{\bm{\mu}_h(\calS)}_2} \le \sqrt{d}$ for any $h \in [H]$.
\end{definition}
Given a trajectory $\tau = (s_1,a_1,s_2,\ldots,s_{H+1})$ we will  write $\phi(\tau) = [\phi(s_1,a_1); \ldots; \phi(s_H,a_H)]$ to denote the feature of the trajectory $\tau$. Note that $\phi(\tau) \in \R^{Hd}$ and $\norm{\phi(\tau)}_2  \le \sqrt{H}$.


    $$
    $$

\section{Robust RLHF with Uniform Coverage}\label{sec:rlhf-uniform-coverage}

We now provide our first algorithm for corruption robust reinforcement learning from human feedback (RLHF). Standard RLHF framework estimates the reward parameter by solving a maximum likelihood estimation problem. We essentially robustify this step and replace it with a robust version of logistic regression. Let $\mathbb{P}_\theta(o \mid \phi(\tau^1) - \phi(\tau^0))$ be the probability of observing feedback $o$ from a comparison of trajectory $\tau^1$ and $\tau^0$. Then \cref{alg:robust_freehand_uniform_coverage} solves a trimmed maximum likelihood estimation problem, 
\begin{equation}
    \label{defn:trimmed-mle}
    \widehat{\theta} \leftarrow \textrm{argmax}_{\theta} \max_{S \subseteq \calD: \abs{S}=(1-\varepsilon)N} \sum_{n\in S}\log \mathbb{P}_\theta(o^n \mid \mathrm{x}^n)
\end{equation}
where $\mathrm{x}^n = \phi(\tau^{n,1}) - \phi(\tau^{n,0})$. Therefore, the estimate $\widehat{\theta}$ is chosen to maximize the likelihood over best subset containing $(1-\varepsilon)$-fraction of the points. With an estimate $\widehat{\theta}$ of the reward parameter, \cref{alg:robust_freehand_uniform_coverage} uses a robust offline RL method (input \texttt{RobRL}) to compute an approximately optimal policy $\tilde{\pi}$. For this step, \cref{alg:robust_freehand_uniform_coverage} uses the features from the dataset $\widehat{\calD}$ but the reward is defined according to the estimated model $\widehat{\theta}$ i.e. $r_h(s,a) = \widehat{\theta}_h^\top \phi(s,a)$. 
Finally, note that the dataset $\calD$ is split uniformly at random into two datasets of equal sizes at the beginning, and one partition is used for reward estimation, and the other for  policy optimization.

\begin{algorithm}[!t]
\caption{Robust RLHF (with Uniform Coverage)}\label{alg:robust_freehand_uniform_coverage}
\KwIn{
(a) Dataset $\calD$, (b) corruption parameter $\epsilon$,  (c) corruption robust offline RL algorithm \texttt{RobRL}.}
Partition dataset $\calD$ uniformly at random into three datasets $\calD_1$, $\calD_2$, and $\calD_3$ of equal sizes.\\
Use $\calD_1$ to build a robust estimate $\widehat{\Sigma}$ of $\E[(\phi(\tau^1) - \phi(\tau^0)) (\phi(\tau^1) - \phi(\tau^0))^\top]$ (e.g. \cite{diakonikolas2025sos}).\\
\tcc{Whiten covariates using $\widehat{\Sigma}^{-1/2}$}
Let $\mathrm{x}^n = \phi(\tau^{n,1}) - \phi(\tau^{n,0})$, and $S = \set{\widehat{\Sigma}^{-1/2} \mathrm{x}^n : \mathrm{x}^n \in \calD_2}$.\\
Let $\widetilde{\calD}_2$ be the output of running filtering ~\cite{diakonikolas2020outlier} algorithm on set $S$.\\
\tcc{Estimate reward parameter of linear MDP $\htheta = (\widehat{\theta}_1,\ldots,\widehat{\theta}_H$).}
Using \cref{alg:alternating_optimization}, find an approximate stationary point of the following trimmed maximum likelihood estimation problem.
\begin{align}
    \widehat{\theta} \leftarrow \argmax_{\theta} \max_{\stackrel{\widehat{S} \subseteq \widetilde{\calD}_2}{ \abs{\widehat{S}} = (1-\varepsilon) \abs{\widetilde{\calD}_2}}} \sum_{\mathrm{z}^n \in \widetilde{\calD}_2} \log \mathbb{P}_{\theta}(o^n \mid \widehat{\Sigma}^{1/2} \mathrm{z}^n)
\end{align}

Let $\tilde{\pi}$ be the policy returned by  \texttt{RobRL} with reward function $r_h(s,a) = \phi(s,a)^\top \widehat{\theta}_h$ and dataset $\calD_3$.\\
\Return $\tilde{\pi}$.
\end{algorithm}

%
 

\textbf{Solving Trimmed MLE}: Before providing the performance guarantees of \cref{alg:robust_freehand_uniform_coverage},  we discuss how to solve the trimmed maximum likelihood estimation (MLE) problem~\eqref{defn:trimmed-mle}. In general, the optimization problem is hard to solve, but it is known that for the setting of generalized linear models, alternating optimization methods converge to a stationary point under certain assumptions on the link function~\cite{ADKS22}. Therefore, \cref{alg:robust_freehand_uniform_coverage} calls an alternating optimization method (\cref{alg:alternating_optimization}) to obtain an approximate stationary point of the trimmed MLE problem. However, alternating optimization succeeds only when the preferences are corrupted. Since the features in the trajectories can be corrupted, we first whiten covariates using a robust estimate of the covariance matrix $\Sigma$ and then run the alternating optimization method on the whitened features. We will write $\widetilde{D}_2$ to denote the set of whitened features. 

Let us now define the notion of an approximate stationary point. Given an estimate $\hat{\theta}$, let the set $\widehat{S}$ contain $(1-\varepsilon)N$ data-points with the largest log-likelihood under $\hat{\theta}$. Then we say  $\hat{\theta}$ is a $\gamma$-stationary point if the following condition is satisfied.
\begin{equation}\label{eq:apx-stationarity}  
    \frac{1}{N} \sum_{n \in \widehat{S}} \nabla_\theta \log \mathbb{P}_{\hat{\theta}}(o^n \mid \mathrm{x}_n)^\top \frac{(\theta^\star  - \hat{\theta})}{\lVert\theta^\star  - \hat{\theta}\rVert_2} \le \gamma 
\end{equation}

\begin{algorithm}[!h]
\caption{Alternating Optimization}
\label{alg:alternating_optimization}
\KwIn{ Whitened covariates  $\widetilde{\calD}_2$, corruption parameter $\epsilon$, slackness parameter $\eta$, transition model $P^\star$. }
 Let $\mathrm{x}^n = \phi(\tau^{n,1}) - \phi(\tau^{n,0})$.\\
 Set $\widehat{\theta}_1 = 0$.\\
\For{$t=1,2,\ldots$}{
$\widehat{S}_t \leftarrow \argmax_{\stackrel{S \subseteq [N]}{ \abs{S} = (1-\varepsilon) N} } \sum_{\mathrm{z}^n \in {S}} \log \mathbb{P}_{\widehat{\theta}_t} (o^n \mid \widehat{\Sigma}^{1/2}\mathrm{z}^n)$.\\
$\widehat{\theta}_{t+1} \leftarrow \argmax_{\theta: \norm{\theta}_2 \le \sqrt{Hd}}  \sum_{\mathrm{z}^n \in \widehat{S}_t} \log \mathbb{P}_{{\theta}} (o^n \mid \widehat{\Sigma}^{1/2} \mathrm{z}^n)$.\\
\tcc{Check Progress}
\If{$\sum_{\mathrm{z}^n \in \widehat{S}_t} \log \mathbb{P}_{\widehat{\theta}_{t+1}} (o^n | \mx^n) \le \sum_{\mathrm{z}^n \in \widehat{S}_t} \log \mathbb{P}_{\widehat{\theta}_t} (o^n | \widehat{\Sigma}^{1/2} \mathrm{z}^n) + \eta$}{
Return $\widehat{\theta}_t$.
}
}
\end{algorithm}
We propose an alternating optimization based method (\Cref{alg:alternating_optimization}) to obtain such an approximate stationary point of the trimmed MLE problem. At each iteration $t$, the alternating optimization updates $\widehat{S}$ and $\widehat{\theta}$ as follows.
\begin{enumerate}
    \item $\widehat{S}_t \leftarrow \argmax_{\stackrel{S \subseteq \widetilde{\calD}_2}{ \abs{S} = (1-\varepsilon) \abs{\widetilde{\calD}_2}} } \sum_{\mathrm{z}^n \in S} \log \mathbb{P}_{\widehat{\theta}_t} (o^n \mid \widehat{\Sigma}^{1/2}\mathrm{z}^n)$.
    \item $\widehat{\theta}_{t+1} \leftarrow \argmax_{\theta: \norm{\theta}_2 \le \sqrt{Hd}}  \sum_{\mathrm{z}^n \in \widehat{S}_t} \log \mathbb{P}_{{\theta}} (o^n \mid \widehat{\Sigma}^{1/2} \mathrm{z}^n)$,
\end{enumerate}

where $\mathrm{z}^n$ denotes the whitened feature vector for the $n$-th sample. The method stops when the improvement in the likelihood is less than a threshold $\eta$. The next lemma shows (proof in Appendix \ref{sec:first-sec-app}) that algorithm \ref{alg:alternating_optimization} obtains a $\gamma$-stationary point of the trimmed maximum likelihood estimation problem~\eqref{defn:trimmed-mle} for $\gamma = \max \set{2L \varepsilon, \frac{\varepsilon^2}{\norm{\theta^\star - \widehat{\theta}}_2} }$. 


\begin{lemma}\label{lem:apx-stationarity}
Suppose $\norm{\mathrm{x}_n}_2 \le L$ for all $n$ and we set $\eta = \varepsilon^2$. Then algorithm~\ref{alg:alternating_optimization} computes a $\max \set{O(L \varepsilon), \frac{\varepsilon^2}{\norm{\theta^\star - \widehat{\theta}}_2} }$-stationary point of the trimmed maximum likelihood estimation problem~\eqref{defn:trimmed-mle}.
\end{lemma}

Note that algorithm \ref{alg:alternating_optimization} stops when the improvement in the objective falls below $\varepsilon^2$. Furthermore, the value of the log-likelihood can be bounded by $O(L)$ which implies that \cref{alg:alternating_optimization} runs for at most $O(L/\varepsilon^2)$ iterations. We next show that an approximate sationary solution of trimmed MLE is enough to provide a bound on the sub-optimality of the policy $\tilde{\pi}$ returned by \cref{alg:robust_freehand_uniform_coverage}. We will make the following assumption regarding the pair of policies $\mu_0, \mu_1$ that generate the offline data.



\begin{assumption}[Uniform Coverage]\label{asn:uniform_coverage}
Suppose $\norm{\phi(\tau)}_2 \le L$ for any trajectory $\tau$.    Then there exists a constant $\xi > 0$ such that $$\Sigma^\diff_{\mu_0, \mu_1} \succcurlyeq \xi L \cdot \Identity_d.$$
\end{assumption}

\begin{theorem}\label{thm:reduction_to_offline_rl_uniform}
    Suppose 
    \texttt{RobRL} returns a $f(\varepsilon)$-robust estimate of the optimal value function, and \cref{asn:uniform_coverage} holds with $\xi \ge 5 \varepsilon$ and $N \ge \Omega\left(\frac{H^{3/2}}{\varepsilon^2}\left( d + \log(1/\delta) \right) \right)$. Then with probability at least $1-\delta$, the policy $\tilde{\pi}$ returned by \cref{alg:robust_freehand_uniform_coverage} satisfies,
    $$
    V^\star(\theta^\star) - V^{\tilde{\pi}}(\theta^\star) \le  f(\varepsilon) + 2  \sqrt{Hd} C_1 \frac{\varepsilon}{\xi} \cdot \exp(2L +{ \sqrt{\log\left( {1}/{2\delta \varepsilon}\right)} })\footnote{For most of our results, the sub-optimality can be shown to be of the form $g(\varepsilon) + c/\sqrt{N}$. We assume that the number of samples $N$ is large so that the term involving $\varepsilon$ is the dominating term. This is to simplify the results and follows existing literature~\cite{DK19}.}
    $$
\end{theorem}

The proof of the theorem is provided in the Appendix, but the main idea is to  show that the estimate $\widehat{\theta}$ obtained from solving trimmed MLE satisfies $\norm{\htheta - \theta^\star}_2 = O(\varepsilon^{1-o(1)})$.
We now instantiate \Cref{thm:reduction_to_offline_rl_uniform} for the setting of linear MDP. For corruption robust offline RL, we use algorithm \texttt{R-LSVI} from \cite{zhang2022corruption} as an oracle. It requires a coverage assumption similar to assumption~\eqref{asn:uniform_coverage}. 

\begin{assumption}[Uniform Coverage: V2]\label{asn:uniform_coverage-v2}
Suppose, $\norm{\phi(\tau)}_2 \le L$ for any trajectory $\tau$.    Then there exists a constant $\xi > 0$ such that $$\Sigma^\avg_{\mu_0, \mu_1} \succcurlyeq \xi L \cdot \Identity_d.$$
\end{assumption}

Under assumption~\eqref{asn:uniform_coverage-v2} \texttt{R-LSVI} returns a policy $\tilde{\pi}$ so that 
$
V^{\tilde{\pi}}(s_0) \ge V^\star(s_0) - f(\varepsilon)$ where $f(\varepsilon) = \tilde{O}\left( \frac{H^{5/2}}{\xi \sqrt{N}}\textrm{poly}(d) + \frac{H^3}{\xi}\varepsilon \right).
$
Note that if $N \ge \tilde{\Omega}\left( \frac{\textrm{poly}(d)}{\varepsilon^2}\right)$, we have $f(\varepsilon) = \frac{H^3}{\xi}\varepsilon$. Substituting this value of $f(\varepsilon)$ in the bound of theorem~\eqref{thm:reduction_to_offline_rl_uniform} gives us the following bound on the suboptimality gap.

\begin{proposition}
    Suppose assumptions \ref{asn:uniform_coverage} and \ref{asn:uniform_coverage-v2} hold. Then for the setting of linear MDP and $N \ge \tilde{\Omega}\left( \frac{\textrm{poly}(d)  \log\left( 1/\delta \right)}{\varepsilon^2}\right)$, \Cref{alg:robust_freehand_uniform_coverage} returns a policy $\tilde{\pi}$ so that with probability at least $1-\delta$,
    $$\textstyle
    V^\star(\theta^\star) - V^{\tilde{\pi}}(\theta^\star) \le O\left(\frac{H^3 + \sqrt{Hd}}{\xi} \varepsilon \cdot \exp( 2L + \sqrt{\log\left( {1}/{2\delta\varepsilon}\right)} ) \right).
    $$
  
\end{proposition}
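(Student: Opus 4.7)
The proof is essentially a direct instantiation of Theorem~\eqref{thm:reduction_to_offline_rl_uniform}: that theorem already decouples the sub-optimality into (i) the error $f(\varepsilon)$ of the corruption-robust offline RL oracle called on $\calD_2$, and (ii) the reward-transfer error $2\sqrt{Hd}\,C_1 (\varepsilon/\xi)\,\exp(\sqrt{\log(1/(2\delta\varepsilon))})$ coming from Lemma~\eqref{lem:diff-norm-uniform-coverage} applied to $\calD_1$. All that remains is to plug in a concrete bound on $f(\varepsilon)$ from the \texttt{R-LSVI} guarantee and to verify that the stated sample complexity is sufficient for both pieces after the $50/50$ data split.

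For the reward-learning step, I would invoke Lemma~\eqref{lem:diff-norm-uniform-coverage} on $\calD_1$ (which has $N/2$ samples); its precondition $N/2 \ge \Omega(H^{3/2}(d+\log(1/\delta))/\varepsilon^2)$ is absorbed into the assumed $\tilde{\Omega}(\textrm{poly}(d)\log(1/\delta)/\varepsilon^2)$ bound by choosing the hidden polynomial in $d$ large enough. This yields, with probability at least $1-\delta/2$, an estimate $\widehat{\theta}$ with $\lVert \widehat{\theta}-\theta^\star\rVert_2 \le C_1 (\varepsilon/\xi)\,\exp(\sqrt{\log(1/(\delta\varepsilon))})$. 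For the policy-optimization step, I would then run \texttt{R-LSVI} on $\calD_2$ with the plug-in reward $r_h(s,a)=\widehat{\theta}_h^\top \phi(s,a)$. Under Assumption~\eqref{asn:uniform_coverage-v2}, the cited guarantee of \cite{zhang2022corruption} gives $f(\varepsilon) = \tilde{O}(H^{5/2}\textrm{poly}(d)/(\xi\sqrt{N}) + H^3 \varepsilon/\xi)$, and the same polynomial-in-$d$ sample bound ensures the $1/\sqrt{N}$ statistical term is dominated by $H^3\varepsilon/\xi$, so that $f(\varepsilon) = \tilde{O}(H^3\varepsilon/\xi)$ with probability at least $1-\delta/2$.

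A union bound over these two high-probability events gives overall probability $1-\delta$. Substituting both bounds into Theorem~\eqref{thm:reduction_to_offline_rl_uniform} and using the trivial inequality $\exp(\sqrt{\log(1/(2\delta\varepsilon))})\ge 1$ to absorb the pure $H^3\varepsilon/\xi$ term gives the combined bound $O((H^3 + \sqrt{Hd})(\varepsilon/\xi)\exp(\sqrt{\log(1/(2\delta\varepsilon))}))$, which is exactly the stated conclusion. The only subtlety, and the main bookkeeping obstacle, is choosing the hidden polynomial in $\tilde{\Omega}(\textrm{poly}(d)\log(1/\delta)/\varepsilon^2)$ large enough to simultaneously (a) satisfy the precondition of Lemma~\eqref{lem:diff-norm-uniform-coverage} on the half-size dataset $\calD_1$ and (b) drive the $\tilde{O}(H^{5/2}\textrm{poly}(d)/(\xi\sqrt{N}))$ statistical error of \texttt{R-LSVI} on $\calD_2$ below its corruption-induced error. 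No new technical ingredients are required beyond this accounting.
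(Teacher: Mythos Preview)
Your proposal is correct and follows essentially the same approach as the paper: the paper's argument (given in the text immediately preceding the proposition) is precisely to substitute the \texttt{R-LSVI} guarantee $f(\varepsilon)=\tilde{O}(H^{5/2}\textrm{poly}(d)/(\xi\sqrt N)+H^3\varepsilon/\xi)$ into Theorem~\eqref{thm:reduction_to_offline_rl_uniform}, then use $N\ge\tilde{\Omega}(\textrm{poly}(d)/\varepsilon^2)$ to reduce this to $f(\varepsilon)=H^3\varepsilon/\xi$. Your additional bookkeeping about the $50/50$ split, the union bound, and absorbing the pure $H^3\varepsilon/\xi$ term via $\exp(\sqrt{\log(1/(2\delta\varepsilon))})\ge 1$ is sound and simply makes explicit what the paper leaves implicit.
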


\section{Low Relative Condition Number}\label{sec:low-relative-condition-number}
Although the assumption of uniform coverage allows us to design $O(\varepsilon^{1-o(1)})$-optimal policy, it is a strong assumption since the features generated by the offline policy might not cover the entire $d$-dimensional space. In this section, we relax this assumption to a new assumption named \emph{Low Relative Condition number}, which is significantly weaker. 
\begin{assumption}[Relative Condition Number]\label{asn:rel_cond_number}
    Let $\Sigma^\diff_{\mu_0, \mu_1}$ (resp. $\Sigma^\diff_{\pi_0,\pi_1}$) be the difference feature covariance matrix under pair of policies $\mu_0$ and $\mu_1$ (resp. $\pi_0$ and $\pi_1$), as defined in \cref{eq:diff-cov-matrix}.
   Then there exists a constant $\alpha > 0$ such that the following holds.
    $$
    \sup_{w} \frac{w^\top \Sigma^\diff_{\pi_0, \pi_1} w}{w^\top \Sigma^\diff_{\mu_0, \mu_1} w} = \alpha < \infty
    $$
\end{assumption}
Although the above assumption is stated for a pair of policies $\pi_0, \pi_1$, given a target policy $\pi^\star$ one can choose $\pi_0 = \pi^\star$ and $\pi_1 = \mu_1$, and the assumption needs to hold only for this pair of policies.

\Cref{alg:robust_freehand_unknown_P} provides our new corruption robust RLHF method under the assumption of low relative condition number. The algorithm begins similarly to \cref{alg:robust_freehand_uniform_coverage} by solving the trimmed maximum likelihood estimation to obtain a robust estimate $\htheta$ of the reward parameter $\theta^\star$. However, in the absence of uniform coverage, $\htheta$ might not be close to $\theta^\star$ in terms of $L_2$ distance. So the following lemma provides a bound in terms of the likelihood at $\htheta$ and $\theta^\star$.

\begin{lemma}\label{lem:diff-log-likelihood}
Let $\mathbb{P}_\theta(y \mid x) =  {1}/{(1 + e^{-y \cdot \theta^\top x})}$ and for any $n \in [N]$ define $\mathrm{x}_n = \phi(\tau^{1,n}) - \phi(\tau^{0,n})$. Then   with probability at least $1-\delta$, we have
    $$
    \frac{1}{N} \sum_{n=1}^N \log \left( \frac{\mathbb{P}_{ \htheta }(o^n \mid \mx_n)}{\mathbb{P}_{\theta^\star}(o^n \mid \mx_n)}\right) \le 6\varepsilon H\sqrt{d} + c \cdot \frac{d}{N} \log\left( \frac{HN}{\delta}\right)
    $$
\end{lemma}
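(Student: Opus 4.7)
The plan is to split the sum
\[
\frac{1}{N}\sum_{n=1}^N [L_n(\htheta)-L_n(\theta^\star)], \qquad L_n(\theta) := \log \mathbb{P}_\theta(o^n\mid \mx_n) = -\log\!\bigl(1+\exp(-o^n\theta^\top \mx_n)\bigr),
\]
over the (unknown) clean indices $T$ with $|T|\ge (1-\varepsilon)N$ and the corrupted indices $C=[N]\setminus T$ with $|C|\le \varepsilon N$, and to control each piece by a different mechanism. Notably, the argument does \emph{not} need to invoke the trimmed-MLE optimality of $\htheta$; it only uses that $\htheta$ lies in the parameter ball $\{\theta:\|\theta\|_2\le\sqrt{Hd}\}$ (the same ball that contains $\theta^\star$).

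The corrupted piece is handled by a crude Lipschitz/boundedness estimate. Since $\|\mx_n\|_2\le 2\sqrt{H}$ (as the trajectory features satisfy $\|\phi(\tau)\|_2\le\sqrt{H}$) and $\|\theta\|_2\le\sqrt{Hd}$, one has $|L_n(\theta)|\le |\theta^\top\mx_n|+\log 2 \le 2H\sqrt{d}+\log 2$, hence $|L_n(\htheta)-L_n(\theta^\star)|\le 6H\sqrt{d}$ uniformly in $n$ (for $H,d\ge 1$). Summing over $|C|\le\varepsilon N$ and dividing by $N$ yields the $6\varepsilon H\sqrt{d}$ term.

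For the clean piece, I would exploit the fact that on $T$ the labels genuinely come from the well-specified logistic model $\mathbb{P}_{\theta^\star}$. For each fixed $\theta$, the likelihood-ratio process
\[
M_n(\theta) := \prod_{k\in T,\,k\le n} \frac{\mathbb{P}_\theta(o^k\mid\mx_k)}{\mathbb{P}_{\theta^\star}(o^k\mid\mx_k)}
\]
is a non-negative martingale of mean $1$ under $\theta^\star$, so Markov's inequality gives, for any fixed $\theta$, $\Pr\bigl[\sum_{n\in T}(L_n(\theta)-L_n(\theta^\star))\ge \log(1/\delta)\bigr]\le\delta$. A union bound over a $1/N$-cover of the parameter ball $\{\theta:\|\theta\|_2\le\sqrt{Hd}\}$, combined with the Lipschitzness of $L_n$ in $\theta$ to absorb the discretization error, yields with probability at least $1-\delta$ the uniform bound
\[
\sup_{\theta:\,\|\theta\|_2\le\sqrt{Hd}}\;\sum_{n\in T}[L_n(\theta)-L_n(\theta^\star)] \;\le\; c\cdot d\log(HN/\delta).
\]
Specializing to $\theta=\htheta$, dividing by $N$, and combining with the corrupted-piece estimate gives the claim.

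The main obstacle is obtaining the \emph{fast} $\mathcal{O}(d/N)$ rate rather than the naive $\mathcal{O}(\sqrt{d/N})$ one would read off from a Hoeffding/Rademacher bound. The likelihood-ratio martingale trick is what delivers this cleanly: it exploits the well-specified structure of the clean data and bypasses any Bernstein-type variance analysis. A secondary subtlety is that $\htheta$ is data-dependent (it was computed from the full corrupted set $\calD$, including the clean half), so the clean-piece bound must hold \emph{uniformly} over $\theta$ in the parameter ball — and this is precisely what the covering-plus-union-bound step produces, at the cost of the $d\log(HN/\delta)$ factor.
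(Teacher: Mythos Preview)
Your argument is correct and is in fact a streamlined version of the paper's own proof. The paper first inserts the empirical MLE $\theta^\star_N$ (the maximizer of the full corrupted log-likelihood) as an intermediate point, bounds the $\theta^\star_N\to\theta^\star$ piece by invoking Lemma~1 of \cite{zhan2023provable} on the clean indices together with the same crude ratio bound on the corrupted ones, and then separately bounds the $\htheta\to\theta^\star_N$ piece using concavity and the $\gamma$-stationarity of the trimmed MLE (Lemma~\ref{lem:apx-stationarity}). You cut out the detour through $\theta^\star_N$ entirely: since the likelihood-ratio martingale plus covering gives a bound that is \emph{uniform} over the whole parameter ball, it applies to $\htheta$ directly, and no optimality property of $\htheta$ is needed. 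This is a genuine simplification, and it also makes transparent why the same lemma in the paper holds for both $\htheta$ and $\theta^\star_N$.

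One small point to tighten: you run the martingale over $k\in T$, but the clean set $T$ is chosen by the adversary \emph{after} seeing the data, so it is itself random. The clean fix is to run the martingale (and the covering/union bound) over all $N$ underlying uncorrupted samples $(\tilde o^n,\tilde{\mx}_n)$, which are adversary-independent, and then write
\[
\frac{1}{N}\sum_{n=1}^N\bigl[L_n(\theta)-L_n(\theta^\star)\bigr]
=\frac{1}{N}\sum_{n=1}^N\bigl[\tilde L_n(\theta)-\tilde L_n(\theta^\star)\bigr]
+\frac{1}{N}\sum_{n\in C}\Bigl[(L_n-\tilde L_n)(\theta)-(L_n-\tilde L_n)(\theta^\star)\Bigr],
\]
absorbing the second sum with the same $O(H\sqrt{d})$ boundedness you already use. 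This costs only a constant factor in the $\varepsilon H\sqrt{d}$ term and removes the dependence of the martingale filtration on the adversary's choice.
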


The above result is a generalization of Lemma 1 from \cite{zhan2023provable}, and allows us to build a confidence set around the estimate $\htheta$ even when a $\varepsilon$-fraction of the data has been corrupted (line 3 of algorithm~\eqref{alg:robust_freehand_unknown_P}). Now we leverage two important observations.

First, the above approach requires the set $\Theta(\calD_1)$ is a convex set. It can be easily checked this is true if the function $\log \mathbb{P}_\theta (\cdot)$ is concave. Moreover, for the case of sigmoid link function   $\nabla^2_\theta \log \sigma(\theta^\top x) \preccurlyeq 0$ i.e. $\Theta(\calD_1)$ is a convex set. 
Second, for the setting of linear MDP, the optimal value function $V^\star(\theta) = \max_\pi V^\pi(\theta)$ is a convex function in the reward parameter $\theta$. This follows from the  occupancy measure based representation of MDP. Indeed, $V^\star(\theta) = \max_{q^1,\ldots,q^H \in \calC} \sum_{h=1}^H q_h^\top \Phi \theta$, where $\calC$ is the set of valid occupancy measures, and $\Phi$ is the feature matrix. Since for any $\theta$, $V^\star(\theta)$ is a maximum over linear functions, $V^\star(\cdot)$ is convex.

\begin{algorithm}[!ht]
\caption{Robust RLHF with Condition Number}\label{alg:robust_freehand_unknown_P}
\KwIn{
(a) Corrupted dataset $\calD$, (b) corruption parameter $\epsilon$,  (c) corruption robust offline RL algorithm \texttt{RobRL}, (d) reference distribution $\mu_{\textrm{ref}}$.}
Partition dataset $\calD$ uniformly at random into $\calD_1$ and $\calD_2$ of equal size. \\
\tcc{Build an estimate $\htheta$ of the reward parameter, as in Algorithm~\eqref{alg:robust_freehand_uniform_coverage}.} 
Set $\zeta = 6\varepsilon H \sqrt{d} + 2 \frac{d}{N}\log\left( \frac{HN}{\delta}\right)$ and   
$ 
    \Theta ({\calD}_1) = \bigg\{ \theta : \norm{\theta}_2 \le \sqrt{Hd} \wedge  \frac{2}{N}\sum^{N/2}_{n=1} \log   \frac{\sigma \left( \theta^\top \mathrm{x}^n \right)}{\sigma \left( \widehat{\theta}^\top \mathrm{x}^n \right)} \ge - \zeta \bigg\}
$ 
\tcc{Run Projected Sub-gradient Descent with Biased Oracle} 
Initialize $\theta_0 \in \Theta(\calD_1)$.\\
\For{$t=0,\ldots,T-1$}{
\tcc{Sub-Gradient Construction}
Generate $u_1,\ldots,u_K$ uniformly at random from the standard normal distribution.\\
Let $g_t = \frac{1}{K}\sum_{k=1}^K \frac{\widehat{V}(\theta_t + \mu u_k)-\widehat{V}(\theta) - \mu \cdot \E_{\tau \sim \mu_{\textrm{ref}}}\left[ \phi(\tau)^\top u_k \right]}{\mu} u_k$ be the approximate sub-gradient, where $\widehat{V}(\theta)$ is the value estimate returned by \texttt{RobRL} with reward function $r_h(s,a) = \phi(s,a)^\top \theta_{t,h}$ and dataset $\calD_2$.\\
$\theta_{t+1} = \textrm{Proj}_{\Theta(\calD_1)}\left( \theta_t - \eta g_t\right)$
}
Set $\Bar{\theta} = \frac{1}{T}\sum_{k=1}^T \theta_k$ and let $\tilde{\pi}$ be the policy returned by running \texttt{RobRL} with reward function $r_h(s,a) = \phi(s,a)^\top \Bar{\theta}_h$.\\
\Return $\tilde{\pi}$.
\end{algorithm}

Therefore, we run a projected subgradient descent over the set $\Theta(\calD_1)$. At each iteration $t$, algorithm~\eqref{alg:robust_freehand_unknown_P} selects a reward parameter $\theta_t$. Although the corruption robust offline RL method \texttt{RobRL} can return an approximately optimal policy with reward parameter $\theta$, we need a subgradient i.e. $g_t \in \delta_\theta V^\star(\theta_t) = \set{v : V^\star(\theta') \ge V^\star(\theta_t) + v^\top(\theta' -\theta_t)\ \forall \theta'}$. So we treat \texttt{RobRL} as a biased, zero-order oracle and explicitly build an estimator of a subgradient (lines 8-9)~\cite{NS17, DJWW15, flaxman2004online}. In particular, we use the \emph{gaussian approximation} technique introduced by \cite{NS17}. Given a convex function $f: E \rightarrow \R^d$, let $f_\mu$ be its smoothed Gaussian approximation, defined as
$$
f_\mu(\theta) = \frac{1}{\kappa} \int_E f(\theta + \mu \cdot u) e^{-1/2 \norm{u}_2^2} du,
$$
where $\kappa = \int_E e^{-1/2 \norm{u}_2^2} du$. The Gaussian approximation method performs a gradient descent of the smoothed function $f_\mu$, with the gradient
$$
\nabla f_\mu(\theta) = \frac{1}{\kappa} \int_E \frac{f(\theta + \mu \cdot u) - f(\theta)}{\mu} e^{-1/2 \norm{u}_2^2} du.
$$
 \Cref{alg:robust_freehand_unknown_P} constructs an  estimator of $\nabla f_\mu(\theta) $ for $f(\theta) = V^\star(\theta) - \E_{\tau \sim \mu_\tref}\left[ \phi(\tau)^\top \theta\right]$\footnote{We subtract rewards according to a reference policy $\mu_\tref$ since we only have preference data over rewards.}. 
 The algorithm finally computes the average reward parameter $\bar{\theta} = 1/T\cdot \sum_{k=1}^T \theta_k$ and returns a robust policy $\tilde{\pi}$ with respect to the parameter $\bar{\theta}$. 
\Cref{alg:robust_freehand_unknown_P} provides our full implementation of the reduction to corruption robust RL. The next theorem provides a bound on the sub-optimality gap of \cref{alg:robust_freehand_unknown_P}, assuming access to a $f(\varepsilon)$-robust offline RL method.

\begin{theorem}\label{thm:relative-cond-number-bound}
    Suppose \cref{asn:rel_cond_number} holds,  $\sup_{p \in [0,1]} \abs{\frac{d\sigma^{-1}(p)}{dp}} \le \kappa$, and \texttt{RobRL} returns a $f(\varepsilon)$-robust estimate of the optimal value function. If $N \ge \tilde{\Omega}\left( \frac{H^{3/2}d^5   }{\varepsilon^3}\right)$, then for a target policy $\pi^\dagger$, the policy $\tilde{\pi}$ output by \cref{alg:robust_freehand_unknown_P} satisfies the following w.p. at least $1-\delta$.
\begin{align*}
V^{\pi^\dagger}(\theta^\star) &- V^{\tilde{\pi}}(\theta^\star) \le f(\varepsilon) + 8 \sqrt{f(\varepsilon)}(Hd)^{1/4} \\
&+ c \kappa \sqrt{\alpha} \left( \sqrt{\varepsilon H} d^{1/4}  + \sqrt{{d}/{N} \cdot \log \left( {HdN}/{\delta}\right)}\right) 
\end{align*}
\end{theorem}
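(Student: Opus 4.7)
Let $h(\pi,\theta):=V^\pi(\theta)-V^{\mu_{\tref}}(\theta)$ and $F(\theta):=V^\star(\theta)-\E_{\tau\sim\mu_{\tref}}[\phi(\tau)^\top\theta]=\max_\pi h(\pi,\theta)$. Since $F$ is a maximum over linear functions in $\theta$ (via the occupancy-measure representation noted in Section~\ref{sec:low-relative-condition-number}), $F$ is convex; and $\Theta(\calD_1)$ is convex because $\log\sigma$ is concave, so the projected iterates $\theta_t$ and their average $\bar\theta$ all lie in $\Theta(\calD_1)$. The proof proceeds through three pieces: feasibility of $\theta^\star$, a confidence-set width bound, convergence of the biased zero-order subgradient descent, and finally a pessimistic decomposition that passes through $\bar\theta$.

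\textbf{Feasibility and confidence-set width.} First I would invoke Lemma~\ref{lem:diff-log-likelihood} on the half-dataset $\calD_1$ to get $\theta^\star\in\Theta(\calD_1)$ with probability at least $1-\delta$, because the bound on the log-likelihood gap in that lemma is exactly $\zeta$. Next, using that $-\log\sigma$ has second derivative at least $1/\kappa^2$, the log-likelihood constraint inside $\Theta(\calD_1)$ implies, for every $\theta\in\Theta(\calD_1)$,
\[
\tfrac{1}{N}\sum_n\big[(\theta-\theta^\star)^\top \mathrm{x}^n\big]^2 \le O(\kappa^2\zeta).
\]
Concentrating the empirical covariance to $\Sigma^\diff_{\mu_0,\mu_1}$ via a matrix-Bernstein argument (which is where the sample-size condition $N\ge\tilde\Omega(H^{3/2}d^5/\varepsilon^3)$ is consumed) yields $\|\theta-\theta^\star\|_{\Sigma^\diff_{\mu_0,\mu_1}}^2\le O(\kappa^2\zeta)$. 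Applying Assumption~\ref{asn:rel_cond_number} with $\pi_0=\pi^\dagger$, $\pi_1=\mu_{\tref}$ raises this to $\|\theta-\theta^\star\|_{\Sigma^\diff_{\pi^\dagger,\mu_{\tref}}}\le O(\kappa\sqrt{\alpha\zeta})$; unpacking $\zeta=6\varepsilon H\sqrt d+O((d/N)\log(HdN/\delta))$ produces the third term of the theorem.

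\textbf{Biased zero-order subgradient descent and decomposition.} For Step~2 I would analyze the Nesterov--Spokoiny estimator $g_t$ as an approximate gradient of the smoothed objective $F_\mu$. Writing $\hat V=V^\star+\xi$ with $\|\xi\|_\infty\le f(\varepsilon)$, the estimator inherits two biases: the smoothing bias $O(\mu\sqrt{Hd})$ (using $\|\phi(\tau)\|_2\le\sqrt H$) and the $\hat V$-bias $O(f(\varepsilon)\sqrt d/\mu)$, together with $O(\cdot/\sqrt K)$ variance. Balancing $\mu$ against the $\hat V$-bias and picking $T$ sufficiently large in a standard biased-SGD template gives $F(\bar\theta)\le\min_{\theta\in\Theta(\calD_1)}F(\theta)+O\!\big(\sqrt{f(\varepsilon)}(Hd)^{1/4}\big)$. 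For Step~3, I would decompose the gap as $V^{\pi^\dagger}(\theta^\star)-V^{\tilde\pi}(\theta^\star)=h(\pi^\dagger,\theta^\star)-h(\tilde\pi,\theta^\star)$ and chain
\begin{align*}
h(\pi^\dagger,\theta^\star)&\le h(\pi^\dagger,\bar\theta)+\|\theta^\star-\bar\theta\|_{\Sigma^\diff_{\pi^\dagger,\mu_{\tref}}}\le F(\bar\theta)+\text{conf}_1,\\
h(\tilde\pi,\theta^\star)&\ge h(\tilde\pi,\bar\theta)-\|\theta^\star-\bar\theta\|_{\Sigma^\diff_{\tilde\pi,\mu_{\tref}}}\ge F(\bar\theta)-f(\varepsilon)-\text{conf}_2,
\end{align*}
where the last inequality uses the robust-RL guarantee $V^{\tilde\pi}(\bar\theta)\ge V^\star(\bar\theta)-f(\varepsilon)$. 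Subtracting, $F(\bar\theta)$ cancels and we are left with $f(\varepsilon)+\text{conf}_1+\text{conf}_2$. The term $\text{conf}_1$ is directly controlled via Assumption~\ref{asn:rel_cond_number} (Step~1). The term $\text{conf}_2$ involves $\tilde\pi$, which is not a policy for which the relative condition number has been postulated; I would control it by combining the convergence bound $F(\bar\theta)\le F(\theta^\star)+O(\sqrt{f(\varepsilon)}(Hd)^{1/4})$ from Step~2 with the near-optimality of $\tilde\pi$ at $\bar\theta$, which is precisely where the middle summand $8\sqrt{f(\varepsilon)}(Hd)^{1/4}$ originates.

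\textbf{Main obstacle.} The technical bottleneck is the biased zero-order subgradient descent analysis: one has to jointly tune the Gaussian smoothing parameter $\mu$, the per-iteration sample size $K$, the number of iterations $T$, and the step size, all while respecting the sample-complexity requirement $N\ge\tilde\Omega(H^{3/2}d^5/\varepsilon^3)$ that appears because of the empirical-covariance concentration. Composing the three error sources---smoothing, finite-sample variance, and $f(\varepsilon)$ function bias---into exactly the rate $\sqrt{f(\varepsilon)}(Hd)^{1/4}$ requires adapting Nesterov--Spokoiny to a biased, projected setting and tracking carefully how the dimension-$d$ finite-difference amplifies the value-function bias; the rest of the proof is comparatively routine.
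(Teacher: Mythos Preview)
Your decomposition through $\bar\theta$ has a genuine gap at the $\text{conf}_2$ term. You write that $\text{conf}_2=\|\theta^\star-\bar\theta\|_{\Sigma^{\diff}_{\tilde\pi,\mu_{\tref}}}$ can be controlled by ``combining the convergence bound $F(\bar\theta)\le F(\theta^\star)+O(\sqrt{f(\varepsilon)}(Hd)^{1/4})$ with the near-optimality of $\tilde\pi$ at $\bar\theta$,'' but neither of those facts says anything about $\Sigma^{\diff}_{\tilde\pi,\mu_{\tref}}$: the convergence bound is a scalar statement about $F$, and near-optimality of $\tilde\pi$ at $\bar\theta$ gives no control on how $h(\tilde\pi,\cdot)$ moves when you shift from $\bar\theta$ to $\theta^\star$. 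Assumption~\ref{asn:rel_cond_number} is only postulated for $\pi^\dagger$, so you cannot simply bound $\text{conf}_2$ by $\sqrt{\alpha}\,\|\theta^\star-\bar\theta\|_{\Sigma^{\diff}_{\mu_0,\mu_1}}$ as you do for $\text{conf}_1$. The paper avoids this obstacle entirely by inserting a min--max step: using strong duality for the linear objective (Lemma~\ref{lem:min-max-to-max-min}) it shows that $\tilde\pi$ is an $\eta$-approximate maximizer of $\pi\mapsto\min_{\theta\in\Theta(\calD_1)}\big(V^\pi(\theta)-\calR(\theta)\big)$ with $\eta=f(\varepsilon)+8\sqrt{f(\varepsilon)}(Hd)^{1/4}$. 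Once that holds, Lemma~\ref{lem:diff-policy-value} lower-bounds $h(\tilde\pi,\theta^\star)$ by $\min_\theta h(\pi^\dagger,\theta)-\eta$, so the residual term $h(\pi^\dagger,\theta^\star)-h(\pi^\dagger,\theta^\dagger)$ involves only $\pi^\dagger$, for which the relative condition number is available. This pessimism/duality step is the missing idea in your plan.

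Two smaller but concrete issues: (i) You attribute the requirement $N\ge\tilde\Omega(H^{3/2}d^5/\varepsilon^3)$ to a matrix-Bernstein concentration of the empirical covariance. In the paper it has nothing to do with covariance concentration; it is the budget $N\ge T\cdot K$ of zero-order oracle calls demanded by Theorem~\ref{thm:proj-subgradient-zero-order} (with $D\le\sqrt{Hd}$, $M\le H\sqrt d$). (ii) Your confidence-set width argument (``$-\log\sigma$ has second derivative at least $1/\kappa^2$'' followed by concentration of $\tfrac1N\sum_n\mx^n\mx^{n\top}$) operates on the \emph{corrupted} features $\mx^n$, so a straight matrix-Bernstein does not land on $\Sigma^{\diff}_{\mu_0,\mu_1}$. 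The paper instead passes through the inverse link $\sigma^{-1}$ and an MLE-type inequality on $\|P_{\theta^\star}-P_{\theta^\dagger}\|_1^2$ (Lemma~\ref{lem:diff-policy-value}), which handles the corrupted entries by a separate $2\varepsilon H\sqrt d$ additive term.
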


We now instantiate \Cref{thm:relative-cond-number-bound} for the setting of linear MDP. For corruption robust offline RL, we use algorithm (\texttt{R-LSVI} from \cite{zhang2022corruption}) as an oracle, which requires a coverage assumption.
\begin{assumption}[Relative Condition Number: V2]\label{asn:rel_cond_number_v2}
  Let $\Sigma^\avg_{\mu_0, \mu_1}$ be the average feature covariance matrix under pair of distributions $\mu_0$ and $\mu_1$, as defined in \cref{eq:avg-cov-matrix}.
    Then there exists a constant $\alpha > 0$ such that  the following condition holds.
    $$
    \sup_{w} \frac{w^\top \Sigma_{\pi^\star} w}{w^\top \Sigma^\avg_{\mu_0, \mu_1} w} = \alpha < \infty
    $$
\end{assumption}

Under \cref{asn:rel_cond_number_v2}, \texttt{R-LSVI} returns a policy $\tilde{\pi}$ so that 
$V^{\tilde{\pi}}(s_0)\ge V^\star(s_0) - f(\varepsilon)$ where $f(\varepsilon) = \tilde{O}\left( \frac{H^{5/2}}{\sqrt{N}} \sqrt{\alpha} \textrm{poly}(d) + H^2 d \sqrt{\alpha \varepsilon}\right).$


\begin{proposition}\label{prop:bound-finite-relative-cond-number}
    Suppose assumptions \eqref{asn:rel_cond_number} and \eqref{asn:rel_cond_number_v2} hold. Moreover, suppose $\sup_{p \in [0,1]} \abs{\frac{d \sigma^{-1}(p)}{dp}} \le \kappa$. Then for the setting of linear MDP and $N \ge \tilde{\Omega}\left(\frac{H^{3/2}}{\varepsilon^3} \cdot \textrm{poly}(d,1/\delta) \right)$, \cref{alg:robust_freehand_unknown_P} returns a policy $\tilde{\pi}$ so that with probability at least $1-\delta$,
    $$
    V^\star(\theta^\star) - V^{\tilde{\pi}}(\theta^\star)  \le \tilde{O}(H^2 d \kappa \sqrt{\alpha \varepsilon}) + \tilde{O}\left( H^{5/4} d^{3/4} (\alpha \varepsilon)^{1/4}\right)
    $$
    
\end{proposition}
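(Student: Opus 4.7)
The statement is essentially an instantiation of Theorem~\eqref{thm:relative-cond-number-bound} with the concrete guarantee of \texttt{R-LSVI}~\cite{zhang2022corruption} substituted for the robust offline RL oracle, so the plan is a substitution followed by bookkeeping to show that all lower-order contributions collapse into the two displayed terms.

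First I would observe that Assumption~\eqref{asn:rel_cond_number_v2} is exactly the coverage condition required by \texttt{R-LSVI}, which then delivers
\[
f(\varepsilon) \;=\; \tilde{O}\!\left(\tfrac{H^{5/2}\sqrt{\alpha}\,\mathrm{poly}(d)}{\sqrt{N}} + H^2 d\sqrt{\alpha\varepsilon}\right).
\]
For $N \ge \tilde{\Omega}(H^{3/2}\varepsilon^{-3}\mathrm{poly}(d,1/\delta))$, which is already the sample-size requirement of Theorem~\eqref{thm:relative-cond-number-bound} once the hidden polynomial in $d,1/\delta$ is chosen large enough, the first summand above is of order at most $\tilde{O}(H^2 d\sqrt{\alpha\varepsilon})$, so that $f(\varepsilon) = \tilde{O}(H^2 d \sqrt{\alpha\varepsilon})$.

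Next I would plug this $f(\varepsilon)$ into the suboptimality bound of Theorem~\eqref{thm:relative-cond-number-bound}:
\[
V^{\pi^\dagger}(\theta^\star) - V^{\tilde\pi}(\theta^\star)
\;\le\; f(\varepsilon) + 8\sqrt{f(\varepsilon)}\,(Hd)^{1/4} + c\kappa\sqrt{\alpha}\!\left(\sqrt{\varepsilon H}\,d^{1/4} + \sqrt{\tfrac{d}{N}\log\tfrac{HdN}{\delta}}\right),
\]
and evaluate each term. After absorbing $\kappa \ge 1$ one has $f(\varepsilon) = \tilde{O}(H^2 d\kappa\sqrt{\alpha\varepsilon})$, which already matches the first displayed term in the proposition. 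For the second term,
\[
\sqrt{f(\varepsilon)}\,(Hd)^{1/4} \;=\; \tilde{O}\!\left(H\,(d\sqrt{\alpha\varepsilon})^{1/2}\right)\cdot (Hd)^{1/4} \;=\; \tilde{O}\!\left(H^{5/4} d^{3/4}(\alpha\varepsilon)^{1/4}\right),
\]
which reproduces the second displayed term. The remaining summand $c\kappa\sqrt{\alpha}\sqrt{\varepsilon H}\,d^{1/4}$ scales as $\sqrt{\varepsilon}$ but with a strictly smaller $(H,d)$-prefactor than $f(\varepsilon)$ and is therefore dominated by it, and $c\kappa\sqrt{\alpha}\sqrt{d/N\,\log(HdN/\delta)}$ is of order at most $\tilde{O}(\varepsilon^{3/2})$ under the chosen sample complexity, hence absorbed into either leading term.

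Collecting the two surviving contributions yields the claimed bound $\tilde{O}(H^2 d\kappa\sqrt{\alpha\varepsilon}) + \tilde{O}(H^{5/4} d^{3/4}(\alpha\varepsilon)^{1/4})$ with probability $1-\delta$. I do not expect a genuine obstacle: the real work is already encapsulated in Theorem~\eqref{thm:relative-cond-number-bound} (the biased zero-order subgradient analysis producing the $f(\varepsilon)$, $\sqrt{f(\varepsilon)}(Hd)^{1/4}$, and $\kappa\sqrt{\alpha}$ terms) and in the \texttt{R-LSVI} guarantee; the only care needed here is to verify that the hidden polynomial factors in $d$ and $\log(1/\delta)$ arising from both sources are dominated by the $\mathrm{poly}(d,1/\delta)$ in the sample-complexity requirement, which is done by choosing that polynomial large enough.
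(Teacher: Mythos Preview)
Your proposal is correct and follows essentially the same route as the paper's own proof: substitute the \texttt{R-LSVI} guarantee for $f(\varepsilon)$, simplify it to $\tilde{O}(H^2 d\sqrt{\alpha\varepsilon})$ under the sample-size assumption, plug into Theorem~\eqref{thm:relative-cond-number-bound}, and verify that the $\sqrt{f(\varepsilon)}(Hd)^{1/4}$ term produces the $(\alpha\varepsilon)^{1/4}$ contribution while the remaining summands are dominated. Your bookkeeping is in fact slightly more careful than the paper's (e.g.\ you track that the $\sqrt{d/N}$ term is $\tilde{O}(\varepsilon^{3/2})$ rather than merely $O(\sqrt{\varepsilon})$), but the structure is identical.
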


\Cref{prop:bound-finite-relative-cond-number} provides an upper bound of $O(\varepsilon^{1/4})$ when other parameters are constant. The reason we obtain sub-optimal dependence on $\varepsilon$ is because we assume a zero-order access to the offline robust RL oracle. We now show that we can improve the dependence on $\varepsilon$ with access to a first-order oracle. 
\section{ Bounded Generalized Coverage Ratio}\label{sec:bounded-coverage-ratio}
\begin{algorithm}[!h]
\caption{Robust FreeHand with First-Order Oracle}\label{alg:robust_freehand_first_order}
\KwIn{
(a) Corrupted dataset $\calD$, (b) corruption parameter $\epsilon$,  (c) corruption robust offline RL algorithm \texttt{RobRL}, (d) reference distribution $\mu_{\textrm{ref}}$.}
\tcc{Estimate $\widehat{\theta}$ and build confidence interval $\Theta(\calD_1)$ as in algorithm~\eqref{alg:robust_freehand_unknown_P}.} 
Initialize $\theta_0 \in \Theta(\calD_1)$.\\
\For{$t=0,\ldots,T-1$}{
Let $g_t$ be the sub-gradient returned by running \texttt{RobRL} with reward parameter $r_h(s,a) = \phi(s,a)^\top \theta_{t,h}$ and dataset $\calD_2$.\\
$\theta_{t+1} = \textrm{Proj}_{\Theta(\calD_1)}\left( \theta_t - \eta \left( g_t + \E_{\tau \sim \mu_{\textrm{ref}}}[ \phi(\tau)]\right)\right)$
}
Set $\Bar{\theta} = \frac{1}{T}\sum_{k=1}^T \theta_k$ and let $\tilde{\pi}$ be the policy returned by running \texttt{RobRL} with reward function $r_h(s,a) = \phi(s,a)^\top \Bar{\theta}_h$ and dataset $\calD_2$.\\
 \Return $\tilde{\pi}$.
\end{algorithm}

\Cref{alg:robust_freehand_first_order} assumes access to a robust offline RL oracle \texttt{RobRL}, that given any reward parameter $\theta$, returns an approximate sub-gradient of the optimal value function $V^\star(\theta) = \max_{\pi}V^\pi(\theta)$. Given such a first order oracle, it essentially performs a projected subgradient descent to determine an approximately optimal reward parameter $\bar{\theta}$, and the corresponding policy $\tilde{\pi}$.

\begin{theorem}\label{thm:bounded-coverage-feehand}
    Suppose \cref{asn:rel_cond_number} holds,  $\sup_{p \in [0,1]} \abs{\frac{d\sigma^{-1}(p)}{dp}} \le \kappa$, and \texttt{RobRL} returns a $f(\varepsilon)$-robust estimate of the optimal value function, and $f(\varepsilon)$-approximate subgradient with norm at most $G$. If $N \ge {\Omega}\left(\frac{H^{3/2}dG}{f(\varepsilon)^2} \right)$, then with probability at least $1-\delta$,   the following holds for any  policy $\pi^\dagger$.
\begin{align*}
V^{\pi^\dagger}(\theta^\star) - V^{\tilde{\pi}}(\theta^\star) &\le 2f(\varepsilon)  + c \kappa \sqrt{\alpha} \left( \sqrt{\varepsilon H} d^{1/4}  + \sqrt{{d}/{N} \cdot \log \left( {HdN}/{\delta}\right)} \right) 
\end{align*}
\end{theorem}

We now construct a corruption robust sub-gradient estimator of the function $V^\star(\theta) = \max_\pi V^\pi(\theta)$. 
Given a reward parameter $\theta = (\theta_1,\ldots,\theta_H)$, the optimal value function can be expressed as follows.
$$ 
V^\star(\theta) = \max_{q = (q_1,\ldots,q_H) \in \calC} \sum_{h=1}^H q_h^\top \Phi \theta_h.
$$
Here $q_h$ is the state, action occupancy measure at time step $h$, and the constraint set $\calC$ ensures the Bellman flow constraints. Now from the definition of sub-gradient of a convex function which is expressed as a maximum of affine function (~\cite{Nesterov18}, chapter 3) we can write down the following expression of the sub-differential.
\begin{align*}
    \delta_\theta V^\star(\theta) &= \textrm{co} \bigg\{(\Phi^\top q_1,\ldots,\Phi^\top q_H):  (q_1,\ldots,q_H) \in \argmax_{q = (q_1,\ldots,q_H) \in \calC} \sum_{h=1}^H q_h^\top \Phi \theta_h \bigg \}
\end{align*}
Here $\textrm{co}(S)$ is the convex-hull of a set $S$. Since $q_h$ is the state, action occupancy measure at time-step $h$, $\Phi^\top q_h$ is the average feature observed at time-step $h$, and the result states that the subdifferential set is the convex hull of reward-maximizing average features. Therefore, we will construct a corruption robust offline RL method, that not only returns an approximately optimal policy but also the average feature under that policy. We  make the following assumption.

\begin{assumption}[Bounded Generalized Coverage Ratio]\label{asn:bounded_coverage}
    For a target policy $\pi^\star$, there exists $\nu > 0$ so that
    $$
    \E_{\tau \sim \pi^\star}[\phi(\tau)]^\top \left(\Sigma^\avg_{\mu_0, \mu_1}\right)^{-2} \E_{\tau \sim \pi^\star}[\phi(\tau)]< \nu
    $$
\end{assumption}
We have stated the above assumption assuming $\Sigma^\avg_{\mu_0,\mu_1}$ is invertible, but this is only for simplicity and consistency with prior literature. An alternate way to state this assumption would be that there exists a vector $y \in \R^d$ such that $\E_{(s,a) \sim \pi^\star}[\phi(s,a)] = \Sigma^\avg_{\mu_0, \mu_1} y$ and $\norm{y}_2^2 < \nu$. 

Our method is based on the primal-dual framework of linear MDP and builds upon the recent work by \cite{GNOP23}, who considered a similar assumption for discounted MDP. The standard linear program for a finite horizon linear MDP is the following optimization problem.
\begin{align*}
    \max_q\ &\sum_{h=1}^H q_h^\top \Phi \theta_h\\
    \textrm{s.t.}\ &\sum_a q_1(s,a) = \rho(s)\ \forall s\\
    &E q_{h+1} = \bm{\mu}_h \Phi^\top q_h\ \forall h \in \set{1,2,\ldots,H-1}\\
    &q_h \ge 0 \ \forall h \in [H]
\end{align*}
Here $\Phi\in \R^{SA\times d}$ is the feature matrix and the matrix $E \in \R^{S\times SA}$ is defined as $E[s,(s',a')] = \one \set{s = s'}$. The constraints specify Bellman-flow constraints at each time step $h$. We now substitute  $\lambda_h = \Phi^\top q_h$ to the above LP formulation, with the interpretation that $\lambda_h$ denotes the expected feature at time step $h$. 
\begin{align}\label{eq:primal-LP-2} 
\begin{split}
    \max_{\stackrel{\{q_h: q_h \ge 0\}_{h=1}^H,}{\{\lambda_h\}_{h=1}^H }} &\sum_{h=1}^H \lambda_h^\top  \theta_h\\
    \textrm{s.t.}\ &Eq_1 = \rho ,\
    E q_{h+1} = \bm{\mu}_h \lambda_h\ \forall h \in [H-1]\\
    &\lambda_h = \Phi^\top q_h \ \forall h \in [H]
    \end{split}
\end{align}

 Note that this substitution doesn't change the optimal value of the LP and we aim to solve the optimization problem~\ref{eq:primal-LP-2} instead of the original LP. The dual problem of \cref{eq:primal-LP-2} is given as follows.
\begin{align}
    \label{eq:dual-LP-2}
    \begin{split}
    \min_{\{v_h\}_{h=1}^H, \{w_h\}_{h=1}^H}\ &\rho^\top v_1\\
    \textrm{s.t.}\
    &E^\top v_h \ge \Phi w_{h}\ \forall h \in [H]\\
    &w_h \ge \theta_h + \bm{\mu}^\top_h v_{h+1} \ \forall h \in [H-1]\\
    &w_H \ge \theta_H
    \end{split}
\end{align}

Suppose $\calL(\bm{q},\bm{\lambda}; \bm{v}, \bm{w}) $ is the Lagrangian corresponding to the optimization problem above.
Then the main idea is to solve a saddle point of the Lagrangian i.e. $\max_{\bm{q},\bm{\lambda}} \min_{\bm{v}, \bm{w}} \calL(\bm{q},\bm{\lambda}; \bm{v}, \bm{w})$ through a gradient descent-ascent based algorithm. However, $\bm{q}$ and $\bm{v}$ are infinite dimensional parameters. So we represent them symbolically in terms of $\bm{\lambda}$ and $\bm{w}$, and perform gradient descent-ascent steps over the $H \cdot d$ dimensional parameters $\bm{\lambda}$ and $\bm{w}$. 

Note that, we don't exactly know the Lagrangian, and hence can only estimate the gradients through samples collected from the offline behavioral policy. However, recall that a $\varepsilon$-fraction of the data is corrupted, and hence we use robust mean to estimate the gradient from corrupted data. Additionally, as noted by \cite{GNOP23}, computing estimates of the gradients require explicit knowledge of the feature covariance matrix $\Lambda_h = \E_{(s,a) \sim \mu^h_\tref}\left[ \phi(s,a) \phi(s,a)^\top \right]$. It turns out that a substitution $\lambda_h = \Lambda_h \beta_h$ lets us compute an estimate of the gradient without knowledge of the covariance matrix $\Lambda_h$. Hence we compute the saddle point of the  Lagrangian $\calL_R(\bm{q},\bm{\beta}; \bm{v}, \bm{w}) = \calL(\bm{q},\bm{\lambda}; \bm{v}, \bm{w}) \mid_{\{\lambda_h = \Lambda_h \beta_h\}_{h\in [H]}}$ through \emph{robust} gradient descent-ascent method.

Once we obtain a solution $(\bar{\bm{\beta}}, \bar{\bm{w}})$, we choose policy $\bar{\pi}_h(a \mid s) \propto \exp\left( \phi(s,a)^\top \bar{w}_h\right)$ and set the primal solution $\lambda_h$ as $\widehat{\Lambda}_h \bar{\beta}_h$. Here, $\widehat{\Lambda}_h$ is an estimate of the feature covariance matrix at step $h$. Since $\varepsilon$-fraction of our data is corrupted, we use robust covariance estimation to build  $\widehat{\Lambda}_h$, and thereby obtain an approximate average features under $\bar{\pi}$. The full details of the algorithm is provided in the appendix (\cref{alg:offline-primal-dual}), and the next theorem provides the guarantees. 
\begin{theorem}\label{thm:robust-rl-oracle-first-order}
    Suppose assumption ~\eqref{asn:bounded_coverage}  holds, and $N \ge {\Omega}\left( \frac{H^2 d^4 \nu^4}{\varepsilon^2} (\log^2 d + \log^2 A)\right)$. Then there is an algorithm that runs in time $\textrm{poly}(H,d)$ and returns policy $\bar{\pi}$ and a vector $\widehat{v} = (\widehat{v}_1,\ldots, \widehat{v}_H)$ s.t.
    \[ 
    \max_\pi V^{\pi}(\theta) - \E\left[ V^{\bar{\pi}}(\theta) \right] \le O\left(\nu \sqrt{\varepsilon } H^2 d^{3/2}\right), \textrm{and} \ 
\]
    $$ 
    V^\star(\theta') \ge V^\star(\theta) + \sum_{h=1}^H \left \langle \widehat{v}_h, \theta_h\right \rangle -  O\left(\nu \sqrt{\varepsilon } H^2 d^{3/2}\right) \ \forall \theta'.
    $$
\end{theorem}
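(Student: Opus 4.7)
The plan is to build directly on the primal-dual setup sketched just before the theorem. I would start from the dual LP~\eqref{eq:primal-LP-2}, write its Lagrangian $\mathcal{L}_R(\bm q,\bm\beta;\bm v,\bm w)$ under the reparametrisation $\lambda_h=\Lambda_h\beta_h$, and, following \cite{GNOP23}, eliminate the infinite-dimensional primal/dual variables $\bm q,\bm v$ symbolically in terms of $(\bm\beta,\bm w)$. This reduces the problem to a convex--concave saddle point over the $(Hd)$-dimensional parameters $(\bm\beta,\bm w)$, whose partial gradients can be written as expectations of bounded $\phi$-weighted quantities under the behaviour distributions $\mu_h$. Crucially, after the $\lambda_h=\Lambda_h\beta_h$ substitution these expectations can be estimated from the offline samples $\mathcal{D}_2$ without ever inverting $\Lambda_h$.

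Second, I would replace each gradient expectation by a robust mean estimator on $\mathcal{D}_2$. Under Huber $\varepsilon$-corruption and $\|\phi(s,a)\|_2\le 1$, standard robust mean estimation gives $\ell_2$-bias $O(\sqrt{\varepsilon})$ with sample cost $N\gtrsim d/\varepsilon$, and I would analogously use a robust covariance estimator to produce $\widehat\Lambda_h$ with operator-norm error $O(\sqrt{\varepsilon})$, which is what forces the $d^2/\varepsilon^2$ inside the final sample complexity. Running a projected stochastic descent-ascent on $\mathcal{L}_R$ with these biased estimators for $T=\tilde O(1/\varepsilon)$ iterations and invoking a standard convex--concave convergence lemma with biased gradients (Nemirovski-style averaging) produces $(\bar{\bm\beta},\bar{\bm w})$ whose duality gap is bounded by $O(\nu\sqrt{\varepsilon}H^2 d^{3/2})$ once the $\nu$-factors from Assumption~\ref{asn:bounded_coverage} are propagated through the norm bounds on $\bm\beta$. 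Extracting $\bar\pi_h(a\mid s)\propto\exp(\phi(s,a)^\top\bar w_h)$ and converting the duality gap into a value gap exactly as in \cite{GNOP23} (adapted to finite horizon $H$) yields the first claim of the theorem.

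Third, for the subgradient statement, I would set $\hat v_h=\widehat\Lambda_h\bar\beta_h$, motivated by the subdifferential formula stated just before the theorem: since $V^\star(\theta)=\max_{q\in\mathcal{C}}\sum_h q_h^\top\Phi\theta_h$ is a max of affine functions in $\theta$, any expected-feature vector $\lambda^\star_h=\Phi^\top q^\star_h$ arising from an optimizer $q^\star$ at $\theta$ is a subgradient, so $V^\star(\theta')\ge V^\star(\theta)+\sum_h\langle\lambda^\star_h,\theta'_h-\theta_h\rangle$ for all $\theta'$. The vector $\widehat\Lambda_h\bar\beta_h$ is precisely the robust estimate of the expected feature at step $h$ under the near-optimal policy $\bar\pi$, so the desired inequality follows by replacing $\lambda^\star_h$ by $\hat v_h$ and absorbing the replacement error into the same duality-gap bound from the previous step. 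The norm bound $\|\hat v\|_2\le\sqrt{H}\,d\nu$ follows from the feasibility constraints imposed on $\bar{\bm\beta}$ (bounding $\|\bar\beta_h\|_2$ via $\nu$) together with the robust control of $\|\widehat\Lambda_h\|_{\mathrm{op}}$.

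The main obstacle will be the robust covariance estimation step and, more subtly, its propagation through the $\lambda_h=\Lambda_h\beta_h$ reparametrisation: the gradient oracles depend linearly on $\widehat\Lambda_h\bar\beta_h$ while $\bar\beta_h$ is itself produced iteratively by the algorithm, so I must argue that a multiplicative $O(\sqrt{\varepsilon})$ perturbation of $\Lambda_h$ does not destabilise the saddle-point iterates. Verifying that the biased stochastic descent-ascent analysis survives this matrix-level rather than additive gradient perturbation, while keeping the final bound scaling as $\nu\sqrt{\varepsilon}H^2 d^{3/2}$ and simultaneously producing the feature estimate that serves as $\hat v$, is the most technical part; the remainder is a fairly direct combination of robust mean/covariance estimation with the convex--concave primal-dual machinery of \cite{GNOP23}, adapted to the finite-horizon RLHF setting.
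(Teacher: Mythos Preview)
Your proposal follows essentially the same route as the paper: primal--dual Lagrangian after the $\lambda_h=\Lambda_h\beta_h$ reparametrisation, robust-mean gradient oracles, a three-term regret decomposition (descent in $\bm w$, ascent in $\bm\beta$, and a mirror-descent term for $\bm\pi$) in the style of \cite{GNOP23}, and extraction of $\widehat v_h=\widehat\Lambda_h\bar\beta_h$ as an approximate subgradient via an ``approximate-maximiser $\Rightarrow$ approximate-subgradient'' lemma.

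The one point where you diverge from the paper is your identification of the ``main obstacle.'' In the paper's design, robust covariance estimation is \emph{not} used inside the descent--ascent loop at all. After the reparametrisation, both $\nabla_{\beta_h}\calL_R$ and $\nabla_{w_h}\calL_R$ are plain expectations under the behaviour distribution (for instance $\Lambda_h\beta_h=\E_{(s,a)\sim\mu_h}[\phi(s,a)\,\phi(s,a)^\top\beta_h]$), so each gradient is estimated by a single robust mean of i.i.d.\ vectors---no $\widehat\Lambda_h$ appears in the iterates and there is no ``matrix-level perturbation propagating through the saddle-point dynamics'' to worry about. Robust covariance is invoked exactly once, at the very end, to convert the fixed averaged iterate $\bar\beta_h$ into $\widehat v_h=\widehat\Lambda_h\bar\beta_h$; since this is a one-shot linear map applied after the loop, the error $\|\widehat\Lambda_h-\Lambda_h\|_2=O(\sqrt{\varepsilon})$ contributes a single additive $O(\nu\sqrt{H\varepsilon})$ term and nothing more. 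This simplification removes precisely the technical difficulty you flagged, so your plan is correct but easier to execute than you anticipate.
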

With such a first-order oracle, the next result states the improved guarantees given by \cref{alg:robust_freehand_first_order}.
\begin{proposition}\label{prop:bounded-coverage-ratio}
    Suppose assumptions \eqref{asn:rel_cond_number} and \eqref{asn:bounded_coverage} hold, and  $\sup_{p \in [0,1]} \abs{\frac{d \sigma^{-1}(p)}{dp}} \le \kappa$. If $N \ge \tilde{\Omega}\left(\frac{H^2 d^4 \nu^4}{\varepsilon^2}  \right)$, \cref{alg:robust_freehand_first_order} returns a policy $\tilde{\pi}$ so that the following holds.
    $$
    V^\star(\theta^\star) - V^{\tilde{\pi}}(\theta^\star)  \le {O}\left( \nu \sqrt{\varepsilon} H^2 d^{3/2}\right)$$
\end{proposition}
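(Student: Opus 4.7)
The plan is to derive this proposition as a direct composition of two already established results: Theorem \ref{thm:robust-rl-oracle-first-order}, which constructs the first-order corruption-robust RL oracle under the bounded generalized coverage ratio assumption, and Theorem \ref{thm:bounded-coverage-feehand}, which analyzes Algorithm \ref{alg:robust_freehand_first_order} assuming access to such an oracle. So essentially I would argue that the proposition is obtained by instantiating Theorem \ref{thm:bounded-coverage-feehand} with the specific \texttt{RobRL} provided by Theorem \ref{thm:robust-rl-oracle-first-order}.

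First I would fix the oracle. Under Assumption \ref{asn:bounded_coverage} and with $N \ge \Omega\!\left(H^2 d^4 \nu^4/\varepsilon^2 \cdot (\log^2 d + \log^2 A)\right)$, Theorem \ref{thm:robust-rl-oracle-first-order} produces a first-order oracle with suboptimality $f(\varepsilon) = O(\nu \sqrt{\varepsilon} H^2 d^{3/2})$ and approximate subgradient of norm at most $G = \sqrt{H}\, d\, \nu$. Plugging this oracle into Algorithm \ref{alg:robust_freehand_first_order} and applying Theorem \ref{thm:bounded-coverage-feehand} (with the comparator $\pi^\dagger = \pi^\star$, using Assumption \ref{asn:rel_cond_number} with the target $\pi^\star$) yields
\begin{align*}
V^\star(\theta^\star) - V^{\tilde{\pi}}(\theta^\star) \le 2 f(\varepsilon) + c\, \kappa \sqrt{\alpha}\!\left( \sqrt{\varepsilon H}\, d^{1/4} + \sqrt{(d/N) \log(HdN/\delta)} \right),
\end{align*}
provided the sample-size condition of Theorem \ref{thm:bounded-coverage-feehand}, namely $N \ge \Omega(H^{3/2} d\, G / f(\varepsilon)^2)$, is met. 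Substituting $G = \sqrt{H} d \nu$ and $f(\varepsilon) = \Theta(\nu \sqrt{\varepsilon} H^2 d^{3/2})$ gives $H^{3/2} dG / f(\varepsilon)^2 = \Theta(1/(\nu \varepsilon H^2 d))$, which is dominated by the assumed $N \ge \tilde{\Omega}(H^2 d^4 \nu^4 / \varepsilon^2)$ by a wide margin.

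Finally, I would verify that the leading term $2 f(\varepsilon) = O(\nu \sqrt{\varepsilon} H^2 d^{3/2})$ absorbs everything else. The statistical term $\sqrt{(d/N) \log(HdN/\delta)}$ becomes $\tilde{O}(\sqrt{\varepsilon^2/(H^2 d^3 \nu^4)}) = \tilde{O}(\varepsilon/(H d^{3/2} \nu^2))$ under the prescribed sample size, which is clearly dominated. The residual term $\kappa \sqrt{\alpha \varepsilon H}\, d^{1/4}$ from the reward-confidence analysis is asymptotically smaller than $\nu \sqrt{\varepsilon} H^2 d^{3/2}$ once $\kappa, \alpha$ are viewed as problem constants (or equivalently are absorbed into the hidden $O$-constants). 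Combining these gives the claimed bound $O(\nu \sqrt{\varepsilon} H^2 d^{3/2})$.

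The main difficulty in this proof is not the composition itself but verifying that the sample-size requirements from the two ingredient theorems are mutually consistent and that the subgradient norm $G$ produced by the primal-dual oracle truly matches what Theorem \ref{thm:bounded-coverage-feehand} expects; both are bookkeeping steps once one carefully tracks how $\nu$ propagates through the bounds. The genuinely technical content sits in the prior theorems --- the primal-dual construction with robust mean/covariance estimation in Theorem \ref{thm:robust-rl-oracle-first-order}, and the projected subgradient descent analysis with the confidence-set projection in Theorem \ref{thm:bounded-coverage-feehand} --- so here I would be content to present the proof as a short corollary-style derivation.
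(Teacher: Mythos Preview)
Your proposal is correct and follows exactly the same route as the paper: instantiate Theorem~\ref{thm:bounded-coverage-feehand} with the first-order oracle supplied by Theorem~\ref{thm:robust-rl-oracle-first-order}, check that the sample-size preconditions of both results are satisfied, and observe that the $2f(\varepsilon)=O(\nu\sqrt{\varepsilon}H^2d^{3/2})$ term dominates the remaining $\kappa\sqrt{\alpha}$ and $\sqrt{d/N}$ contributions. Your bookkeeping on the requirement $N \ge \Omega(H^{3/2}dG/f(\varepsilon)^2)$ is in fact more careful than the paper's own (the paper records this as $\Omega(d/\varepsilon^2)$, whereas your substitution gives $\Theta(1/(\nu\varepsilon H^2 d))$, which is the correct simplification and is indeed dominated by the assumed sample size).
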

\section{Conclusion}
\label{sec:conclusion}
We have designed corruption robust offline RLHF algorithms under different types of coverage assumptions. When uniform coverage holds, we can recover almost optimal dependency on the parameter $\varepsilon$. It is also possible to obtain an upper bound of $O(\sqrt{\varepsilon})$ under a substantially weaker assumption of bounded general coverage ratio. In the standard offline RL, the assumption of a low relative condition number is sufficient to obtain a dependence of $O(\sqrt{\varepsilon})$. As pointed out by \cite{GNOP23}, these two assumptions are not directly comparable, and there is scope to explore the design of robust RLHF further.
%
%
%

In terms of future work, we have considered linear MDP in this work, and it would be interesting to consider non-convex reward functions or RLHF with general function approximation. However, such an extension is quite challenging.  \Cref{alg:robust_freehand_uniform_coverage} can be generalized by utilizing recent corruption robust RL under general function approximation~\cite{YYGZ23}, but we are not aware of similar results with weaker coverage assumptions. Furthermore, algorithms \ref{alg:robust_freehand_unknown_P}, and \ref{alg:robust_freehand_first_order} crucially depend on the fact that $V^\star(\theta) = \max_\pi V^\pi(\theta)$ is convex in $\theta$ for linear MDPs, and in the presence of non-convex reward functions, we will require new proof techniques for gradient based methods. 

Another interesting direction is to consider trajectory based rewards~\cite{zhan2023provable}, which requires non-Markovian RL policies. In this case, the computation of optimal policy itself is a hard problem, and the design of corruption robust RLHF will require different approaches. Finally, we have provided preliminary simulation results considering a large grid-world and linear parametrization. It would be quite interesting but challenging to see the effects of data corruption in practical RLHF setting e.g. fine-tuning large language models.


\printbibliography

\parttoc
\appendix
\addcontentsline{toc}{section}{Appendix}


\section{\texorpdfstring{Missing Proofs from Section~\ref{sec:rlhf-uniform-coverage}}{Missing Proofs from Section~(3)} }\label{sec:first-sec-app}
\subsection{Convergence of Alternating Optimization}

\begin{proof}
Let us write $H = \frac{1}{N\norm{\theta^\star - \hat{\theta}}_2^2}(\theta^\star - \hat{\theta})^\top \sum_{n \in \widehat{S}} \nabla^2_\theta \log \mathbb{P}_\theta(o^n \mid \mx^n) (\theta^\star - \hat{\theta})$ be the second order derivative in the direction of $ \theta^\star - \hat{\theta}$.
\begin{align*}
    \abs{H} &\le   \frac{1}{\abs{\widehat{S}} \norm{\theta^\star - \hat{\theta}}_2^2}  \left|\sum_{n \in \widehat{S}} (\theta^\star - \hat{\theta})^\top \frac{\exp(-o^n \cdot \theta^\top \mx_n)}{\left(1 + \exp(-o^n \cdot \theta^\top \mx_n)\right)^2 }  \mx_n \mx_n^\top (\theta^\star - \hat{\theta}) \right|\\
    &\le \frac{1}{\abs{\widehat{S}}} \norm{\sum_{n \in \widehat{S}} \frac{\exp(-o^n \cdot \theta^\top \mx_n)}{\left(1 + \exp(-o^n \cdot \theta^\top \mx_n)\right)^2 }  \mx_n \mx_n^\top}_2\\
    &\le \norm{\frac{1}{\abs{\widehat{S}}} \sum_{n \in \widehat{S}}   \mx_n \mx_n^\top}_2\\
    &\le \sup_{v: \norm{v}_2 = 1} \frac{1}{\abs{\widehat{S}}} \sum_{n \in \widehat{S}} (v^\top \mx_n)^2\\
     &\le \sup_{v: \norm{v}_2 = 1} \frac{1}{\abs{\widehat{S}}} \sum_{n \in \widehat{S}} (v^\top \widehat{\Sigma}^{1/2}(z_n- \widehat{\Sigma}^{-1/2}\mu) + v^\top \mu)^2\quad \textrm{[Since } \mx_n = \widehat{\Sigma}^{1/2} z_n]\\
     &\le \sup_{v: \norm{v}_2 = 1} \frac{1}{\abs{\widehat{S}}} \sum_{n \in \widehat{S}} 2(v^\top \widehat{\Sigma}^{1/2}(z_n- \widehat{\Sigma}^{-1/2}\mu) )^2 + 2(v^\top \mu)^2\\
     &\le \norm{\widehat{\Sigma}^{1/2}}_2^2 \cdot \sup_{v: \norm{v}_2 = 1} \frac{1}{\abs{\widehat{S}}} \sum_{n \in \widehat{S}} 2(v^\top (z_n- \widehat{\Sigma}^{-1/2}\mu) )^2 + 2 \norm{\mu}_2^2\\
\end{align*}
Now observe that the set $\widetilde{D}_2$ is a stable set as it is returned by the filtering algorithm. Additionally, by \cref{lem:stability-whitened-covariates} the set $\widehat{D}_2$ satisfies $\left(O(\varepsilon), O(\varepsilon L \norm{\widehat{\Sigma}^{-1/2}}_2 \sqrt{\log(1/\varepsilon)}) \right)$-stable with respect to the vector $\widehat{\Sigma}^{-1/2}\mu$. Since $\widehat{S}$ is a $(1-\varepsilon)$-dense subset of $\widetilde{D}_2$, we are guaranteed
$$
\sup_{v:\norm{v}_2 = 1} \abs{\frac{1}{\abs{\widehat{S}}} \sum_{n \in \widehat{S}} (v^\top (z_n- \widehat{\Sigma}^{-1/2}\mu) )^2 - 1} \le O\left(\varepsilon L^2 \norm{\widehat{\Sigma}^{-1/2}}_2^2 \log(1/\varepsilon)\right).
$$
This upper bound gives us the following bound on $H$.
$$
\abs{H} \le 2 \norm{\widehat{\Sigma}^{1/2}}_2^2 \left( 1+  O\left(\varepsilon L^2 \norm{\widehat{\Sigma}^{-1/2}}_2^2 \log(1/\varepsilon)\right) \right) + 2\norm{\mu}_2^2 
$$
Note that, $\norm{x}_2 \le L$ for any uncorrupted samples, and hence $\Sigma = \E[\mx \mx^\top] \preccurlyeq L^2 \cdot \Identity$.
Since $\widehat{\Sigma} \preccurlyeq (1+O(\varepsilon \log(1/\varepsilon))\Sigma$, $\norm{\widehat{\Sigma}^{1/2}}_2 \le \sqrt{1 + O(\varepsilon \log(1/\varepsilon))} L = O(L)$. Similarly, $\widehat{\Sigma} \succcurlyeq (1-O(\varepsilon \log(1/\varepsilon)) \Sigma$ gives us $\norm{\widehat{\Sigma}^{-1/2}}_2 \le (1-O(\varepsilon \log(1/\varepsilon))^{-1/2} \norm{\Sigma^{-1/2}}_2 =O(\xi^{-1/2} L)$. Furthermore, $\norm{\mu}_2 \le L$. Therefore, 
$$\abs{H} \le O\left(L^2\left(1 + \frac{\varepsilon \log(1/\varepsilon)  }{\xi}\right)\right) = O(L^2)$$
as $\xi \ge \varepsilon \log(1/\varepsilon)$. The rest of the proof is very similar to the proof of Lemma A.12 of \cite{ADKS22}. Let $\Delta = \frac{1}{N} \sum_{n \in \widehat{S}} \nabla_\theta \log \mathbb{P}_{\hat{\theta}}(o^n \mid \mx_n)^\top \frac{(\theta^\star - \hat{\theta})}{\norm{\theta^\star - \hat{\theta}}_2}$. Writing $F(\theta) = \frac{1}{N} \sum_{n \in \widehat{S}} \log \mathbb{P}_{\theta}(o^n \mid \mx_n)$, we get that there exists $\theta'$ such that
$$
F(\theta') \le F(\hat{\theta}) - \frac{\Delta^2}{2cL^2}
$$
for some constant $c > 0$.
Suppose $\norm{\theta'}$ is feasible. Then it must be that $\eta \ge \frac{\Delta^2}{2cL^2}$ as it is impossible to make improvement more than $\eta$. This implies that $\Delta \le cL \sqrt{\eta}$. On the other hand, if $\theta'$ is not a feasible solution, then we use the fact that $F(\cdot)$ is a concave function and obtain the following bound.
$$
F(\theta^\star) \le F(\hat{\theta}) + \nabla_\theta F(\hat{\theta})^\top (\theta^\star - \hat{\theta}) = F(\hat{\theta}) + \Delta \norm{\theta^\star - \hat{\theta}}_2
$$
Then it must be that $\eta \ge \Delta \norm{\theta^\star - \hat{\theta}}_2$ or $\Delta \le \frac{\eta}{\norm{\theta^\star - \hat{\theta}}_2}$. Combining the two results and after substituting $\eta = \varepsilon^2$ we get $\Delta \le \max \set{O (L \varepsilon), \frac{\varepsilon^2}{\norm{\theta^\star - \hat{\theta}}_2}}$. 
\end{proof}


\subsection{\texorpdfstring{Proof of ~\Cref{thm:reduction_to_offline_rl_uniform}}{Proof of Theorem~(3.3)}}
\begin{proof}
    By \Cref{lem:diff-norm-uniform-coverage} the reward estimate $\widehat{\theta}$ is $C_1 \frac{\varepsilon}{\zeta}e^{2L + \sqrt{\log\left( \frac{1}{2\delta \varepsilon}\right)}}$ close to the true parameter $\theta^\star$. 
Since algorithm \texttt{RobRL} returns at least $f(\varepsilon)$ optimal policy in terms of value function we are guaranteed that $V^\star(\widehat{\theta}) \ge {V}^{\tilde{\pi}}(\widehat{\theta}) \ge V^\star(\widehat{\theta}) - f(\varepsilon)$ for any $\theta$. Using this result we can lower bound $V^{\tilde{\pi}}(\theta^\star)$.
\begin{align*}
   &V^\star(\theta^\star) -  V^{\Tilde{\pi}}(\theta^\star) = V^\star(\theta^\star) - V^{\Tilde{\pi}}(\bar{\theta}) + V^{\Tilde{\pi}}(\bar{\theta}) - V^{\Tilde{\pi}}(\theta^\star) \\
   &\le f(\varepsilon) + V^\star(\theta^\star) - V^{\star}(\widehat{\theta}) + V^{\Tilde{\pi}}(\bar{\theta}) - V^{\Tilde{\pi}}(\theta^\star) 
\end{align*}
For the first difference, we use the fact that the optimal value function $V^\star(\cdot)$ is $\sqrt{Hd}$-Lipschitz in the reward parameter (lemma~\eqref{lem:value-function-lipshitzness}) and obtain the following bound.
$$
V^\star(\theta^\star) - V^{\star}(\widehat{\theta}) \le \sqrt{Hd} \norm{\theta^\star - \widehat{\theta}}_2 \le  \sqrt{Hd} C_1 \frac{\varepsilon}{\zeta} \exp \left( 2L + \sqrt{\log\left( \frac{1}{2\delta \varepsilon}\right)}\right)
$$
 Using lemma~\eqref{lem:diff-norm-uniform-coverage} the second difference can be bounded as follows.
\begin{align*}
&V^{\Tilde{\pi}}(\bar{\theta}) - V^{\Tilde{\pi}}(\theta^\star) = \sum_{h=1}^H \sum_{s,a} \Pro_{\tilde{\pi}}(s_h = s, a_h = a) \phi(s,a)^\top \left( \bar{\theta}_h - \theta^\star_h\right)\\
&\le \sum_{h=1}^H \sum_{s,a} \Pro_{\tilde{\pi}}(s_h = s, a_h = a) \norm{\phi(s,a)}_2 \norm{\bar{\theta}_h - \theta^\star_h}_2\\
&\le  \sum_{h=1}^H \norm{\bar{\theta}_h - \theta^\star_h}_2\\
&\le  \sqrt{H}\sqrt{\sum_{h=1}^H \norm{\bar{\theta}_h - \theta^\star_h}_2^2}\\
&=  \sqrt{H} \norm{\bar{\theta} - \theta^\star}_2^2\\
&\le  \sqrt{H} C_1 \frac{\varepsilon}{\xi} \cdot \exp \left( 2L + \sqrt{\log\left( \frac{1}{2\delta \varepsilon}\right)}\right)
\end{align*}

\end{proof}
\begin{lemma}\label{lem:diff-norm-uniform-coverage}
Suppose assumption~\eqref{asn:uniform_coverage} holds with $\xi \ge 5 \varepsilon$ and $N \ge \Omega\left(\frac{H^{3/2}}{\varepsilon^2}\left( d + \log(1/\delta) \right) \right)$. Then algorithm~\eqref{alg:alternating_optimization} returns $\htheta$, so that with probability at least $1-\delta$, we have
$$
\lVert\htheta - \theta^\star\rVert_2 \le C_1 \frac{\varepsilon}{\xi} \exp\left(2L + \sqrt{\log\left({1}/{2\delta \varepsilon} \right)}\right)
$$
\end{lemma}
\begin{proof}
    From \Cref{lem:apx-stationarity} we know that algorithm~\eqref{alg:alternating_optimization} computes a $\gamma = \max \set{cL\varepsilon, \frac{\varepsilon^2}{\norm{\theta^\star - \widehat{\theta}}_2}}$ stationary point for some constant $c > 0$. We can assume that $cL\varepsilon \ge \frac{\varepsilon^2}{\norm{\theta^\star - \widehat{\theta}}_2}$. Otherwise, $\norm{\theta^\star - \widehat{\theta}}_2 \le \varepsilon / (c\cdot L)$ and we are done.

    Let $T$ be the set of uncorrupted samples and $E$ be the set of corrupted samples. Then we can write down the stationarity condition~\eqref{eq:apx-stationarity} as follows.
\begin{align}
    \frac{1}{N} \sum_{n \in \widehat{S} \cap E} \nabla_\theta \log \mathbb{P}_{\widehat{\theta}}(o^n \mid \mathrm{x}_n)^\top \left( \htheta - \theta^\star \right) \le c L \varepsilon \cdot \norm{\htheta - \theta^\star}_2 - \frac{1}{N} \sum_{n \in \widehat{S} \cap T} \nabla_\theta \log \mathbb{P}_{\widehat{\theta}}(o^n \mid \mathrm{x}_n)^\top \left( \htheta - \theta^\star \right) \label{eq:lhs-rhs-sepration}
\end{align}
We first upper bound the term on the right.
\begin{align}
    -\frac{1}{N} \sum_{n \in \widehat{S} \cap T} \nabla_\theta \log \mathbb{P}_{\widehat{\theta}}(o^n \mid \mathrm{x}_n)^\top \left( \htheta - \theta^\star \right) &=  \underbrace{-\frac{1}{N} \sum_{n \in \widehat{S} \cap T} \nabla_\theta \log \mathbb{P}_{\theta^\star}(o^n \mid \mathrm{x}_n)^\top \left( \htheta - \theta^\star \right)}_{:=T_1}\nonumber \\
    &+ \underbrace{\frac{1}{N} \sum_{n \in \widehat{S} \cap T} \left(\nabla_\theta \log \mathbb{P}_{{\theta}^\star }(o^n \mid \mathrm{x}_n) - \nabla_\theta \log \mathbb{P}_{\widehat{\theta} }(o^n \mid \mathrm{x}_n) \right)^\top \left( \htheta - \theta^\star \right)}_{:= T_2    } \label{defn:terms-T1-T2}
\end{align}
Using the functional form of sigmoid link function i.e. $\Pro_{\theta}(o \mid \mathrm{x}) = \frac{1}{1 + \exp(-o \cdot \theta^\top \mathrm{x})}$, we get the following  expression for the term $T_1$.
\begin{align*}
    T_1 &= -\frac{1}{N} \sum_{n \in \widehat{S} \cap T} \frac{o^n}{1 + \exp(o^n \cdot \left \langle \theta^\star, \mathrm{x}_n \right \rangle)}\mathrm{x}_n^\top (\htheta - \theta^\star)
\end{align*}
In order to provide a high probability bound on $T_1$, we first provide a bound on the $k$-th moment of the random vector $X = \frac{o}{1 + \exp(o \cdot \left \langle \theta^\star, \mathrm{x} \right \rangle)}\mathrm{x}$. For any unit vector $v \in \R^d$ with $\norm{v}_2 = 1$ we have,
\begin{align*}
    \E\left[ \left( \frac{o}{1 + \exp(o \cdot \left \langle \theta^\star, \mathrm{x} \right \rangle)}\right)^k \left( \mathrm{x}^\top v\right)^k\right] &\le \sqrt{\E\left[  \frac{o^{2k}}{\left(1 + \exp(o \cdot \left \langle \theta^\star, \mathrm{x} \right \rangle)\right)^{2k}} \right] } \sqrt{\E\left[ (\mathrm{x}^\top v)^{2k}\right]}\\
    &\le\sqrt{\E\left[  \frac{1}{\left(1 + \exp(o \cdot \left \langle \theta^\star, \mathrm{x} \right \rangle)\right)^{2k}} \right] }  L^{k} \le L^{k}
\end{align*}
The second inequality uses the fact that $o \in \set{-1,1}$ and $\norm{\mathrm{x}}_2 \le L$. Since $\widehat{S} \cap T$ contains uncorrupted samples, and $\abs{\widehat{S} \cap T} \ge (1-2\varepsilon) N$ we can use Corollary G.1 from \cite{ZJS22} to obtain the following result with probability at least $1-\delta$.
\begin{align*}
    \norm{\E\left[ \frac{o}{1 + \exp(o \cdot \left \langle \theta^\star, \mathrm{x} \right \rangle)} \mathrm{x}\right] - \frac{1}{\abs{\widehat{S} \cap T}} \sum_{i \in \widehat{S} \cap T} \frac{o^n}{1 + \exp(o^n \cdot \left \langle \theta^\star, \mathrm{x}_n \right \rangle)}\mathrm{x}_n}_2 \le \frac{CkL}{1-2\varepsilon}\left(\frac{(2\varepsilon)^{1-1/k}}{\delta^{1/k}} + \frac{1}{\delta} \sqrt{\frac{d}{N}}\right)
\end{align*}
Now substituting $k = \sqrt{\log(\frac{1}{2\delta \varepsilon})}$ and assuming $N \ge d/\varepsilon^2$ we obtain the following result.
\begin{align*}
    \frac{1}{\abs{\widehat{S} \cap T}} \sum_{n \in \widehat{S} \cap T} \frac{o^n}{1 + \exp(o^n \cdot \left \langle \theta^\star, \mathrm{x}_n \right \rangle)}\mathrm{x}_n = \E\left[ \frac{o}{1 + \exp(o \cdot \left \langle \theta^\star, \mathrm{x} \right \rangle)} \mathrm{x}\right] + \Delta
\end{align*}
where
\begin{align*}
    \norm{\Delta}_2 \le \frac{ 4 \varepsilon C L}{1-2\varepsilon} \sqrt{\log\left(\frac{1}{2\delta \varepsilon}\right)} \left( \frac{1}{2\delta\varepsilon}\right)^{1/\sqrt{\log\left(\frac{1}{2\delta \varepsilon}\right)}} \le \frac{C_1 \varepsilon L}{1-2\varepsilon} e^{\sqrt{\log\left(\frac{1}{2\delta \varepsilon}\right)}}
\end{align*}
for some constant $C_1 > 0$. This lets us derive the following upper bound on $T_1$.
\begin{align*}
    T_1 &= -\frac{\abs{\widehat{S}\cap T}}{N}\left(\E\left[ \frac{o}{1 + \exp(o \cdot \left \langle \theta^\star, \mathrm{x} \right \rangle)} \mathrm{x}\right] + \Delta \right)^\top \left(\widehat{\theta} - \theta^\star \right)\\
    &= -\frac{\abs{\widehat{S}\cap T}}{N}\left(\E_{\mathrm{x},o}\left[ \nabla_\theta \log \Pro_{\theta^\star}(o\mid \mathrm{x})\right] + \Delta \right)^\top \left(\widehat{\theta} - \theta^\star \right)\\
    &= -\frac{\abs{\widehat{S}\cap T}}{N} \Delta^\top  \left(\widehat{\theta} - \theta^\star \right) \\
    &\le \frac{\abs{\widehat{S}\cap T}}{N} \norm{\Delta}_2 \norm{\widehat{\theta}- \theta^\star}_2\\
    &\le C_1 \varepsilon L \exp\left( \sqrt{\log\left( \frac{1}{2\delta \varepsilon}\right)}\right) \norm{\htheta - \theta^\star}_2
\end{align*}
The second equality uses that the fact $\theta^\star$ optimizes the population logistic loss and hence the derivative is zero. The last inequality uses that $\abs{\widehat{S} \cap T} \ge (1-2\varepsilon) N$.

We now bound the term $T_2$ defined in \cref{defn:terms-T1-T2}. We use assumption~\eqref{asn:uniform_coverage} to show that the function $\frac{1}{N} \sum_{n \in \widehat{S} \cap T} \nabla_\theta \log \Pro_{\theta}(o^n \mid \mathrm{x}_n)$ is strongly concave in $\theta$. Indeed from the definition of $\Pro_{\theta}(o \mid \mathrm{x})$ we have the following result.
\begin{align*}
  \frac{1}{N} \sum_{n \in \widehat{S} \cap T}   \nabla_\theta^2 \log \Pro_{\theta}(o^n \mid \mathrm{x}_n) &= \frac{1}{N} \sum_{n \in \widehat{S} \cap T} - \frac{\exp(o^n \left \langle \theta, \mathrm{x} \right \rangle)}{(1 + \exp(o^n \left \langle \theta, \mathrm{x} \right \rangle))^2} \mathrm{x}_n \mathrm{x}_n^\top \\
  &= - \frac{1}{N} \sum_{n \in \widehat{S} \cap T} \frac{1}{\left( \exp(-o^n \left \langle \theta, \mathrm{x} \right \rangle / 2) + \exp(o^n \left \langle \theta, \mathrm{x} \right \rangle / 2)\right)^2} \mathrm{x}_n \mathrm{x}_n^\top\\
  &\preccurlyeq - \frac{\exp(-2L)}{4N} \sum_{n \in \widehat{S} \cap T}  \mathrm{x}_n \mathrm{x}_n^\top\\
  &= - \frac{e^{-2L}}{4N} \left(\sum_{n = 1}^N \widetilde{\mathrm{x}}_n \widetilde{\mathrm{x}}_n^\top - \sum_{n \in \widetilde{D}_2\setminus ( \widehat{S} \cap T)} \widetilde{\mathrm{x}}_n \widetilde{\mathrm{x}}_n^\top \right)\\
  &\preccurlyeq -\frac{e^{-2L}}{4} \E\left[ \widetilde{\mathrm{x}} \widetilde{\mathrm{x}}^\top \right] + c_1 e^{-2L} L^2 \sqrt{\frac{d + \log(1/\delta)}{N}} \cdot \Identity_d + \frac{e^{-2L}}{4N} \sum_{n \in \widetilde{D}_2 \setminus ( \widehat{S} \cap T)} \widetilde{\mathrm{x}}_n \widetilde{\mathrm{x}}_n^\top
\end{align*}

The first inequality follows from the observation that $\abs{\left \langle \theta, \mx \right \rangle }\le L$ and $e^u + e^{-u} \le 2 \exp(L)$. The last inequality uses the concentration bound of a sample covariance matrix (lemma \ref{lem:concentration-of-covariance}). For the third term in the last upper bound, note that $\abs{\widehat{S} \cap T} \ge 1-2\varepsilon N$ and the $L_2$-norm of an original uncorrupted feature (i.e. $\widetilde{\mathrm{x}}_n$) is bounded by $L$. This implies that the last term is at most $ \varepsilon L \exp(-2L)/2$. Now using assumption~\eqref{asn:uniform_coverage} and choosing $N \ge \frac{4 c_1^2 L^3}{\varepsilon^2}\left( d + \log(1/\delta)\right)$ we obtain the following upper bound.
\begin{align*}
    \frac{1}{N} \sum_{n \in \widehat{S} \cap T}   \nabla_\theta^2 \log \Pro_{\theta}(o^n \mid \mathrm{x}_n) \preccurlyeq - \left( \frac{\xi}{4} - \varepsilon \right) L \exp(-2L) \cdot \Identity
\end{align*}
Therefore, we get the following upper bound.
\begin{align*}
    T_2 := \frac{1}{N} \sum_{n \in \widehat{S} \cap T} \left(\nabla_\theta \log \mathbb{P}_{{\theta}^\star }(o^n \mid \mathrm{x}_n) - \nabla_\theta \log \mathbb{P}_{\widehat{\theta} }(o^n \mid \mathrm{x}_n) \right)^\top \left( \htheta - \theta^\star \right) \le - \left( \frac{\xi}{4} - \varepsilon \right) L e^{-2L} \norm{\widehat{\theta} - \theta^\star}_2^2
\end{align*}
This gives us the following upper bound on the right hand side of \cref{eq:lhs-rhs-sepration}.
\begin{align}\label{eq:ubd}
     - \left( \frac{\xi}{4} - \varepsilon \right) L e^{-2L} \norm{\widehat{\theta} - \theta^\star}_2^2 + \left(2 +  C_1 \exp\left( \sqrt{\log\left(\frac{1}{2\delta \varepsilon} \right)}\right)\right) \varepsilon L \norm{\widehat{\theta}- \theta^\star}_2
\end{align}

We now provide a lower bound on the left hand side of \cref{eq:lhs-rhs-sepration}. From the definition of $\Pro_\theta(o \mid \mathrm{x})$ we obtain the following identity.
\begin{align}
&\frac{1}{N} \sum_{n \in \widehat{S} \cap E} \nabla_\theta \log \Pro_{\widehat{\theta}}(o^n \mid \mathrm{x}_n)^\top \left( \widehat{\theta} - \theta^\star \right) = \frac{1}{N} \sum_{n \in \widehat{S} \cap E} \frac{o^n}{1 + \exp\left( o^n \cdot \left \langle \htheta, \mathrm{x}_n\right \rangle\right)} \mathrm{x}_n^\top \left( \widehat{\theta} - \theta^\star \right)\nonumber \\
&= \frac{1}{N} \sum_{n \in \widehat{S} \cap E} \left( 1 - \frac{1}{1 + \exp\left( -o^n \cdot \left \langle \htheta, \mathrm{x}_n\right \rangle\right)} \right) o^n \cdot  \mathrm{x}_n^\top \left( \widehat{\theta} - \theta^\star \right) \nonumber \\
&= \frac{1}{N} \sum_{n \in \widehat{S} \cap E} \left( 1 - \Pro_{\htheta}(o^n \mid \mathrm{x}_n) \right) o^n \cdot  \mathrm{x}_n^\top \left( \widehat{\theta} - \theta^\star \right)\nonumber \\
&\ge -\frac{1}{N} \sqrt{\sum_{n \in \widehat{S} \cap E} (1-\Pro_{\widehat{\theta}}(o^n|\mx_n) o^n} \sqrt{\sum_{n \in \widehat{S} \cap E} (\mx_n^\top (\widehat{\theta} - \theta^\star))^2}\nonumber \\
&\ge -\sqrt{\varepsilon}\norm{\widehat{\theta} - \theta^\star}_2 \sqrt{\norm{\frac{1}{N}\sum_{n \in \widehat{S} \cap E} \mx_n \mx_n^\top}_2} \label{eq:last-lower-bound}
\end{align}

The last inequality uses $\abs{\widehat{S} \cap E} \le \varepsilon N$. We now use resilience property to bound the norm of the matrix $1/N \cdot \sum_{n \in \widehat{S} \cap E} \mx_n \mx_n^\top$. Recall that $\mx_n = \widehat{\Sigma}^{1/2} z_n$ where the set $\widetilde{\calD}_2 = \set{z_1,\ldots,z_N}$ is a stable set (\Cref{lem:stability-whitened-covariates}). This implies, 
$$\norm{\frac{1}{\abs{\widehat{S}}} \sum_{n \in \widehat{S}} z_n z_n^\top - \Identity}_2 \le O(\sigma^2 \varepsilon \norm{\widehat{\Sigma}^{-1/2}}_2^2\log(1/\varepsilon)) .$$
This also implies that the set $\widehat{S}$\footnote{With slight abuse of notation we write $\widehat{S} = \set{z_1,\ldots,z_{\abs{\widehat{S}}}}$ whereas it should be $\widehat{S} = \set{ \widehat{\Sigma}^{1/2}z_1,\ldots,\widehat{\Sigma}^{1/2}z_{\abs{\widehat{S}}}}.$ } satisfies the necessary conditions for Corollary D.3 of \cite{ADKS22}. Therefore,
\begin{align*}
&\norm{\frac{1}{N} \sum_{n \in \widehat{S} \cap E} \mx_n \mx_n^\top}_2 = \norm{\frac{1}{N} \sum_{n \in \widehat{S} \cap E} \widehat{\Sigma}^{1/2} z_n z_n^\top \widehat{\Sigma}^{1/2} }_2 \\
\le &\norm{\widehat{\Sigma}^{1/2}}_2^2 \norm{\frac{1}{N} \sum_{n \in \widehat{S} \cap E} z_n z_n^\top}_2 \le O\left(\sigma^2 \norm{\widehat{\Sigma}^{-1/2}}_2^2 \norm{\widehat{\Sigma}^{1/2}}_2^2 \varepsilon \log(1/\varepsilon) \right)
\end{align*}
Substituting this bound in \cref{eq:last-lower-bound} we obtain the following lower bound.
$$
\frac{1}{N} \sum_{n \in \widehat{S} \cap E} \nabla_\theta \log \Pro_{\widehat{\theta}}(o^n \mid \mathrm{x}_n)^\top \left( \widehat{\theta} - \theta^\star \right) \ge - \frac{L}{\xi} \varepsilon \sqrt{\log(1/\varepsilon)} \norm{\theta^\star - \widehat{\theta}}_2
$$
Now combining this lower bound with the upper bound established in \cref{eq:ubd} we can obtain a bound on $\norm{\htheta - \theta^\star}_2$.
\begin{align*}
    &-\frac{\varepsilon L}{\xi} \sqrt{\log(1/\varepsilon)} \norm{\htheta - \theta^\star}_2 \le - \left( \frac{\xi}{4} - \varepsilon \right) L e^{-2L} \norm{\widehat{\theta} - \theta^\star}_2^2 + \left(2 +  C_1 \exp\left( \sqrt{\log\left(\frac{1}{2\delta \varepsilon} \right)}\right)\right) \varepsilon L \norm{\widehat{\theta}- \theta^\star}_2\\
    &\Rightarrow \norm{\htheta - \theta^\star}_2 \le \frac{3/\xi + C_1 \exp\left( \sqrt{\log\left(\frac{1}{2\delta \varepsilon} \right)}\right)}{\xi/4 - \varepsilon} \cdot \varepsilon \cdot e^{2L}
\end{align*}
\end{proof}

\subsection{Stability Analysis}

\begin{theorem}
    [Theorem 5.5 of \cite{diakonikolas2025sos}]\label{thm:robust_covariance_estimation} 
    There exist absolute constants $c,c'<1/2$ such that the following holds. Let $\varepsilon\in(0,c)$ and let $P$ be an $s$-hypercontractive (see Definition 1.7 of \cite{diakonikolas2025sos}) sub-Gaussian distribution over $\mathbb{R}^d$ with mean $\mu$ and covariance $\Sigma$. Let $S$ be an $\varepsilon$-corrupted set of samples from $P$ with $|S|=m$. Fix any $t\in\mathbb{N}$ such that $t=2^j$, for some $j\in\mathbb{N}$, and let $st\varepsilon^{1-2/t}< c'$. If $m\geq \Omega(\textnormal{poly}(d^t,1/\varepsilon)$, there is an algorithm that (i) takes as input $S,t,\varepsilon,s$, (ii) runs in $(nd)^{\textnormal{poly}(t)}$ time, and (iii) outputs $\widehat{\Sigma}$ such that
    \begin{align*}
        (1-\delta)\Sigma \preceq \widehat{\Sigma} \preceq (1+\delta) \Sigma, \; \;\textnormal{for} \;\; \delta \leq st\varepsilon^{1-\frac{2}{t}}~,
    \end{align*}
    where $A\preceq B$ implies that $B-A$ is positive semi-definite.
\end{theorem}
Letting $t=\log(1/\varepsilon)$, the above result implies:
\begin{align}
    \norm{\widehat{\Sigma}-\Sigma}_2 \leq O\left( \varepsilon \log\left(\frac{1}{\varepsilon}\right)\right),
\end{align}
where $\norm{\cdot}_2$ denotes the operator norm for matrices. Let us denote by $\texttt{RobCovEst}$ the oracle which yields the result above. 

Let us recall the definition of stability from \cite{diakonikolas2023algorithmic} (Definition 2.1). 

\begin{definition}
    Fix $0<\varepsilon<1/2$ and $\delta \ge \varepsilon$. A finite set $S \subset \R^d$ is $(\varepsilon,\delta)$-stable with respect to a vector $\mu$ if for every unit vector $v \in \R^d$ and every $S' \subseteq S$ with $\abs{S'} \ge (1-\varepsilon) \abs{S}$, the following conditions hold:
    \begin{enumerate}
        \item $\abs{\frac{1}{\abs{S'}} \sum_{x \in S'} v^\top (x-\mu)} \le \delta$.
        \item $\abs{\frac{1}{\abs{S'}} \sum_{x \in S'} (v^\top (x-\mu))^2 - 1} \le \delta^2/\varepsilon$.
    \end{enumerate}
\end{definition}
Suppose $x_1,\ldots,x_N$ are drawn from a $\sigma$-sub-Gaussian distribution with mean $\mu$ and covariance $\Sigma$ with smallest eigenvalue at least  $\xi \ge \Omega(\varepsilon \log(1/\varepsilon))$. Let $\widehat{\Sigma}$ be an estimate of $\Sigma$ that satisfies the following bound.
\begin{equation}\label{eq:robust-covariance-estimation}
(1-c\cdot \varepsilon \log(1/\varepsilon)) \Sigma \preccurlyeq \widehat{\Sigma} \preccurlyeq (1+ c \cdot \varepsilon \log(1/\varepsilon))
\end{equation}
for some constant $c>0$. Then $\widehat{\Sigma}$ is invertible.
We construct the set of whitened covariates $S = \set{\widehat{\Sigma}^{-1/2}x_1,\ldots, \widehat{\Sigma}^{-1/2}x_N}$. Then we claim that as long as $N\ge O(\textrm{poly}(d)/\varepsilon^2)$, the set $S$ is $(\varepsilon, O(\varepsilon \sqrt{\log(1/\varepsilon)}))$-stable with respect to the vector $\widehat{\Sigma}^{-1/2} \mu$. The proof is similar to the proof of proposition 2.3 of \cite{diakonikolas2023algorithmic}. We provide the proof here for completeness.

\begin{lemma}\label{lem:stability-whitened-covariates}
    Suppose $x_1,\ldots,x_N$ are drawn from a $\sigma$-sub-Gaussian distribution with mean $\mu$ and covariance $\Sigma$. Let $\widehat{\Sigma}$ be an estimate of $\Sigma$ satisfying
    $$
    (1-c \cdot \varepsilon \log(1/\varepsilon)) \Sigma \preccurlyeq \widehat{\Sigma} \preccurlyeq (1 + c \cdot \varepsilon \log(1/\varepsilon)) \Sigma.
    $$
    If $\Sigma \succcurlyeq \xi \Identity \succcurlyeq O(\varepsilon \log(1/\varepsilon)) \Identity$ and $N \ge O\left( \frac{d^3 \xi}{\varepsilon^2} \right)$ then the set $S = \set{\widehat{\Sigma}^{-1/2} x_1, \ldots, \widehat{\Sigma}^{-1/2} x_N}$ is $(\varepsilon, \varepsilon \sigma  \norm{\widehat{\Sigma}^{-1/2}}_2 \sqrt{\log(1/\varepsilon)  })$-stable with respect to the vector $\widehat{\Sigma}^{-1/2} \mu$.
\end{lemma}
\begin{proof}
First, observe that $\E[\widehat{\Sigma}^{-1/2}x] = \widehat{\Sigma}^{-1/2} \mu$. $\text{Cov}(\widehat{\Sigma}^{-1/2}x) = \widehat{\Sigma}^{-1/2} \text{Cov}(X) \widehat{\Sigma}^{-1/2} = \widehat{\Sigma}^{-1/2} \Sigma \widehat{\Sigma}^{-1/2}$. This implies
$$
\frac{1}{1+c\cdot \varepsilon \log(1/\varepsilon)} \Identity \preccurlyeq \text{Cov}(\widehat{\Sigma}^{-1/2}X) \preccurlyeq \frac{1}{1-c\cdot \varepsilon \log(1/\varepsilon)} \Identity
$$
and for $\varepsilon < 1/2$ we have, $\norm{\text{Cov}(\widehat{\Sigma}^{-1/2}X) - \Identity}_2 \le c\cdot \varepsilon \log(1/\varepsilon)$.

We will write $x\sim S$ to denote the uniformly at random sampling from the set $S$. Note that,
\begin{equation}\label{eq:integral-mean}
\frac{1}{\abs{S}} \sum_{x \in S} v ^\top (x - \widehat\Sigma^{-1/2}\mu) = \int_{0}^\infty \Pro_{x \sim S}\left(v ^\top (x - \widehat\Sigma^{-1/2}\mu) > t\right) dt - \int_{-\infty}^{0} \Pro_{x \sim S}\left(v ^\top (x - \widehat\Sigma^{-1/2}\mu) < t\right) dt
\end{equation}
Each $x \in S$ is drawn from a sub-Gaussian distribution with parameter $\sigma \cdot \norm{\widehat{\Sigma}^{-1/2}}_2 $.
Therefore, with high probability $\norm{x_i-\widehat{\Sigma}^{-1/2}\mu}_2 \le \underbrace{O\left( {\sigma} \norm{\widehat{\Sigma}^{-1/2}}_2 \sqrt{d \log(dN)}\right)}_{:=\mathfrak{U}}$ for each $i \in [N]$.
This lets us bound the range of the variable $t$ in \cref{eq:integral-mean}.
\begin{equation}\label{eq:bdd-integral-mean}
\frac{1}{\abs{S}} \sum_{x \in S} v ^\top (x - \widehat\Sigma^{-1/2}\mu) = \int_{0}^{\mathfrak{U}} \Pro_{x \sim S}\left(v ^\top (x - \widehat\Sigma^{-1/2}\mu) > t\right) dt - \int_{-\mathfrak{U}}^{0} \Pro_{x \sim S}\left(v ^\top (x - \widehat\Sigma^{-1/2}\mu) < t\right) dt
\end{equation}
Now, we use two facts. First, for any subset $S' \subseteq S$ with $\abs{S'} \ge (1-\varepsilon) \abs{S}$ the following holds for any unit vector $v \in \R^d$ and $t \in \R$.
$$
\abs{\Pro_{x \sim S}(v^\top x > t) - \Pro_{x \sim S'}(v^\top x > t)} \le \min \set{\Pro_{x \sim S}(v^\top x > t), O(\varepsilon)}
$$
Second, by a standard application of VC-inequality, with high probability, for every unit vector $v \in \R^d$ and $t \in \R$, we have
$$
\abs{\Pro_{x \sim S}(v^\top x > t) - \Pro_{x \sim D}(v^\top x > t)} \le \eta
$$
as long as $N \ge O(d/\eta^2)$.  
\begin{align*}
   &\abs{\frac{1}{\abs{S}} \sum_{x \in S} v^\top (x - \widehat\Sigma^{-1/2}\mu) - \frac{1}{\abs{S'}} \sum_{x \in S'} v^\top (x - \widehat\Sigma^{-1/2}\mu)}\\
    \le &\int_{-\mathfrak{U}}^{\mathfrak{U}} \min\left\{ \Pro_{x \sim S}\left(v ^\top (x - \widehat\Sigma^{-1/2}\mu) > t\right), O(\varepsilon) \right\} dt\\
    \le &\int_{-\mathfrak{U}}^{\mathfrak{U}} \min\left\{ \Pro_{x \sim D}\left(v ^\top (x - \widehat\Sigma^{-1/2}\mu) > t\right) + O(\eta), O(\varepsilon) \right\} dt
\end{align*}
Now $v^\top X$ is $\sigma \norm{\widehat{\Sigma}^{-1/2}}_2$-subgaussian with mean $v^\top \widehat{\Sigma}^{-1/2}\mu$. Therefore, $\Pro_{x \sim D}\left( v^\top(x - \widehat{\Sigma}^{-1/2}\mu) > t\right) \le O(\exp(-t^2/(\sigma^2\norm{\widehat{\Sigma}^{-1/2}}_2^2))$. Substituting this bound above, we obtain the following upper bound.
\begin{align*}
    &\abs{\frac{1}{\abs{S}} \sum_{x \in S} v^\top (x - \widehat\Sigma^{-1/2}\mu) - \frac{1}{\abs{S'}} \sum_{x \in S'} v^\top (x - \widehat\Sigma^{-1/2}\mu)}\\
    \le &\int_{-\mathfrak{U}}^{\mathfrak{U}} \min\left\{ O\left(\exp\left(-\frac{t^2}{\sigma^2\norm{\widehat{\Sigma}^{-1/2}}_2^2}\right)\right) + O(\eta), O(\varepsilon) \right\} dt\\
    \le &O(\eta \cdot \mathfrak{U}) + O\left(\varepsilon \cdot \sigma \norm{\widehat{\Sigma}^{-1/2}}_2 \sqrt{ {\log(1/\varepsilon) } }\right) + \int_{\abs{t}\ge \sigma \norm{\widehat{\Sigma}^{-1/2}}_2 \sqrt{{\log(1/\varepsilon) } }} O\left(\exp\left(-\frac{t^2}{\sigma^2\norm{\widehat{\Sigma}^{-1/2}}_2^2}\right)\right) dt\\
    \le &O\left( \eta \cdot  {\sigma} \norm{\widehat{\Sigma}^{-1/2}}_2\sqrt{d \log(dN)}\right) + O\left(\varepsilon \cdot \sigma \norm{\widehat{\Sigma}^{-1/2}}_2 \sqrt{ {\log(1/\varepsilon) } }\right) 
\end{align*}
It can be verified that if $\eta < \varepsilon / d$ i.e. the number of samples $N \ge O(d^3/\varepsilon^2)$ the first term above is dominated by the second term, and we obtain the following upper bound.
\begin{align*}
    &\abs{  \frac{1}{\abs{S'}} \sum_{x \in S'} v^\top (x - \widehat\Sigma^{-1/2}\mu)} \le \abs{\frac{1}{\abs{S}} \sum_{x \in S} v^\top (x - \widehat\Sigma^{-1/2}\mu)} + O\left(\varepsilon \cdot \sigma \norm{\widehat{\Sigma}^{-1/2}}_2 \sqrt{ {\log(1/\varepsilon) } }\right) 
\end{align*}
For a $\sigma$-SubGaussian distribution, with high probability
$$
\abs{\frac{1}{\abs{S}} \sum_{x \in S} v^\top (x - \widehat\Sigma^{-1/2}\mu)} \le \norm{\frac{1}{\abs{S}} \sum_{x \in S} x - \widehat{\Sigma}^{-1/2} \mu }_2 \le O\left(\sigma \norm{\widehat{\Sigma}^{-1/2}}_2 \sqrt{\frac{\log d}{N}}\right)
$$
Therefore, as long as $N \ge O\left( \max\set{d^3/\varepsilon^2, \log d/\log(1/\varepsilon)}\right)$ we obtain,
\begin{align*}
    \abs{\frac{1}{\abs{S'}} \sum_{x \in S'} v^\top (x - \widehat\Sigma^{-1/2}\mu)} &\le O\left(\varepsilon \cdot \sigma \norm{\widehat{\Sigma}^{-1/2}}_2 \sqrt{ {\log(1/\varepsilon) } }\right).
\end{align*}

We now turn to proving the second inequality for stability. Note that,
$$
\frac{1}{\abs{S}}\sum_{x \in S} (v^\top (x - \widehat{\Sigma}^{-1/2}\mu))^2 = \int_0^\infty 2 t \cdot \Pro_{x \sim S}\left( \abs{v^\top(x - \widehat{\Sigma}^{-1/2}\mu)} > t\right) dt
$$
Now following an argument same as earlier, we can establish the following bound for a $(1-\varepsilon)$-dense subset $S'$ of $S$.
\begin{align*}
   &\abs{\frac{1}{\abs{S}} \sum_{x \in S} \left( v^\top (x - \widehat\Sigma^{-1/2}\mu) \right)^2 - \frac{1}{\abs{S'}} \sum_{x \in S'} \left( v^\top (x - \widehat\Sigma^{-1/2}\mu)\right)^2 }\\
    \le &\int_{0}^{\mathfrak{U}} 2t \cdot \min\left\{ \Pro_{x \sim S}\left(\abs{v ^\top (x - \widehat\Sigma^{-1/2}\mu)} > t\right), O(\varepsilon) \right\} dt\\
    \le &\int_{0}^{\mathfrak{U}} 2t \cdot \min\left\{ \Pro_{x \sim D}\left(\abs{v ^\top (x - \widehat\Sigma^{-1/2}\mu)} > t\right) + O(\eta), O(\varepsilon) \right\} dt\\
    \le &O(\eta \cdot \mathfrak{U}^2) + \int_{0}^{\mathfrak{U}} 2t \cdot \min\left\{ O\left(\exp\left(-\frac{t^2}{\sigma^2\norm{\widehat{\Sigma}^{-1/2}}_2^2}\right)\right), O(\varepsilon) \right\} dt\\
    \le &O(\eta \cdot \mathfrak{U}^2) + \int_0^{\sigma  \norm{\widehat{\Sigma}^{-1/2}}_2  \sqrt{ {\log(1/\varepsilon)} }} 2t\cdot O(\varepsilon) dt + \int_{\sigma \norm{\widehat{\Sigma}^{-1/2}}_2 \sqrt{ {\log(1/\varepsilon)} }}^{\mathfrak{U}} 2t \cdot O(\exp(-t^2  /\sigma^2 \norm{\widehat{\Sigma}^{-1/2}}_2^2 )) dt\\
    \le &O(\eta \cdot \mathfrak{U}^2) + O\left( \varepsilon \cdot \norm{\widehat{\Sigma}^{-1/2}}_2^2 \cdot {\sigma^2 \log(1/\varepsilon)} \right) \\
\end{align*}
For the choice of $\eta < \varepsilon/(d\log d)$ it can be checked that the upper bound is at most $O(\varepsilon \sigma^2 \norm{\widehat{\Sigma}^{-1/2}}_2^2\cdot \log(1/\varepsilon)  )$. Therefore, we have established that,
$$
\abs{\frac{1}{\abs{S'}} \sum_{x \in S'} \left( v^\top (x - \widehat\Sigma^{-1/2}\mu)\right)^2 - 1} \le \abs{\frac{1}{\abs{S}} \sum_{x \in S} \left( v^\top (x - \widehat\Sigma^{-1/2}\mu) \right)^2 - 1} + O\left( \varepsilon \sigma^2 \norm{\widehat{\Sigma}^{-1/2}}_2^2\cdot \log(1/\varepsilon) \right).
$$
We now bound the second term above. Since $v$ is a unit vector $v^\top v = 1$.
\begin{align*}
    \abs{\frac{1}{\abs{S}} \sum_{x \in S} \left( v^\top (x - \widehat\Sigma^{-1/2}\mu) \right)^2 - 1} = &\abs{v^\top \left( \frac{1}{\abs{S}} \sum_{x \in S} (x - \widehat{\Sigma}^{-1/2}\mu) (x-\widehat{\Sigma}^{-1/2}\mu)^\top - \Identity\right) v}\\
    \le &\norm{\frac{1}{\abs{S}} \sum_{x \in S} (x - \widehat{\Sigma}^{-1/2}\mu) (x-\widehat{\Sigma}^{-1/2}\mu)^\top - \Identity}_2\\
\end{align*}
By Theorem 4.7.1 of \cite{Vershynin18}, the error in estimation of covariance matrix can be bounded with high probability, i.e.
$$
\norm{\frac{1}{\abs{S}} \sum_{x \in S} (x - \widehat{\Sigma}^{-1/2}\mu) (x-\widehat{\Sigma}^{-1/2}\mu)^\top - \textrm{Cov}(\widehat{\Sigma}^{-1/2}X)}_2 \le O\left( \sqrt{\frac{d}{N}} \norm{\textrm{Cov}(\widehat{\Sigma}^{-1/2}X)}_2\right)
$$
holds with high probability. As $\text{Cov}(\widehat{\Sigma}^{-1/2}X) = \widehat{\Sigma}^{-1/2} \Sigma \widehat{\Sigma}^{-1/2} \succcurlyeq (1+c_1 \cdot \varepsilon \log(1/\varepsilon))^{-1} \Identity$, as long as $N \ge d /\varepsilon^2$ we have,
$$
\norm{\frac{1}{\abs{S}} \sum_{x \in S} (x - \widehat{\Sigma}^{-1/2}\mu) (x-\widehat{\Sigma}^{-1/2}\mu)^\top - \widehat{\Sigma}^{-1/2} \Sigma \widehat{\Sigma}^{-1/2}}_2 \le O(\varepsilon \log(1/\varepsilon)).
$$
Substituting this upper bound gives us,
\begin{align*}
    \abs{\frac{1}{\abs{S'}} \sum_{x \in S'} \left( v^\top (x - \widehat\Sigma^{-1/2}\mu)\right)^2 - 1} \le &O(\sigma^2 \varepsilon \log(1/\varepsilon)/\xi) + \norm{\Identity - \widehat{\Sigma}^{-1/2} \Sigma \widehat{\Sigma}^{-1/2}}_2 \\
    \le &O\left(\sigma^2 \varepsilon \norm{\widehat{\Sigma}^{-1/2}}_2^2\log(1/\varepsilon) \right).
\end{align*}
\end{proof}

\section{\texorpdfstring{Missing Proofs from Section~\ref{sec:low-relative-condition-number}}{Missing Proofs from Section~(4)}}
Here we state a more general version of \Cref{lem:diff-log-likelihood}. Let us write $\mathbb{P}_\theta(o \mid \mx) = \frac{1}{1 + \exp(-o\cdot \mx^\top \theta)}$. We will also write ${\theta}_N^\star$ to denote the parameter that maximizes empirical log-likelihood i.e.
$$
{\theta}_N^\star \in \argmax_{\theta: \norm{\theta}_2 \le 1} \frac{1}{N}\sum_{n} \log \mathbb{P}_{\theta}(o^n \mid x_n)
$$

\begin{lemma}\label{lem:diff-log-likelihood-new}
Suppose that  $\norm{\theta}_2 \le B$ for any $\theta  \in \Theta_B$, $\norm{\phi(\tau)}_2 \le L$ for any trajectory $\tau \in \calT$, and $\log \mathbb{P}_\theta(\cdot)$ is a concave function of $\theta$. Then   with probability at least $1-\delta$, we have
    $$
    \frac{1}{N} \sum_{n=1}^N \log \left( \frac{\mathbb{P}_{\tilde{\theta}}(o^n \mid \mx_n)}{\mathbb{P}_{\theta^\star}(o^n \mid \mx_n)}\right) \le 6\varepsilon LB + c \cdot \frac{d}{N} \log\left( \frac{LN}{\delta}\right)
    $$
    for $\tilde{\theta} = \widehat{\theta}$ or $\theta^\star_N$. Here $c > 0$ is a universal constant.
\end{lemma}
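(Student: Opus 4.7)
The plan is to split the empirical sum according to the clean index set $T = [N] \setminus E$ (with $|T| \ge (1-\varepsilon)N$) and the corrupted index set $E$ (with $|E| \le \varepsilon N$):
\[
\frac{1}{N}\sum_{n=1}^N \log \frac{\mathbb{P}_{\tilde{\theta}}(o^n \mid \mx_n)}{\mathbb{P}_{\theta^\star}(o^n \mid \mx_n)} \;=\; \frac{1}{N}\sum_{n\in T}\log \frac{\mathbb{P}_{\tilde{\theta}}}{\mathbb{P}_{\theta^\star}} \;+\; \frac{1}{N}\sum_{n\in E}\log \frac{\mathbb{P}_{\tilde{\theta}}}{\mathbb{P}_{\theta^\star}}.
\]
For the corrupted piece I will use the deterministic bound $|\log \mathbb{P}_\theta(o \mid \mx)| = |\log(1+e^{-o\cdot \theta^\top \mx})| \le \log 2 + |\theta^\top \mx| \le \log 2 + LB \le 3LB$, so each log-ratio is bounded by $6LB$ in absolute value and the corrupted contribution is at most $6\varepsilon LB$, which is the first summand of the claimed bound.

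The remaining task is to show that the clean-sample average is at most $c\,\frac{d}{N}\log(LN/\delta)$, uniformly over data-dependent $\tilde{\theta} \in \Theta_B$. The starting point is the Gibbs inequality at the population level,
\[
\mathbb{E}_{(\mx,o)\sim \mathbb{P}_{\theta^\star}}\!\left[\log \mathbb{P}_\theta(o\mid \mx) - \log \mathbb{P}_{\theta^\star}(o\mid \mx)\right] = -\mathrm{KL}(\mathbb{P}_{\theta^\star}\|\mathbb{P}_\theta) \le 0,
\]
which holds for every $\theta \in \Theta_B$. To transfer this to the empirical average on $T$ uniformly in $\tilde{\theta}$, I will use a covering argument: build an $\epsilon'$-net of the ball $\Theta_B$ of cardinality $(3BL/\epsilon')^d$, observe that $\theta \mapsto \log \mathbb{P}_\theta(o\mid \mx)$ is $L$-Lipschitz (since $\|\nabla_\theta \log \mathbb{P}_\theta(o\mid \mx)\|_2 \le \|\mx\|_2 \le L$), and apply a Bernstein-type concentration at each net point using the standard log-likelihood variance bound $\mathrm{Var}(\log(\mathbb{P}_\theta/\mathbb{P}_{\theta^\star})) \lesssim \mathrm{KL}(\mathbb{P}_{\theta^\star}\|\mathbb{P}_\theta)$. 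Self-bounding the deviation against the (non-positive) mean yields the fast rate $c\,d\log(LN/\delta)/N$ rather than the slow $\sqrt{d\log/N}$ one would get from Hoeffding. Choosing $\epsilon' = 1/N$ absorbs the net discretization error.

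Finally, the argument must be specialized to each choice of $\tilde{\theta}$. For $\tilde{\theta} = \theta^\star_N$, the uniform concentration on $T$ combined with the corrupted-sample bound already delivers the conclusion; in fact, MLE optimality is not needed for the \emph{upper} bound, only for the matching lower bound. For $\tilde{\theta} = \widehat{\theta}$ (the trimmed MLE), I apply the uniform concentration on the subset $T \cap \widehat{S}$ of at least $(1-2\varepsilon)N$ truly clean samples retained by the trimming; the at most $\varepsilon N$ clean indices in $T \setminus \widehat{S}$ that were trimmed out contribute an additional $O(\varepsilon LB)$ via the same pointwise bound, which is absorbed into the $6\varepsilon LB$ constant.

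The main obstacle is achieving the $d/N$ rate (instead of $\sqrt{d/N}$) uniformly in the data-dependent parameter $\tilde{\theta}$. This requires the classical Wong--Shen/van de Geer localization idea --- exploiting that the variance of the log-likelihood ratio is controlled by its (non-positive) mean, so a Bernstein step self-bounds the empirical deviation against a fraction of $-\mathrm{KL}(\mathbb{P}_{\theta^\star}\|\mathbb{P}_\theta)$. Carrying this through over an $\epsilon'$-net of $\Theta_B$, while also tracking the separate contributions of the trimming set $\widehat{S}$ and the adversarial set $E$, is the technical core of the argument.
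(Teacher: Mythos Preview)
Your plan is essentially sound and reaches the bound, but it takes a genuinely different route from the paper.

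\textbf{How the paper argues.} The paper does \emph{not} work with the clean/corrupted split you propose. It inserts the full-data empirical MLE $\theta^\star_N$ and decomposes
\[
\tfrac{1}{N}\sum_n \log\tfrac{\mathbb{P}_{\hat\theta}}{\mathbb{P}_{\theta^\star}}
=\tfrac{1}{N}\sum_n \log\tfrac{\mathbb{P}_{\hat\theta}}{\mathbb{P}_{\theta^\star_N}}
+\tfrac{1}{N}\sum_n \log\tfrac{\mathbb{P}_{\theta^\star_N}}{\mathbb{P}_{\theta^\star}}.
\]
For the second piece it splits into corrupted ($\le 2\varepsilon LB$) and clean, and on the clean part invokes Lemma~1 of \cite{zhan2023provable} directly. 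For the first piece it splits off the indices outside $\widehat S$ ($\le 2\varepsilon LB$) and on $\widehat S$ uses concavity of $\log\mathbb{P}_\theta$ together with the \emph{approximate stationarity} of $\hat\theta$ from the alternating-optimization lemma (\Cref{lem:apx-stationarity}), giving $\gamma\|\theta^\star_N-\hat\theta\|_2\le 4\varepsilon LB$.

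\textbf{What you do differently, and what each buys.} Your argument never passes through $\theta^\star_N$ and never uses the stationarity lemma; you rely instead on a \emph{uniform} upper bound $\tfrac{1}{N}\sum_{n\in T}\log(\mathbb{P}_\theta/\mathbb{P}_{\theta^\star})\le c\,d\log(LN/\delta)/N$ over all $\theta\in\Theta_B$. This actually proves a stronger statement (valid for every $\theta\in\Theta_B$, not just $\hat\theta$ and $\theta^\star_N$) and makes your detour through $T\cap\widehat S$ for the $\hat\theta$ case unnecessary. The paper's route, by contrast, is algorithm-aware: it exploits that $\hat\theta$ is the output of the alternating optimization.

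\textbf{Two small points to tighten.} (i) Your ``Bernstein with $\mathrm{Var}\lesssim\mathrm{KL}$'' step, for the sigmoid with $|\theta^\top\mx|\le LB$, yields $\mathrm{Var}(Z)\le 2e^{LB}\mathrm{KL}$, so the resulting constant $c$ depends on $LB$ rather than being universal as stated. The standard fix (and what underlies \cite{zhan2023provable}) is the Chernoff route: $\mathbb{E}_{\theta^\star}\big[\sqrt{\mathbb{P}_\theta/\mathbb{P}_{\theta^\star}}\,\big]\le 1$ gives $\mathbb{P}\big(\tfrac{1}{N}\sum_n\log(\mathbb{P}_\theta/\mathbb{P}_{\theta^\star})\ge t\big)\le e^{-Nt/2}$ for each fixed $\theta$, then union-bound over the $1/N$-net. (ii) Since the adversary chooses $E$ after seeing the clean data, the ``clean'' index set $T$ is itself data-dependent; cleanest is to apply the concentration to the full uncorrupted dataset $\widetilde{\mathcal D}$ (all $N$ points), then bound $\tfrac{1}{N}\sum_{n\in E}[\log\tfrac{\mathbb{P}_{\tilde\theta}}{\mathbb{P}_{\theta^\star}}(\text{obs})-\log\tfrac{\mathbb{P}_{\tilde\theta}}{\mathbb{P}_{\theta^\star}}(\text{clean})]$ by $2\varepsilon LB$.
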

\begin{proof}
First note that we can express the difference in log-likelihood as follows.
\begin{align}
    &\frac{1}{N} \sum_{n=1}^N \log  {\mathbb{P}_{\hat{\theta}}(o^n \mid \mx_n)} - \log {\mathbb{P}_{\theta^\star}(o^n \mid \mx_n)} \nonumber \\
    &= \frac{1}{N} \sum_{n=1}^N \log \left( \frac{\mathbb{P}_{\hat{\theta}}(o^n \mid \mx_n)}{\mathbb{P}_{\theta^\star_N}(o^n \mid \mx_n)}\right) + \frac{1}{N} \sum_{n=1}^N \log \left( \frac{\mathbb{P}_{{\theta}^\star_N}(o^n \mid \mx_n)}{\mathbb{P}_{\theta^\star}(o^n \mid \mx_n)}\right) \label{eq:likelihood-decomposition}
\end{align}
For linear reward functions, we can use Lemma 1 of \cite{zhan2023provable} to bound the second term. Let $T \subseteq [N]$ be the set of corrupted data points. Then we have,
\begin{align*}
    &\frac{1}{N} \sum_{n=1}^N \log \left( \frac{\mathbb{P}_{{\theta}^\star_N}(o^n \mid \mx_n)}{\mathbb{P}_{\theta^\star}(o^n \mid \mx_n)}\right)\\
    &= \frac{1}{N} \sum_{n \in T} \log \left( \frac{\mathbb{P}_{{\theta}^\star_N}(o^n \mid \mx_n)}{\mathbb{P}_{\theta^\star}(o^n \mid \mx_n)}\right) + \frac{1}{N} \sum_{n \notin T} \log \left( \frac{\mathbb{P}_{{\theta}^\star_N}(o^n \mid \mx_n)}{\mathbb{P}_{\theta^\star}(o^n \mid \mx_n)}\right)\\
    &\le \varepsilon \cdot \log \left( \frac{1 + e^{LB}}{1 - e^{-LB}}\right) + O\left( \frac{d}{(1-\varepsilon) N} \log\left( \frac{LN}{\delta}\right)\right)\\
    &\le 2\varepsilon LB +  O\left( \frac{d}{ N} \log\left( \frac{LN}{\delta}\right)\right)
\end{align*}
The first inequality uses Lemma~1 of \cite{zhan2023provable} and $\abs{T} \le \varepsilon N$. Now, we consider bounding the first term in \cref{eq:likelihood-decomposition}.
\begin{align}
&\frac{1}{N} \sum_{n=1}^N \log \left( \frac{\mathbb{P}_{\hat{\theta}}(o^n \mid \mx_n)}{\mathbb{P}_{\theta^\star_N}(o^n \mid \mx_n)}\right) \\&= \frac{1}{N} \sum_{n\notin \widehat{S}} \log \left( \frac{\mathbb{P}_{\hat{\theta}}(o^n \mid \mx_n)}{\mathbb{P}_{\theta^\star_N}(o^n \mid \mx_n)}\right)  + \frac{1}{N} \sum_{n \in \widehat{S}} \log \left( \frac{\mathbb{P}_{\hat{\theta}}(o^n \mid \mx_n)}{\mathbb{P}_{\theta^\star_N}(o^n \mid \mx_n)}\right) \nonumber \\
&\le \varepsilon \cdot \log \left( \frac{1 + e^{LB}}{1 - e^{-LB}}\right) + \frac{1}{N} \sum_{n\in \widehat{S}} \log \left( \frac{\mathbb{P}_{\hat{\theta}}(o^n \mid \mx_n)}{\mathbb{P}_{\theta^\star_N}(o^n \mid \mx_n)}\right)\nonumber\\
&\le 2 \varepsilon LB + \frac{1}{N} \sum_{n\in \widehat{S}} \log \left( \frac{\mathbb{P}_{\hat{\theta}}(o^n \mid x_n)}{\mathbb{P}_{\theta^\star_N}(o^n \mid \mx_n)}\right)\nonumber\\
&\le 2 \varepsilon LB + \frac{1}{N} \sum_{n \in \widehat{S}} \nabla_\theta \log \mathbb{P}_{\hat{\theta}}(o^n \mid \mx_n)^\top {(\theta^\star - \hat{\theta})}\nonumber\\
&\le 2 \varepsilon LB + \gamma \norm{\theta^\star_N - \widehat{\theta}}_2 \label{eq:upper-bound-likelihood-ratio}
\end{align}
The first inequality uses that the size of $\widehat{S}$ is $(1-\varepsilon)N$ and the inner product between the parameter and the feature is bounded by $LB$. The second inequality uses that $\log \mathbb{P}_\theta(o^n \mid \mx_n)$ is a concave function in $\theta$.

Lemma~\ref{lem:apx-stationarity} shows that $\gamma \le \max \set{2L \varepsilon, \frac{\varepsilon^2}{\norm{\theta^\star_N - \hat{\theta}}_2}}$. Substituting this upper bound in \cref{eq:upper-bound-likelihood-ratio} and using $\norm{\theta^\star_N}_2, \norm{\hat{\theta}}_2 \le B$ we get the following result:   $\frac{1}{N} \sum_{n=1}^N \log \left( \frac{\mathbb{P}_{\hat{\theta}}(o^n \mid \mx_n)}{\mathbb{P}_{\theta^\star_N}(o^n \mid \mx_n)}\right)  \le \max \set{4\varepsilon LB, 2\varepsilon LB + \varepsilon^2} \le 4\varepsilon LB$.
\end{proof}

\subsection{\texorpdfstring{Proof of \Cref{thm:relative-cond-number-bound}}{Proof of Theorem~(4.3)}}
\begin{proof}
    Given a reward parameter $\theta$ let $V^\star(\theta) = \max_\pi V^\pi(\theta)$ be the optimal value function with reward parameter $\theta$. We claim that $V^\star(\cdot)$ is a convex function. In order to see this, given a policy $\pi$ let $d$ be the corresponding occupancy measure i.e. $d_h(s,a) = \Pro_\pi(s_h = s, a_h = a)$. Then we can write the value function as $V^\pi(\theta) = \sum_{h,s,a} = d_h(s,a) \phi(s,a)^\top \theta = d^\top \Phi \theta$. This observation implies the following inequality.
    \begin{equation}\label{eq:ineq-le-1}
     \max_\pi V^\pi(\theta) \le  \max_d d^\top \Phi \theta
    \end{equation}
    On the other hand, given an occupancy measure $d$ one can consider the following policy.
    $$
    \pi^d_h(s,a) = \left\{ \begin{array}{cc}
        \frac{d_h(s,a)}{\sum_b d_h(s,b)} &   \textrm{ if } \sum_b d_h(s,b) > 0\\
        \frac{1}{A} & \textrm{o.w.} 
    \end{array}\right.
    $$
    Moreover, it is known that occupancy measure induced by $\pi^d = (\pi^d_1,\ldots,\pi^d_H)$ is $d$. This implies the following inequality.
    \begin{equation}\label{eq:ineq-ge-1}
     \max_\pi V^\pi(\theta) \ge  \max_d d^\top \Phi \theta
    \end{equation}
    Therefore, from equations \eqref{eq:ineq-ge-1} and \eqref{eq:ineq-le-1} we conclude that
    $$
    V^\star(\theta) =  \max_\pi V^\pi(\theta) =  \max_d d^\top \Phi \theta
    $$
    Since $V^\star(\cdot)$ is a maximum of linear functions, it is a convex function. Moreover, by lemma~\eqref{lem:value-function-lipshitzness} $V^\star(\cdot)$ is $\sqrt{Hd}$-Lipschitz. By a similar argument the function $\calR(\theta) = \E_{\tau \sim \mu_{\textrm{ref}}}\left[ \phi(\tau)^\top \theta\right]$ is $\sqrt{Hd}$-Lipshitz in $\theta$. Therefore, $V^\star(\cdot) - \calR(\cdot)$ is $2 \sqrt{Hd}$-Lipschitz function.
    
    Now observe that, algorithm~\eqref{alg:robust_freehand_unknown_P} performs a projected sub-gradient descent of the function $V^\star(\cdot) - \calR(\cdot)$ with biased zero oracle calls. In particular, since \texttt{RobRL} returns a $f(\varepsilon)$-robust estimate of the optimal value function, we are guaranteed that $\abs{\widehat{V}(\theta) - V^\star(\theta)} \le f(\varepsilon)$. Therefore, we can apply the result of \Cref{thm:proj-subgradient-zero-order} to obtain the following bound.
    $$
    V^\star(\bar{\theta}) - \calR(\bar{\theta}) - \min_\theta \left(V^\star(\theta) - \calR(\theta)\right) \le 5 \sqrt{2 f(\varepsilon)} (Hd)^{1/4}
    $$
    Note that in order to apply theorem \Cref{thm:proj-subgradient-zero-order}, we need a lower bound on the number of iterations ($T$) and the number of calls to zero-order oracle ($K$) per iteration. For linear MDP we have the maximum norm of the parameter, $D \le \sqrt{H d}$ and maximum value of the function $M \le H\sqrt{d}$. This implies the following lower bound on the number of samples.
    $$
    N \ge T \cdot K \ge \tilde{\Omega} \left( \frac{MD}{\varepsilon} \frac{M^2 d^3}{\varepsilon^2}\right) = \tilde{\Omega}\left( \frac{H^{3/2}d^5   }{\varepsilon^3}\right)
    $$
    Since $\tilde{\pi}$ is $f(\varepsilon)$-approximately optimal with respect to the reward parameter $\Bar{\theta}$ we are guaranteed that,
    \begin{equation}
    V^\star(\bar{\theta}) - \calR(\bar{\theta}) - f(\varepsilon) \le V^{\tilde{\pi}}(\bar{\theta}) - \calR(\bar{\theta}) \le V^\star(\bar{\theta}) - \calR(\bar{\theta})  + 8 \sqrt{f(\varepsilon)} (Hd)^{1/4}.
    \end{equation}
    Now using lemma~\eqref{lem:min-max-to-max-min} (i.e. $\min_\theta \max_\pi V^\pi(\theta) - \calR(\theta) = \max_\pi \min_\theta V^\pi(\theta)- \calR(\theta) $ for linear reward models) we obtain the following inequality.
    \begin{align}
    \max_\pi \min_\theta \left( V^\pi(\theta) - \calR(\theta) \right) - f(\varepsilon) &= \min_\theta \max_\pi \left( V^\pi(\theta) - \calR(\theta) \right) \le V^{\tilde{\pi}}(\bar{\theta}) \le V^\star(\bar{\theta}) - \calR(\bar{\theta}) - f(\varepsilon) \nonumber \\
    &\le V^{\tilde{\pi}}(\bar{\theta}) - \calR(\bar{\theta}) - f(\varepsilon) \nonumber \\
    &\le \max_\pi \min_\theta \left( V^\pi(\theta) - \calR(\theta)  \right) + 8 \sqrt{f(\varepsilon)} (Hd)^{1/4} \label{eq:tilde-pi-approximation}
    \end{align}
    We claim that this implies that $\tilde{\pi}$ approximately optimizes the objective $\max_\pi \min_\theta V^\pi(\theta) - \calR(\theta)$ i.e.
    \begin{equation}
        \min_{\theta} \left(V^{\tilde{\pi}}(\theta) - \calR(\theta) \right) \ge \max_\pi \min_\theta \left( V^\pi(\theta) - \calR(\theta) \right) - f(\varepsilon) - 8 \sqrt{f(\varepsilon)}(Hd)^{1/4}
    \end{equation}
    Let $(\pi^\star, \theta^\star)$ be an optimal solution of the optimization problem $\max_\pi \min_\theta V^\pi(\theta) - \calR(\theta)$. Then the observation above follows from the following set of inequalities.
    \begin{align*}
        &\min_{\theta}\left( V^{\tilde{\pi}}(\theta) - \calR(\theta) \right) - \max_{\pi} \min_{\theta} \left( V^\pi(\theta) - \calR(\theta) \right) \\
        &=  \min_{\theta}\left( V^{\tilde{\pi}}(\theta) - \calR(\theta) \right) - \min_{\theta} \left( V^{\pi^\star}(\theta) - \calR(\theta) \right)\\
        &\ge -\min_{\theta}\abs{V^{\tilde{\pi}}(\theta) - V^{\pi^\star}(\theta) }\\
        &= -\min_{\theta}\abs{\left(V^{\tilde{\pi}}(\theta) -\calR(\theta) \right) - \left( V^{\pi^\star}(\theta) - \calR(\theta) \right) }\\
        &\ge -\underbrace{\min_{\theta}\abs{\left( V^{\tilde{\pi}}(\theta) - \calR(\theta) \right) - \left(V^{\tilde{\pi}}(\bar{\theta}) - \calR(\bar{\theta}) \right) } }_{:= T_1} - \underbrace{\min_{\theta}\abs{\left( V^{\tilde{\pi}}(\bar{\theta}) - \calR(\bar{\theta}) \right) - \left( V^{\pi^\star}(\theta) - \calR(\theta) \right) } }_{:= T_2} \\
        &\ge -\abs{\left( V^{\tilde{\pi}}(\bar{\theta}) - \calR(\bar{\theta}) \right) - \left( V^{\pi^\star}(\theta^\star) - \calR(\theta^\star) \right) }\\
        &\ge - f(\varepsilon) - 8 \sqrt{f(\varepsilon)}(Hd)^{1/4}
    \end{align*}
    The first inequality follows since $\min_{\theta} V^{\pi^\star}(\theta) \le \min_\theta \abs{V^{\pi^\star}(\theta) - V^{\tilde{\pi}}(\theta) } + V^{\tilde{\pi}}(\theta) \le \min_\theta \abs{V^{\pi^\star}(\theta) - V^{\tilde{\pi}}(\theta) } + \min_\theta V^{\tilde{\pi}}(\theta)$. The third inequality follows by substituting $\theta = \bar{\theta}$ in the term $T_1$ and $\theta = \bar{\theta}$ in the term $T_2$. Finally, the last inequality uses \cref{eq:tilde-pi-approximation}. Now we can apply lemma~\eqref{lem:diff-log-likelihood} with $\eta = f(\varepsilon) + 8 \sqrt{f(\varepsilon)}(Hd)^{1/4}$ to complete the proof.
\end{proof}

\subsection{\texorpdfstring{Proof of ~\Cref{prop:bound-finite-relative-cond-number}}{Proof of Proposition~(4.5)}}
\begin{proof}
    For linear MDP, the parameter $\theta = [\theta_1; \theta_2; \ldots; \theta_H]$ and the feature of a trajectory $\tau$ is constructed by concatenating the features of $H$ state, action pairs. Therefore, $\norm{\theta}_2 \le \sqrt{Hd}$ and $\norm{\phi(\tau)}_2 \le \sqrt{H}$ for any trajectory $\tau$. So we substitute $L = \sqrt{H}$,$B=\sqrt{Hd}$, and $M \le LB = H \sqrt{d}$. 

    We will use \texttt{R-LSVI} from \cite{zhang2022corruption} as the corruption robust offline RL oracle \texttt{RobRL}. Note that if $N \ge \Omega(H \cdot \textrm{poly}(d) / \varepsilon)$ we have $f(\varepsilon) \le \tilde{O}\left( H^2 d \sqrt{\alpha \varepsilon}\right)$. Now using the upper bound provided in theorem~\eqref{thm:relative-cond-number-bound} we obtain the following bound. 

\begin{align*}
    &V^\star(\theta^\star) - V^{\tilde{\pi}}(\theta^\star) \\
    &\le O\left(  \kappa \sqrt{\alpha} \left( \sqrt{\varepsilon H} d^{1/4}  + \sqrt{\frac{d}{N} \log \left( \frac{HdN}{\delta}\right)}\right) \right)\\
    &+\tilde{O}(H^2 d \kappa \sqrt{\alpha \varepsilon}) + \tilde{O}\left( H^{5/4} d^{3/4} (\alpha \varepsilon)^{1/4}\right)
\end{align*}
Now observe that if $N \ge \Omega(H \cdot \textrm{poly}(d) / \varepsilon)$ the term $\tilde{O}(\sqrt{d/N})$ can be bounded by $O(\sqrt{\varepsilon})$. Finally, we need a lower bound of $N \ge \tilde{\Omega}\left( \frac{H^{3/2}d^5   }{\varepsilon^3}\right)$ in order to apply theorem~\eqref{thm:relative-cond-number-bound}.
\end{proof}
\begin{lemma}\label{lem:diff-policy-value}
Suppose assumption~\eqref{asn:rel_cond_number} holds, and $\sup_{p \in [0,1]} \abs{\frac{d\Phi^{-1}(p)}{dp}} \le \kappa$. Let $\pi$ be a policy so that $$\min_{\theta \in \Theta(\widehat{\calD}_1)} \left( V^{\tilde{\pi}}(\theta) - \E_{\tau \sim \mu_{\textrm{ref}}} \left[ \phi(\tau)^\top \theta \right]\right)\ge \max_\pi \min_{\theta \in \Theta(\widehat{\calD}_1)} \left( V^{{\pi}}(\theta) - \E_{\tau \sim \mu_{\textrm{ref}}} \left[ \phi(\tau)^\top \theta \right]\right) -\eta $$
then for any target policy $\pi^\dagger$, with probability at least $1-\delta$, we have
$$
V^{\pi^\dagger}(\theta^\star) - V^{\tilde{\pi}}(\theta^\star) \le c \kappa \sqrt{\alpha} \left( \sqrt{\varepsilon H} d^{1/4} + \sqrt{\frac{d}{N} \log \left( \frac{HdN}{\delta}\right)}\right) + \eta.
$$
\end{lemma}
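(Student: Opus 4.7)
The plan is to exploit the max--min near-optimality of $\tilde\pi$ over the confidence set $\Theta(\widehat\calD_1)$ by first showing that $\theta^\star$ itself lies in this set, so that $\theta=\theta^\star$ is an admissible choice in the outer max. Applying Lemma~\ref{lem:diff-log-likelihood} with $\tilde\theta=\widehat\theta$ on $\calD_1$ yields, with probability at least $1-\delta$, that
\[
\frac{2}{N}\sum_{n\in\calD_1}\!\bigl[\log\sigma(\widehat\theta^\top \mx^n) - \log\sigma((\theta^\star)^\top \mx^n)\bigr] \;\le\; \zeta,
\]
which is exactly the defining inequality of $\Theta(\widehat\calD_1)$ for $\theta^\star$ (up to the constant absorbed into $\zeta$).

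Second, since $V^\pi(\theta) = \E_{\tau\sim\pi}[\phi(\tau)]^\top\theta$ in the linear MDP, the reference term $\E_{\tau\sim\mu_\tref}[\phi(\tau)^\top\theta^\star]$ cancels in the suboptimality difference. Writing $\Delta_{\pi^\dagger} := \E_{\tau\sim\pi^\dagger}[\phi(\tau)] - \E_{\tau\sim\mu_\tref}[\phi(\tau)]$ and using the containment of $\theta^\star$ together with the hypothesis on $\tilde\pi$, I obtain
\begin{align*}
V^{\pi^\dagger}(\theta^\star) - V^{\tilde\pi}(\theta^\star) &\le \bigl(V^{\pi^\dagger}(\theta^\star) - \E_{\mu_\tref}[\phi^\top\theta^\star]\bigr) - \min_{\theta\in\Theta(\widehat\calD_1)}\!\bigl(V^{\pi^\dagger}(\theta) - \E_{\mu_\tref}[\phi^\top\theta]\bigr) + \eta \\
&= \max_{\theta\in\Theta(\widehat\calD_1)} \Delta_{\pi^\dagger}^\top(\theta^\star - \theta) + \eta.
\end{align*}

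Third, I control this maximum by combining the relative condition number with a log-likelihood-to-quadratic-form translation. By Cauchy--Schwarz, $\Delta_{\pi^\dagger}^\top(\theta^\star-\theta) \le \sqrt{(\theta^\star-\theta)^\top\Sigma^\diff_{\pi^\dagger,\mu_\tref}(\theta^\star-\theta)}$, and assumption~\ref{asn:rel_cond_number} (taking $(\pi_0,\pi_1)=(\pi^\dagger,\mu_\tref)$) upper bounds this by $\sqrt{\alpha}\,\|\theta^\star-\theta\|_{\Sigma^\diff_{\mu_0,\mu_1}}$. The $\kappa$-Lipschitzness of $\sigma^{-1}$ gives the pointwise inequality $(\sigma(\mx^\top\theta) - \sigma(\mx^\top\theta^\star))^2 \ge \kappa^{-2}(\mx^\top(\theta-\theta^\star))^2$, while a Bernoulli-KL lower bound shows that this squared sigmoid deviation is dominated by the expected log-likelihood ratio between $\theta^\star$ and $\theta$. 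Combining (i) the confidence-set constraint, giving an $O(\zeta)$ log-likelihood gap between $\theta$ and $\widehat\theta$, (ii) Lemma~\ref{lem:diff-log-likelihood}, giving an $O(\zeta)$ gap from $\widehat\theta$ to $\theta^\star$, and (iii) a matrix-Bernstein concentration of the empirical $\widehat{\Sigma}^\diff$ to $\Sigma^\diff_{\mu_0,\mu_1}$ with error $\tilde O(\sqrt{d/N})$, I get
\[
(\theta^\star-\theta)^\top\Sigma^\diff_{\mu_0,\mu_1}(\theta^\star-\theta) \;\le\; c\,\kappa^2\Bigl(\varepsilon H\sqrt{d} + \tfrac{d}{N}\log\bigl(\tfrac{HdN}{\delta}\bigr)\Bigr)
\]
uniformly over $\theta\in\Theta(\widehat\calD_1)$. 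Taking a square root, multiplying by $\sqrt{\alpha}$, and using $\sqrt{\varepsilon H\sqrt{d}} = \sqrt{\varepsilon H}\,d^{1/4}$ produces the claimed bound.

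The main obstacle is the translation in the third step, from a trimmed log-likelihood gap to a data-covariance quadratic form under $\varepsilon$-corruption: one must show that the population chi-square-type divergence still concentrates despite up to $\varepsilon N$ adversarially modified tuples. This is exactly where the $6\varepsilon H\sqrt{d}$ term in $\zeta$ enters---it bounds the worst-case contamination bias on the empirical log-ratio---and it is precisely that term which produces the $\sqrt{\varepsilon H}\,d^{1/4}$ rate after the square root. The remaining pieces (containment, the max--min expansion, Cauchy--Schwarz via the relative condition number, and matrix concentration for $\widehat{\Sigma}^\diff$) are essentially bookkeeping.
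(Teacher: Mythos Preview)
Your first four steps---containment of $\theta^\star$ in $\Theta(\widehat\calD_1)$, the max--min expansion giving $V^{\pi^\dagger}(\theta^\star)-V^{\tilde\pi}(\theta^\star)\le\max_{\theta\in\Theta(\widehat\calD_1)}\Delta_{\pi^\dagger}^\top(\theta^\star-\theta)+\eta$, the Jensen passage to $\|\theta^\star-\theta\|_{\Sigma^\diff_{\pi^\dagger,\mu_\tref}}$, and the swap to $\sqrt{\alpha}\,\|\theta^\star-\theta\|_{\Sigma^\diff_{\mu_0,\mu_1}}$ via assumption~\ref{asn:rel_cond_number}---are exactly the paper's decomposition. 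The gap is in the last translation.

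The Bernoulli--KL (Pinsker) inequality you invoke is a \emph{population} statement: it bounds $(\sigma(\mx^\top\theta)-\sigma(\mx^\top\theta^\star))^2$ by the conditional KL $\E_{o\sim P_{\theta^\star}(\cdot\mid\mx)}\bigl[\log\tfrac{P_{\theta^\star}}{P_{\theta}}(o\mid\mx)\bigr]$, and after averaging over $\mx$ you obtain the population expected log-ratio. But the confidence set only controls the \emph{empirical} log-likelihood ratio with realized labels $o^n$ on \emph{corrupted} data---a different object. Passing from one to the other uniformly over $\theta\in\Theta(\widehat\calD_1)$ is precisely the hard step; the paper imports it from Proposition~14 of~\cite{LCSJ22}, a covering-based MLE deviation bound yielding
\[
\E_{\tau_0,\tau_1}\bigl\|P_{\theta^\star}(\cdot\mid\tau_0,\tau_1)-P_{\theta}(\cdot\mid\tau_0,\tau_1)\bigr\|_1^2 \;\lesssim\; \frac{1}{N}\sum_{n}\log\frac{P_\theta}{P_{\theta^\star}}(o^n\mid\tilde\tau^{0,n},\tilde\tau^{1,n}) \;+\; \frac{\log\calN(\Theta,1/N)}{N}
\]
on the \emph{clean} sample, and then pays an explicit $O(\varepsilon H\sqrt d)$ correction for the at most $\varepsilon N$ corrupted tuples before invoking Lemma~\ref{lem:diff-log-likelihood}. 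Your ``matrix-Bernstein for $\widehat\Sigma^\diff$'' does not close this gap: it concentrates the feature covariance, not the log-likelihood, and you never actually derive a bound on the \emph{empirical} quadratic form $(\theta^\star-\theta)^\top\widehat\Sigma^\diff(\theta^\star-\theta)$ from the confidence-set constraint. If you want to avoid citing~\cite{LCSJ22}, a workable alternative is a second-order Taylor expansion of $\log\sigma$ at $\theta^\star$, which lower-bounds the clean empirical log-ratio by $-\bigl\langle\tfrac{1}{N}\sum_n\nabla_\theta\log P_{\theta^\star}(o^n\mid\mx^n),\,\theta-\theta^\star\bigr\rangle+\tfrac{1}{2\kappa}(\theta-\theta^\star)^\top\widehat\Sigma^\diff(\theta-\theta^\star)$; you would then concentrate the fixed score vector (mean zero on clean data, no covering needed) and only afterwards apply matrix-Bernstein. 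Neither this expansion nor the score-vector concentration appears in your sketch, so as written the third step does not go through.
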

\begin{proof}
    The proof follows a similar approach to the proof of theorem 1 in \cite{zhan2023provable}, except for the fact that we need to account for the approximation error $\eta$ and corrupted dataset. We will write $\calR(\theta) = \E_{\tau \sim \mu_{\textrm{ref}}} \left[ \phi(\tau)^\top \theta \right]$. Moreover, let $\theta^\dagger \in \argmin_{\theta \in \Theta(\widehat{\calD}_1)} V^{\pi^\dagger}(\theta) - \calR(\theta)$.
    \begin{align*}
        V^{\pi^\dagger}(\theta^\star) - V^{\tilde{\pi}}(\theta^\star) &= \left(V^{\pi^\dagger}(\theta^\star) - \calR(\theta^\star) \right) - \left(V^{\tilde{\pi}}(\theta^\star) - \calR(\theta^\star) \right)\\
        &\le \left(V^{\pi^\dagger}(\theta^\star) - \calR(\theta^\star) \right) - \left(V^{\pi^\dagger}(\theta^\dagger) - \calR(\theta^\dagger) \right) + \eta\\
        &= \E_{\stackrel{\tau \sim \mu^{\pi^\dagger}}{ \tau_0 \sim \mu_{\textrm{ref}} }} \left[(\phi(\tau) - \phi(\tau_0))^\top(\theta^\star - \theta^\dagger) \right]\\
        &\le \E_{\stackrel{\tau \sim \mu^{\pi^\dagger}}{ \tau_0 \sim \mu_{\textrm{ref}} }} \left[\abs{(\phi(\tau) - \phi(\tau_0))^\top(\theta^\star - \theta^\dagger) } \right]+ \eta\\
        &\le \sqrt{\E_{\stackrel{\tau \sim \mu^{\pi^\dagger}}{ \tau_0 \sim \mu_{\textrm{ref}} }} \left[(\theta^\star - \theta^\dagger)^\top (\phi(\tau) - \phi(\tau_0)) (\phi(\tau) - \phi(\tau_0))^\top(\theta^\star - \theta^\dagger)  \right]}+ \eta\\
        &\le \sqrt{\alpha}\sqrt{\E_{\stackrel{\tau_0 \sim \mu_0}{ \tau_1 \sim \mu_1 }} \left[(\theta^\star - \theta^\dagger)^\top (\phi(\tau_0) - \phi(\tau_1)) (\phi(\tau_0) - \phi(\tau_1))^\top(\theta^\star - \theta^\dagger)  \right]}+ \eta\\
        &= \sqrt{\alpha}\sqrt{\E_{\stackrel{\tau_0 \sim \mu_0}{ \tau_1 \sim \mu_1 }} \left[\abs{ (\theta^\star - \theta^\dagger)^\top (\phi(\tau_0) - \phi(\tau_1)) }^2 \right]}+ \eta\\
        &= \sqrt{\alpha}\sqrt{\E_{\stackrel{\tau_0 \sim \mu_0}{ \tau_1 \sim \mu_1 }} \left[\abs{ \Phi^{-1}\left( P_{\theta^\star}(o=1\mid \tau_1, \tau_0) \right) - \Phi^{-1}\left( P_{\theta^\dagger}(o=1\mid \tau_1, \tau_0)\right) }^2 \right]}+ \eta\\
        &\le \sqrt{\alpha} \kappa \sqrt{\E_{\stackrel{\tau_0 \sim \mu_0}{ \tau_1 \sim \mu_1 }} \left[\abs{  P_{\theta^\star}(o=1\mid \tau_1, \tau_0)  - P_{\theta^\dagger}(o=1\mid \tau_1, \tau_0) }^2 \right]}+ \eta\\
        &= \frac{\sqrt{\alpha} \kappa   }{\sqrt{2}}\sqrt{\E_{\stackrel{\tau_0 \sim \mu_0}{ \tau_1 \sim \mu_1 }} \left[\norm{  P_{\theta^\star}(\cdot \mid \tau_1, \tau_0)  - P_{\theta^\dagger}(\cdot \mid \tau_1, \tau_0) }^2 \right]}+ \eta
    \end{align*}
    The first inequality follows from the following observation -- $V^{\tilde{\pi}}(\theta^\star) - \calR(\theta^\star) \ge \min_{\theta \in \Theta(\widehat{\calD}_1)} V^{\tilde{\pi}}(\theta) - \calR(\theta) \ge \left(V^{\pi^\dagger}(\theta^\dagger) - \calR(\theta^\dagger) \right) - \eta$. The second inequality uses Jensen's inequality. The third inequality uses the assumption of finite relative condition number~\eqref{asn:rel_cond_number}. Now we can proceed similar to the proof of proposition 14 in \cite{LCSJ22} to establish the following bound (with probability at least $1-\delta$).
    \begin{align*}
        \E_{\stackrel{\tau_0 \sim \mu_0}{ \tau_1 \sim \mu_1 } }\left[ \norm{P_{\theta^\star}(\cdot \mid \tau_0, \tau_1) - P_{\theta^\dagger}(\cdot \mid \tau_0, \tau_1) }_1^2 \right] \le \frac{c}{N}\left( \sum_{n=1}^N \log \left( \frac{P_{\theta^\dagger}(o^n \mid \tilde{\tau}^{0,n}, \tilde{\tau}^{1,n})}{P_{\theta^\star}(o^n \mid \tilde{\tau}^{0,n}, \tilde{\tau}^{1,n})} \right) + \log\left(\frac{\calN(\Theta, 1/N)}{\delta}\right)\right)
    \end{align*}
    Here $\calN(\Theta, 1/N)$ is the number of elements in an $\varepsilon$-net of the set $\Theta$ for $\varepsilon = 1/N$. Since $\norm{\theta}_2 \le H \sqrt{d}$ for each $\theta \in \Theta$ we are guaranteed that $\abs{\calN(\Theta, 1/N)} \le (2H \sqrt{d}N)^d$. Additionally, observe that we are using the clean data $\{\tilde{\tau}^{0,n}, \tilde{\tau}^{1,n}\}_{n=1}^N$ in the bound on the ratio of the log-likelihood.  Now, let $S$ be the set of clean trajectories that have been corrupted by the adversary. Then we can bound the difference in log-likelihood as follows. 
    \begin{align*}
        \frac{1}{N}\sum_{n=1}^N \log \left( \frac{P_{\theta^\dagger}(o^n \mid \tilde{\tau}^{0,n}, \tilde{\tau}^{1,n})}{P_{\theta^\star}(o^n \mid \tilde{\tau}^{0,n}, \tilde{\tau}^{1,n})} \right) &= \frac{1}{N}\sum_{n \notin S} \log \left( \frac{P_{\theta^\dagger}(o^n \mid \tilde{\tau}^{0,n}, \tilde{\tau}^{1,n})}{P_{\theta^\star}(o^n \mid \tilde{\tau}^{0,n}, \tilde{\tau}^{1,n})} \right) + \frac{1}{N}\sum_{n \in S} \log \left( \frac{P_{\theta^\dagger}(o^n \mid \tilde{\tau}^{0,n}, \tilde{\tau}^{1,n})}{P_{\theta^\star}(o^n \mid \tilde{\tau}^{0,n}, \tilde{\tau}^{1,n})} \right)\\
        &\le \frac{1}{N}\sum_{n=1}^N \log \left( \frac{P_{\theta^\dagger}(o^n \mid {\tau}^{0,n}, {\tau}^{1,n})}{P_{\theta^\star}(o^n \mid {\tau}^{0,n}, {\tau}^{1,n})} \right) + \varepsilon \cdot \log \left( \frac{1 + e^{Hd}}{1 + e^{-Hd}}\right)\\
        &\le \frac{1}{N}\sum_{n=1}^N \log \left( \frac{P_{\theta^\star_N}(o^n \mid {\tau}^{0,n}, {\tau}^{1,n})}{P_{\theta^\star}(o^n \mid {\tau}^{0,n}, {\tau}^{1,n})} \right) + 2 \varepsilon H\sqrt{d}\\
        &\le 8\varepsilon H\sqrt{d} + c \cdot \frac{d}{N} \log\left( \frac{Hd}{\delta}\right)
    \end{align*}
    The first inequality uses $\abs{S} \le \varepsilon N$ and $\abs{\phi(\tau)^\top \theta }\le H\sqrt{d}$. The second inequality uses the fact that $\theta^\star_N$ maximizes the log-likelihood over the corrupted dataset, and the final inequality uses lemma~\ref{lem:diff-log-likelihood}.
\end{proof}

\begin{lemma}\label{lem:value-function-lipshitzness}
    For linear MDP, the optimal value function i.e. $V^\star(\theta) = \max_\pi V^\pi(\theta)$ is $\sqrt{Hd}$-Lipschitz in the reward parameter $\theta$.
\end{lemma}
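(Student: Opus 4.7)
The plan is to exploit the fact that, in a linear MDP, the value function of a fixed policy is \emph{linear} in the reward parameter $\theta = (\theta_1,\ldots,\theta_H)$, so that $V^\star(\theta) = \max_\pi V^\pi(\theta)$ is a pointwise maximum of linear functions and hence enjoys a clean Lipschitz estimate via Cauchy--Schwarz.

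First, I would write out the per-policy value as a linear form in $\theta$. Using the linear reward $r_h(s,a) = \phi(s,a)^\top \theta_h$ and letting $\psi_h^\pi = \E_\pi[\phi(s_h,a_h)]$ denote the expected state-action feature at step $h$ under $\pi$, we have
\[
V^\pi(\theta) \;=\; \sum_{h=1}^H \E_\pi\!\bigl[\phi(s_h,a_h)^\top \theta_h\bigr] \;=\; \sum_{h=1}^H (\psi_h^\pi)^\top \theta_h \;=\; \langle \psi^\pi, \theta\rangle,
\]
where $\psi^\pi = (\psi_1^\pi,\ldots,\psi_H^\pi) \in \R^{Hd}$. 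Since $\|\phi(s,a)\|_2 \le 1$, Jensen's inequality gives $\|\psi_h^\pi\|_2 \le 1$ for every $h$, hence $\|\psi^\pi\|_2 \le \sqrt{H} \le \sqrt{Hd}$.

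Next, I would bound the difference $|V^\star(\theta) - V^\star(\theta')|$ by the standard inequality $|\max_\pi a_\pi - \max_\pi b_\pi| \le \max_\pi |a_\pi - b_\pi|$. Applying this and then Cauchy--Schwarz:
\begin{align*}
|V^\star(\theta) - V^\star(\theta')|
&\le \max_\pi \bigl|V^\pi(\theta) - V^\pi(\theta')\bigr|
= \max_\pi \bigl|\langle \psi^\pi,\, \theta - \theta'\rangle\bigr| \\
&\le \max_\pi \|\psi^\pi\|_2 \cdot \|\theta - \theta'\|_2
\le \sqrt{Hd}\cdot \|\theta - \theta'\|_2.
\end{align*}
This is exactly the claimed Lipschitz bound.

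There is no real obstacle here: the only subtlety is justifying the trick $|\max a_\pi - \max b_\pi| \le \max |a_\pi - b_\pi|$ (which follows by taking $\pi^\star_a \in \argmax a_\pi$ and noting $a_{\pi^\star_a} - b_{\pi^\star_a} \le \max_\pi(a_\pi - b_\pi)$, then swapping the roles of $a$ and $b$) and the feature-norm bound $\|\psi_h^\pi\|_2 \le 1$, both of which are immediate. The loose factor $\sqrt{d}$ in the stated bound (compared to the tighter $\sqrt{H}$ my argument produces) is absorbed harmlessly, since $\sqrt{H} \le \sqrt{Hd}$.
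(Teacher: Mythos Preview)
Your proof is correct and follows essentially the same route as the paper: write $V^\pi(\theta)$ as a linear form in $\theta$ via the expected features (the paper phrases this through the occupancy measure $d_h$, with $\psi_h^\pi = \Phi^\top d_h$), use $|\max_\pi a_\pi - \max_\pi b_\pi| \le \max_\pi |a_\pi - b_\pi|$, and finish with Cauchy--Schwarz and the feature-norm bound. You are also right that the argument actually yields the tighter constant $\sqrt{H}$ under the paper's assumption $\|\phi(s,a)\|_2 \le 1$; the paper's own proof invokes $\|\phi(s,a)\|_2 \le \sqrt{d}$ at the last step, which is where the extra $\sqrt{d}$ factor enters.
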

\begin{proof}
    We use the occupancy measure characterization of Markov decision process. Given a probability transition function $P$ let $\calC$ be the set of all feasible occupancy measures with respect to $P$. Then $V^\star(\theta) = \sup_{d \in \calC} \sum_{h=1}^H d_h^\top \Phi \theta_h$.
    \begin{align*}
        V^\star(\theta) - V^\star(\theta') &= \sup_{d \in \calC} \sum_h d_h^\top \Phi \theta_h - \sup_{d \in \calC} \sum_h d_h^\top \Phi \theta'_h \\
        &\le \sup_{d \in \calC} \abs{\sum_{h=1}^H d_h^\top \Phi \theta_h - \sum_{h=1}^H d_h^\top \Phi \theta'_h}\\
        &\le \sup_{d \in \calC} \abs{\sum_{h=1}^H \sum_{s,a} d_h(s,a) \phi(s,a)^\top \left( \theta_h - \theta'_h\right)}\\
        &\le \sup_{d \in \calC} \sum_{h=1}^H \sum_{s,a} d_h(s,a) \norm{\phi(s,a)}_2 \norm{\theta_h - \theta'_h}_2\\
        &\le \sqrt{d} \sum_{h=1}^H \norm{\theta - \theta'}_2 
    \end{align*}
    The last inequality uses $\norm{\phi(s,a)}_2 \le \sqrt{d}$ and $\sum_{s,a} d_h(s,a) = 1$ for any $h$. Now the claim follows from the following observation $\sum_{h=1}^H \norm{\theta - \theta'}_2  \le \sqrt{H}\sqrt{\sum_{h=1}^H \norm{\theta - \theta'}_2^2 } = \sqrt{H} \norm{\theta - \theta'}_2$.
\end{proof}

\begin{lemma}\label{lem:concentration-of-covariance}
Suppose $X_1,\ldots,X_n$ are drawn i.i.d. from a $d$-dimensional distribution with covariance $\Sigma$ and sub-Gaussian norm at most $K$. Then with probability at least $1-\delta$ we have,
$$
\norm{\frac{1}{n}\sum_{i=1}^n X_i X_i^\top - \Sigma} \le c_1 K^2 \norm{\Sigma} \left( \sqrt{\frac{d + \log(1/\delta)}{n} } + \frac{d + \log(1/\delta)}{n}\right).
$$
\end{lemma}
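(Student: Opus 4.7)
The plan is to prove this as a standard sample covariance concentration result via the symmetrization/net argument of Vershynin. The operator norm of the centered sample covariance can be written as $\sup_{v \in S^{d-1}} \left| v^\top \bigl(\tfrac{1}{n}\sum_i X_i X_i^\top - \Sigma\bigr) v \right|$, so the entire problem reduces to controlling a scalar empirical process indexed by the unit sphere. The hard part will be getting the right scaling with $\norm{\Sigma}$ in front of the Bernstein deviation term; the rest is mechanical.

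First I would discretize the sphere. Let $\mathcal{N}_{1/4}$ be a $1/4$-net of $S^{d-1}$; by a standard volumetric bound $|\mathcal{N}_{1/4}| \le 9^d$. A simple linearization argument (as in Vershynin, \emph{High-Dimensional Probability}, Lemma~4.4.1) gives $\norm{A} \le 2 \sup_{v \in \mathcal{N}_{1/4}} |v^\top A v|$ for any symmetric $A$, so it suffices to bound $|v^\top (\widehat{\Sigma} - \Sigma) v|$ for each fixed $v \in \mathcal{N}_{1/4}$ and then union bound.

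Next, fix $v \in S^{d-1}$ and set $Y_i = (v^\top X_i)^2 - \mathbb{E}[(v^\top X_i)^2]$. Because $X_i$ is sub-Gaussian with norm at most $K$, the scalar $v^\top X_i$ is sub-Gaussian with norm at most $K$, so $(v^\top X_i)^2$ is sub-exponential with norm at most $c K^2$. Moreover, $\mathrm{Var}(Y_i) \le c K^2 \cdot v^\top \Sigma v \le c K^2 \norm{\Sigma}$ (this is the step that produces the $\norm{\Sigma}$ factor, using that the second moment is already on the order of $v^\top \Sigma v$ times the sub-Gaussian norm, a refinement of the naive $K^4$ bound). Applying Bernstein's inequality for sub-exponential variables to $\frac{1}{n}\sum_i Y_i$ yields, for any $t \ge 0$,
\begin{equation*}
\Pr{\left|\tfrac{1}{n}\sum_i Y_i\right| \ge t} \le 2 \exp\left(-c \, n \min\left\{\tfrac{t^2}{K^4 \norm{\Sigma}}, \tfrac{t}{K^2}\right\}\right).
\end{equation*}

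Finally, I would take a union bound over the $9^d$ elements of $\mathcal{N}_{1/4}$ and set the right-hand side to $\delta$. Solving for $t$ (splitting into the two regimes) produces
\begin{equation*}
t \le c_1 K^2 \norm{\Sigma}\sqrt{\tfrac{d + \log(1/\delta)}{n}} + c_1 K^2 \tfrac{d + \log(1/\delta)}{n},
\end{equation*}
and the factor $\norm{\Sigma}$ can be pulled into the linear term as well (since otherwise we can assume $\norm{\Sigma} \ge 1$ after rescaling, or simply absorb via $\norm{\Sigma} \ge $ a lower-order term). Multiplying by $2$ from the net bound absorbs into $c_1$. The main obstacle, as noted, is tracking the correct $\norm{\Sigma}$ dependence in the variance proxy for Bernstein; the cleanest way is to use the second-moment form of Bernstein (rather than the norm-squared form) since $\mathbb{E}[(v^\top X_i)^4]^{1/2} \lesssim K^2 v^\top \Sigma v$ for sub-Gaussian coordinates.
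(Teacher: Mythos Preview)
The paper does not actually prove this lemma; it simply cites Vershynin's textbook, and your net-plus-Bernstein outline is precisely the argument found there (Theorem~4.6.1 of \emph{High-Dimensional Probability}), so the overall route matches.

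The gap is in the step where you extract the $\norm{\Sigma}$ factor. Under the stated hypothesis $\norm{X}_{\psi_2}\le K$, the claim $\mathrm{Var}(Y_i)\le cK^2\,v^\top\Sigma v$ (equivalently $\E[(v^\top X)^4]^{1/2}\lesssim K^2\,v^\top\Sigma v$) is false. For a one-dimensional counterexample, let $Z=v^\top X$ take the values $\pm M$ each with probability $p$ and $0$ otherwise: then $\E[Z^2]=2pM^2$, $\E[Z^4]=2pM^4$, while $\norm{Z}_{\psi_2}^2\asymp M^2/\log(1/p)$, so the ratio $\E[Z^4]/\bigl(K^2\,\E[Z^2]\bigr)\asymp\log(1/p)$ diverges as $p\to 0$. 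From $\norm{X}_{\psi_2}\le K$ alone you only get $\norm{Y_i}_{\psi_1}\le CK^2$, and Bernstein then delivers the standard bound $c_1K^2\bigl(\sqrt{(d+\log(1/\delta))/n}+(d+\log(1/\delta))/n\bigr)$ \emph{without} the factor $\norm{\Sigma}$. The refinement carrying $\norm{\Sigma}$ (Vershynin, Exercise~4.7.3) requires instead that the \emph{whitened} vector $\Sigma^{-1/2}X$ have sub-Gaussian norm at most $K$; under that hypothesis $\norm{v^\top X}_{\psi_2}\le K\sqrt{v^\top\Sigma v}$, and your Bernstein step then goes through with the right variance proxy. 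Your ``rescale so $\norm{\Sigma}\ge 1$'' hedge does not repair this, since rescaling $X$ rescales $K$ in lockstep. That said, the paper only invokes the lemma for bounded features ($\norm{\mathrm{x}_n}_2\le L$), where both $K^2$ and $\norm{\Sigma}$ are $O(L^2)$ and the distinction is immaterial to the downstream argument.
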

\begin{proof}
    See \cite{Vershynin18} for a proof.
\end{proof}
\begin{lemma}\label{lem:min-max-to-max-min}
    For linear models, $\min_\theta \max_\pi V^\pi(\theta) - \E_{\tau \sim \mu_{\textrm{ref}}}\left[ \phi(\tau)^\top \theta \right]= \max_\pi \min_\theta V^\pi(\theta)- \E_{\tau \sim \mu_{\textrm{ref}}}\left[ \phi(\tau)^\top \theta \right]$.
\end{lemma}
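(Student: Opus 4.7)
The plan is to reduce the claim to Sion's minimax theorem via the occupancy-measure representation of a linear MDP, exactly as in the convexity argument already used in the proof of Theorem~\eqref{thm:relative-cond-number-bound}. Concretely, for any policy $\pi$ let $d^\pi = (d^\pi_1,\ldots,d^\pi_H)$ with $d^\pi_h(s,a) = \Pro_\pi(s_h=s, a_h=a)$, and let $\calC$ be the set of occupancy measures realizable by some policy under the true dynamics $P^\star$. Then $\calC$ is a bounded convex polytope (cut out by the Bellman flow equations and nonnegativity), and the map $\pi \mapsto d^\pi$ is onto $\calC$, with an explicit inverse $d \mapsto \pi^d$ already exhibited in the proof of Theorem~\eqref{thm:relative-cond-number-bound}. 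Under this substitution, $V^\pi(\theta) = \sum_{h=1}^H (d^\pi_h)^\top \Phi \theta_h$, so the value function is linear in $d$ for fixed $\theta$ and linear in $\theta$ for fixed $d$.

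Defining $f(d,\theta) = \sum_{h=1}^H d_h^\top \Phi \theta_h - \E_{\tau \sim \mu_\tref}[\phi(\tau)^\top \theta]$, the objective on both sides of the equality becomes, respectively, $\min_\theta \max_{d \in \calC} f(d,\theta)$ and $\max_{d \in \calC} \min_\theta f(d,\theta)$. The function $f$ is bilinear, hence jointly continuous, concave (in fact linear) in $d$ and convex (in fact linear) in $\theta$, and $\calC$ is convex and compact. It is enough, therefore, to restrict $\theta$ to a convex compact set (e.g.\ $\{\theta : \norm{\theta}_2 \le \sqrt{Hd}\}$, or the confidence set $\Theta(\calD_1)$ used at the call site in the proof of Theorem~\eqref{thm:relative-cond-number-bound}), after which Sion's minimax theorem applies directly and yields
\[
\min_\theta \max_{d \in \calC} f(d,\theta) = \max_{d \in \calC} \min_\theta f(d,\theta).
\]

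Finally, to convert back to the policy formulation on both sides, I would invoke the bijection between $\calC$ and $\Pi$ given by $d \leftrightarrow \pi^d$: for each $d \in \calC$ one has $V^{\pi^d}(\theta) = \sum_h d_h^\top \Phi \theta_h$, so $\max_{d \in \calC}$ and $\max_\pi$ of the same integrand coincide, and similarly for the inner maximizations; the reference term $\calR(\theta) = \E_{\tau \sim \mu_\tref}[\phi(\tau)^\top \theta]$ does not involve $d$ or $\pi$ and passes through unchanged. Combining the two observations yields the claimed min--max identity.

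The only step that needs care is the compactness of the $\theta$-domain required by Sion's theorem; this is the main (and essentially only) obstacle, and it is handled either by the implicit compactness coming from the confidence set $\Theta(\calD_1)$ in the application, or, if the statement is to be read literally with $\theta$ ranging over all of $\R^{Hd}$, by observing that boundedness of the optimizers follows from the boundedness of $\theta^\star$ and the coercivity of $\calR$ against $V^\pi$ under the bounded-feature assumption, so the minima and maxima can be restricted to a compact set without loss of generality. Everything else is a routine verification of bilinearity, continuity, and the polytope structure of $\calC$.
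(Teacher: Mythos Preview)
Your proposal is correct and takes essentially the same approach as the paper: pass to occupancy measures, note the resulting bilinear structure in $(d,\theta)$, apply a minimax theorem (the paper simply says ``strong duality holds,'' you cite Sion), and convert back via the explicit inverse $d \mapsto \pi^d$. The paper additionally works out a second case for trajectory-based linear reward models (replacing occupancy measures by distributions over full trajectories and Markovian policies by non-Markovian ones), which you omit, and it glosses over the compactness of the $\theta$-domain that you flag explicitly.
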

\begin{proof}
We will write $\calR(\theta) = \E_{\tau \sim \mu_{\textrm{ref}}}\left[ \phi(\tau)^\top \theta \right]$. There are two cases to consider. 
    
    \textbf{Case 1}: First, we consider the linear MDP setting. Given a policy $\pi$ let $d$ be the corresponding occupancy measure i.e. $d_h(s,a) = \Pro_\pi(s_h = s, a_h = a)$. Then we can write the value function as $V^\pi(\theta) = \sum_{h,s,a} = d_h(s,a) \phi(s,a)^\top \theta = d^\top \Phi \theta$. This observation implies the following inequality.
    \begin{equation}\label{eq:ineq-le}
    \min_\theta \max_\pi V^\pi(\theta) - \calR(\theta) \le \min_\theta \max_d d^\top \Phi \theta- \calR(\theta) 
    \end{equation}
    On the other hand, given an occupancy measure $d$ one can consider the following policy.
    $$
    \pi^d_h(s,a) = \left\{ \begin{array}{cc}
        \frac{d_h(s,a)}{\sum_b d_h(s,b)} &   \textrm{ if } \sum_b d_h(s,b) > 0\\
        \frac{1}{A} & \textrm{o.w.} 
    \end{array}\right.
    $$
    Moreover, it is known that occupancy measure induced by $\pi^d = (\pi^d_1,\ldots,\pi^d_H)$ is $d$. This implies the following inequality.
    \begin{equation}\label{eq:ineq-ge}
    \min_\theta \max_\pi V^\pi(\theta) - \calR(\theta) \ge \min_\theta \max_d d^\top \Phi \theta- \calR(\theta) 
    \end{equation}
    Therefore, from equations \eqref{eq:ineq-ge} and \eqref{eq:ineq-le} we conclude that
    $$
    \min_\theta \max_\pi V^\pi(\theta) - \calR(\theta)  = \min_\theta \max_d d^\top \Phi \theta- \calR(\theta) 
    $$
    Now observe that the objective $d^\top \Phi \theta- \calR(\theta) $ is linear in both $d$ and $\theta$. Therefore, strong duality holds and we can exchange the order of min and max.
    $$
    \min_\theta \max_\pi V^\pi(\theta) - \calR(\theta) = \min_\theta \max_d d^\top \Phi \theta - \calR(\theta) = \max_d \min_\theta d^\top \Phi \theta - \calR(\theta) 
    $$
    Finally, by an argument very similar to the first part of the proof (correspondence between policy and occupancy measure) we can prove the following identity.
    $$
    \max_d \min_\theta d^\top \Phi \theta - \calR(\theta) = \max_\pi \min_\theta V^\pi(\theta)- \calR(\theta) 
    $$

    \textbf{Case 2}: We now consider the case of trajectory based linear MDP. Let $\calC$ be the set of all valid probability distributions over the trajectories i.e. $\calC = \set{p : \sum_\tau p_\tau = 1, p_\tau \ge 0 \ \forall \tau}$. Given any policy $\pi$, one can consider the probability distribution $p^\pi \in \calC$ induced by $\pi$ so that $V^\pi(\theta) = \sum_\tau p^\pi_\tau \phi(\tau)^\top \theta = p^{\pi^\top} \Phi \theta$. This gives us the following inequality.
    \begin{equation}\label{eq:ineq_le_traj}
    \min_\theta \max_\pi V^\pi(\theta) - \calR(\theta) \le \min_\theta \max_{p \in \calC}p^\top \Phi \theta- \calR(\theta) 
    \end{equation}
    On the other hand, given any probability distribution $p \in \calC$, one can consider the following non-Markovian policy.
    $$
    \pi^p_h(a \mid h) = \left\{ \begin{array}{cc}
        \frac{\sum_{\tau}p_{h,a,\tau}}{\sum_{b,\tau}p_{h,b,\tau}} & \textrm{ if } \sum_{b,\tau}p_{h,b,\tau} > 0 \\
        \frac{1}{A} & \textrm{ o.w. }
    \end{array}\right.
    $$
    We will also write $P_M(\tau')$ to denote the marginal probability of a sub-trajectory $\tau'$ which is defined as $P_M(\tau') = \sum_{\tau''} p_{\tau', \tau''}$.
    Now given any trajectory $\tau = (s_0,a_0,s_1,a_1,s_2,\ldots, s_{H-1},a_{H-1},s_H)$ the probability that the $\tau$ is generated under $\pi^p$ is given as,
    \begin{align*}
    \Pro(\tau) &= \mu(s_0) \pi^p_0(a_0 \mid s_0) \Pro(s_1 \mid s_0, a_0) \mu^p_1(a_1 \mid s_0, a_0, s_1)\\ &\ldots \mu^p_{H-1}(a_{H-1}\mid s_0,\ldots,s_{H-1}) \Pro(s_H \mid s_{H-1}, a_{H-1}) \mu^p_H(a_H \mid s_0,\ldots,s_H)\\
    &= \mu(s_0) \frac{P_M(s_0, a_0) }{P_M(s_0)} \Pro(s_1 \mid s_0, a_0) \frac{P_M(s_0,a_0,s_1, a_1)}{P_M(s_0, a_0, s_1)} \\
    &\ldots \frac{P_M(s_0,\ldots,s_{H-1}, a_{H-1})}{P_M(s_0,\ldots,s_{H-1})} \Pro(s_H \mid s_{H-1}, a_{H-1}) \frac{P_M(s_0,\ldots,s_H,a_H)}{P_M(s_0,\ldots,s_H)}\\
    &= \mu(s_0) \frac{P_M(s_0, a_0, s_1) }{P_M(s_0)} \frac{P_M(s_0,a_0,s_1, a_1, s_2)}{P_M(s_0, a_0, s_1)} \ldots \frac{P_M(s_0,\ldots,s_{H-1}, a_{H-1}, s_H)}{P_M(s_0,\ldots,s_{H-1})} \frac{P_M(s_0,\ldots,s_H,a_H)}{P_M(s_0,\ldots,s_H)}\\
    &= P_M(\tau)
    \end{align*}
    Therefore, policy $\pi^p$ induces the same probability distribution over the trajectories as $p \in \calC$. This implies the following inequality.
   \begin{equation}\label{eq:ineq_ge_traj}
    \min_\theta \max_\pi V^\pi(\theta) - \calR(\theta) \ge \min_\theta \max_{p \in \calC}p^\top \Phi \theta- \calR(\theta) 
    \end{equation}
    Inequalities \eqref{eq:ineq_le_traj} and \eqref{eq:ineq_le_traj} imply the following identity.
    $$
    \min_\theta \max_\pi V^\pi(\theta) - \calR(\theta) = \min_\theta \max_{p \in \calC}p^\top \Phi \theta- \calR(\theta) 
    $$
    The rest of the proof is very similar to case 1 as we can again use strong duality to exchange the order of min and max.
\end{proof}

\section{\texorpdfstring{Missing Proofs from Section \ref{sec:bounded-coverage-ratio}}{Missing Proofs from Section~(5)}}
\subsection{\texorpdfstring{Proof of \Cref{thm:bounded-coverage-feehand}}{Proof of Theorem~(5.1)}}
\begin{proof}
    As shown in the proof of \Cref{thm:relative-cond-number-bound}, $V^\star(\theta) = \max_\pi V^\pi(\theta)$ is a convex function in $\theta$. Let $\calR(\theta) = \E_{\tau \sim \mu_\tref}\left[ \phi(\tau)^\top \theta\right]$. Then $V^\star(\theta) - \calR(\theta)$ is convex in $\theta$.

    Now observe that, algorithm~\eqref{alg:robust_freehand_first_order} performs a projected sub-gradient descent of the function $V^\star(\cdot) - \calR(\cdot)$ with first order oracle calls. Since, \texttt{RobRL} returns a $f(\varepsilon)$ approximate subgradient of the optimal value function $V^\star(\cdot)$, $g_t + \E_{\tau \sim \mu_{\tref}}\left[ \phi(\tau)\right]$ is also an $f(\varepsilon)$ approximate sub-gradient of $V^\star(\theta_t) - \calR(\theta_t)$. Moreover, $\norm{g_t + \E_{\tau \sim \mu_{\tref}}\left[ \phi(\tau)\right]}_2 \le \norm{g_t}_2 + \norm{\E_{\tau \sim \mu_{\tref}}\left[ \phi(\tau)\right]}_2 \le G + \sqrt{H}$, and for any $\theta = (\theta_1,\ldots,\theta_H)$ we have $\norm{\theta}_2 \le \sqrt{Hd}$. Therefore, we can apply \cref{thm:projected-subgradient-first-order} to obtain the following bound.
    $$
    V^\star(\bar{\theta}) - \calR(\bar{\theta}) - \min_\theta \left( V^\star(\theta) - \calR(\theta) \right) \le \frac{\sqrt{Hd(G + \sqrt{H})}}{\sqrt{T}} + f(\varepsilon)
    $$
    If $T \ge \frac{Hd(G + \sqrt{H})}{f(\varepsilon)^2}$, we have
    $$
    V^\star(\bar{\theta}) - \calR(\bar{\theta}) - \min_\theta \left( V^\star(\theta) - \calR(\theta) \right) \le 2 \cdot f(\varepsilon).
    $$
    Since $\tilde{\pi}$ is approximately optimal with respect to the reward parameter $\bar{\theta}$ we are guaranteed that,
    $$
    V^\star(\bar{\theta}) - \calR(\bar{\theta}) - f(\varepsilon) \le V^{\tilde{\pi}}(\bar{\theta}) - \calR(\bar{\theta}) \le V^\star(\bar{\theta}) - \calR(\bar{\theta}) + 2 f(\varepsilon)
    $$
    We can now proceed similar to the proof of \Cref{thm:relative-cond-number-bound}, and establish that $\tilde{\pi}$ approximately optimizes the objective $\max_\pi \min_\theta V^\pi(\theta) - \calR(\theta)$ i.e.
    $$
    \min_\theta \left( V^{\tilde{\pi}}(\theta) - \calR(\theta)\right) \ge \max_\pi \min_\theta \left( V^\pi(\theta) - \calR(\theta)\right) -  2f(\varepsilon)
    $$
    Now we can apply \Cref{lem:diff-policy-value} to complete the proof.
\end{proof}

\subsection{Subgradient Descent with Biased First-Order Oracle}
\textbf{Setting}: Our goal is to minimize a $L$-Lipschitz convex function $f : S \rightarrow [-M,M]$ where $S$ is a convex and bounded set. The function $f$ might not be differentiable, and we have access to a (first-order) noisy oracle, that given a point $x \in S$ returns a sub-gradient vector $g$ such that
$$
f(y) \ge f(x) - \beta + \left \langle g, y - x\right \rangle \forall y \in E.
$$
We will also write $g \in \delta_\beta f(x)$ to denote such a noisy subgradient vector. The next theorem is well-known, but we provide a short proof for completeness.

\begin{theorem}\label{thm:projected-subgradient-first-order}
    Consider the iterates of projected subgradient descent i.e. $\theta_{t+1} = \textrm{Proj}_S\left( \theta_t - \eta g_t\right)$ for $t=0,1,\ldots,T-1$. Suppose $g_t \in \delta_\beta f(\theta_t)$ for all $t$,  $\norm{g_t}_2 \le G$ for all $t$, and $\sup_{\theta \in S} \norm{\theta}_2 \le D$. Then 
    $$
    f(\bar{\theta}) - f(\theta^\star) \le \frac{D\sqrt{G}   }{\sqrt{T}} + \beta
    $$
\end{theorem}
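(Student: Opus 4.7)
The plan is to run the standard projected subgradient convergence analysis, but with the first-order oracle inequality $g_t \in \delta_\beta f(\theta_t)$ in place of an exact subgradient. The main algebraic identity that does the work is the non-expansiveness of the projection onto the convex set $S$: for any $\theta^\star \in S$,
\begin{equation*}
\|\theta_{t+1} - \theta^\star\|_2^2 \;=\; \|\mathrm{Proj}_S(\theta_t - \eta g_t) - \theta^\star\|_2^2 \;\le\; \|\theta_t - \eta g_t - \theta^\star\|_2^2.
\end{equation*}
Expanding the right-hand side gives
\begin{equation*}
\|\theta_{t+1} - \theta^\star\|_2^2 \;\le\; \|\theta_t - \theta^\star\|_2^2 \;-\; 2\eta \langle g_t, \theta_t - \theta^\star \rangle \;+\; \eta^2 \|g_t\|_2^2.
\end{equation*}

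\textbf{Using the biased oracle.} The definition $g_t \in \delta_\beta f(\theta_t)$ applied at $y = \theta^\star$ yields $f(\theta^\star) \ge f(\theta_t) - \beta + \langle g_t, \theta^\star - \theta_t\rangle$, which rearranges to the key lower bound $\langle g_t, \theta_t - \theta^\star\rangle \ge f(\theta_t) - f(\theta^\star) - \beta$. Plugging this in and using $\|g_t\|_2 \le G$, I get
\begin{equation*}
2\eta\bigl(f(\theta_t) - f(\theta^\star)\bigr) \;\le\; \|\theta_t - \theta^\star\|_2^2 - \|\theta_{t+1} - \theta^\star\|_2^2 + \eta^2 G^2 + 2\eta\beta.
\end{equation*}

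\textbf{Telescoping and averaging.} Summing the above inequality over $t = 0, 1, \ldots, T-1$ causes the squared-distance terms to telescope. Dividing by $2\eta T$ and discarding the non-positive final term $-\|\theta_T - \theta^\star\|_2^2$ yields
\begin{equation*}
\frac{1}{T}\sum_{t=0}^{T-1}\bigl(f(\theta_t) - f(\theta^\star)\bigr) \;\le\; \frac{\|\theta_0 - \theta^\star\|_2^2}{2\eta T} + \frac{\eta G^2}{2} + \beta.
\end{equation*}
By Jensen's inequality applied to the convex function $f$, the left-hand side upper-bounds $f(\bar\theta) - f(\theta^\star)$ where $\bar\theta = \tfrac{1}{T}\sum_{t} \theta_t$. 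Using the diameter bound $\|\theta_0 - \theta^\star\|_2 \le 2D$ and optimizing over $\eta$ (e.g.\ $\eta \asymp D/(G\sqrt{T})$) gives a bound of the form $O(DG/\sqrt{T}) + \beta$, which subsumes the claimed $D\sqrt{G}/\sqrt{T} + \beta$ rate.

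\textbf{Obstacles.} There is essentially no obstacle beyond assembling these standard pieces carefully; the only subtlety is ensuring that the bias enters additively rather than multiplicatively, which is precisely what the $\delta_\beta f$ definition guarantees when we apply it at $y = \theta^\star$. No smoothness or strong convexity is needed, and the averaged iterate $\bar\theta$ automatically lies in $S$ by convexity, so no additional projection step is required at the end.
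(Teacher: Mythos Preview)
Your proof is correct and follows essentially the same approach as the paper's: projection non-expansiveness, the $\beta$-subgradient inequality applied at $y=\theta^\star$, telescoping the squared distances, Jensen's inequality for the averaged iterate, and then optimizing the step size. The only discrepancy is cosmetic: the paper's stated bound $D\sqrt{G}/\sqrt{T}$ arises because its proof silently treats $G$ as a bound on $\|g_t\|_2^2$ rather than on $\|g_t\|_2$ (an inconsistency in the paper's own statement), so your $O(DG/\sqrt{T})$ is exactly what the stated hypothesis $\|g_t\|_2\le G$ yields and does not literally ``subsume'' $D\sqrt{G}/\sqrt{T}$.
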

\begin{proof}
    \begin{align*}
        \norm{\theta_{t+1} - \theta^\star}_2^2 &\le \norm{\theta_t - \eta g_t - \theta^\star}_2^2 = \norm{\theta_t - \theta^\star}_2^2 + \eta^2 \norm{g_t}_2^2 - 2\eta \left \langle g_t, \theta_t - \theta^\star \right \rangle
    \end{align*}
    After rearranging and dividing by $2\eta$, we obtain the following inequality.
    \begin{align*}
        \left \langle g_t, \theta_t - \theta^\star \right \rangle \le \frac{1}{2\eta} \left(\norm{\theta_t - \theta^\star}_2^2 - \norm{\theta_{t+1} - \theta^\star}_2^2 \right) + \frac{\eta}{2} \norm{g_t}_2^2
    \end{align*}
    Since $g_t \in \delta_\beta f(\theta_t)$ is a noisy subgradient, using convexity we obtain,
    $$
    f(\theta_t) - f(\theta^\star) \le \left \langle \theta_t - \theta^\star, g_t \right \rangle + \beta.
    $$
    Now using $\bar{\theta} = \frac{1}{T}\sum_{t=1}^T \theta_t$ and convexity of the function $f(\cdot)$ we obtain the following upper bound.
    \begin{align*}
        f(\bar{\theta}) - f(\theta^\star) &\le \frac{1}{T}\sum_{t=1}^T f(\theta_t) - f(\theta^\star) \le \frac{1}{T} \sum_{t=1}^T \left \langle \theta_t - \theta^\star, g_t \right \rangle + \beta \\
        &\le \frac{1}{T} \sum_{t=1}^T \frac{1}{2\eta} \left(\norm{\theta_t - \theta^\star}_2^2 - \norm{\theta_{t+1} - \theta^\star}_2^2 \right) + \frac{\eta}{2T} \sum_{t=1}^T\norm{g_t}_2^2 + \beta\\
        &\le \frac{D^2}{2\eta T} + \frac{\eta G}{2} + \beta
    \end{align*}
    Now choosing $\eta = \frac{D}{\sqrt{GT}}$ we obtain the desired bound.
\end{proof}

\subsection{\texorpdfstring{Proof of \Cref{prop:bounded-coverage-ratio}}{Proof of Proposition~(5.4)}}
\begin{proof}
    For linear MDP, the parameter $\theta = [\theta_1; \theta_2; \ldots; \theta_H]$ and the feature of a trajectory $\tau$ is constructed by concatenating the features of $H$ state, action pairs. Therefore, $\norm{\theta}_2 \le \sqrt{Hd}$ and $\norm{\phi(\tau)}_2 \le \sqrt{H}$ for any trajectory $\tau$. 

    We will use robust offline RL oracle provided by \cref{thm:robust-rl-oracle-first-order}. Note that if $N \ge \tilde{\Omega}(H^2 d^4 \nu^4/\varepsilon^2)$ we have $f(\varepsilon) \le {O}\left( \nu \sqrt{\varepsilon} H^2 d^{3/2}\right)$. Now using the upper bound provided in theorem~\eqref{thm:bounded-coverage-feehand} we obtain the following bound. 

\begin{align*}
    &V^\star(\theta^\star) - V^{\tilde{\pi}}(\theta^\star) \le {O}\left( \nu \sqrt{\varepsilon} H^2 d^{3/2}\right)\\
    &\le O\left(  \kappa \sqrt{\alpha} \left( \sqrt{\varepsilon H} d^{1/4}  + \sqrt{\frac{d}{N} \log \left( \frac{HdN}{\delta}\right)}\right) \right)
\end{align*}
Now observe that if $N \ge \Omega(H \cdot \textrm{poly}(d) / \varepsilon)$ the term $\tilde{O}(\sqrt{d/N})$ can be bounded by $O(\sqrt{\varepsilon})$. Finally, we need a lower bound of $N \ge {\Omega}\left( \frac{H^{3/2}d G   }{f(\varepsilon)^2}\right) = \Omega(d/\varepsilon^2)$ in order to apply theorem~\eqref{thm:bounded-coverage-feehand}.
\end{proof}

\section{Projected Subgradient Descent with Biased Zero-Order Oracle}
\textbf{Setting}: Our goal is to minimize a $L$-Lipschitz convex function $f : S \rightarrow [-M,M]$ where $S$ is a convex and bounded set. The function $f$ might not be differentiable, and we only have access to a noisy oracle $\tilde{f}$ that guarantees $\abs{\tilde{f}(x) - f(x)} \le \varepsilon$ for any $x \in S$. We consider a projected subgradient descent based algorithm where algorithm~\ref{alg:gradient-construction} is used to construct a biased subgradient.

\begin{algorithm}[!h]
\caption{Biased Subgradient Descent}
\label{alg:biased-sgd}
\KwIn{ Stepsize $\eta$, $\theta_0 \in \R^d$, number of iterations $T$.}
\For{$t=0,1,\ldots,T-1$}{
Construct subgradient $g_t = \widetilde{\nabla} f_\mu(\theta_t)$ using algorithm~\eqref{alg:gradient-construction}.\\
$\theta_{t+1} = \textrm{Proj}_S\left(\theta_t - \eta g_t\right)$.
}
$\Bar{\theta} = \frac{1}{T} \sum_{t=1}^T \theta_t$.

\end{algorithm}

\begin{theorem}\label{thm:proj-subgradient-zero-order}
    Suppose algorithm~\eqref{alg:biased-sgd} is run for $T \ge \frac{4DM}{\varepsilon}$ iterations, and we set $K \ge \frac{256 CM^2 d^3}{\varepsilon^2}\ln \left(\frac{16DM}{\varepsilon \delta}\right)$ and $\mu = \frac{\sqrt{\varepsilon}}{\sqrt{8d}}$. Then the output $\Bar{\theta}$ of algorithm~\eqref{alg:biased-sgd} satisfies
    $$
    f(\Bar{\theta}) - f(\theta^\star) \le 5 \sqrt{\varepsilon L} 
    $$
\end{theorem}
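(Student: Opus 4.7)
The plan is to treat the Gaussian smoothing $f_\mu$ of $f$ at bandwidth $\mu$ as a convex surrogate on which to run projected subgradient descent, and view $g_t$ as a biased stochastic estimator of $\nabla f_\mu(\theta_t)$. Three sources of error must be controlled: (i) the smoothing gap $|f - f_\mu|$, (ii) the bias introduced by using the noisy $\tilde{f}$ (instead of $f$) in the finite-difference estimator, and (iii) the statistical fluctuation of the $K$-sample average.

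For (i), Lipschitzness of $f$ combined with $\mathbb{E}\|u\| \le \sqrt{d}$ for $u \sim \mathcal{N}(0, I_d)$ yields the classical bound $|f_\mu(\theta) - f(\theta)| \le \mu L \sqrt{d}$, so it suffices to bound $f_\mu(\bar\theta) - f_\mu(\theta^\star)$ and add $2\mu L \sqrt{d}$. For (ii), I would decompose $\hat g_k := \frac{\tilde{f}(\theta_t + \mu u_k) - \tilde{f}(\theta_t)}{\mu} u_k = \hat g_k^{\mathrm{clean}} + \hat g_k^{\mathrm{noise}}$, where the clean piece uses $f$ itself (so $\mathbb{E}\hat g_k^{\mathrm{clean}} = \nabla f_\mu(\theta_t)$), while the noise piece has $\|\mathbb{E}\hat g_k^{\mathrm{noise}}\| \le 2\varepsilon\sqrt{d}/\mu$ from $|\tilde{f} - f| \le \varepsilon$ and $\mathbb{E}\|u\| \le \sqrt{d}$. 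This pins down the per-sample bias as $O(\varepsilon\sqrt{d}/\mu)$.

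For (iii), each $\hat g_k$ satisfies $\|\hat g_k\| \le 2M\|u_k\|/\mu$, and $\|u_k\|$ is sub-Gaussian with parameter $O(\sqrt{d})$. I would truncate the event $\{\|u_k\| > c\sqrt{d \log(KT/\delta)}\}$ (negligible probability) and apply a vector Bernstein inequality so that, with probability at least $1-\delta$ uniformly over all $T$ iterations, $\|g_t - \mathbb{E}\hat g\| \le \widetilde{O}(M\sqrt{d^3/K}/\mu)$. Consequently $g_t$ is an approximate subgradient of $f_\mu$ at $\theta_t$ with oracle error $\beta := D\bigl(2\varepsilon\sqrt{d}/\mu + \widetilde{O}(M\sqrt{d^3/K}/\mu)\bigr)$, where $D = \sup_{\theta \in S}\|\theta\|$. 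Plugging this into the biased-subgradient result (Theorem~5.3) with $\eta = D/\sqrt{TG}$ and $G$ an upper bound on $\mathbb{E}\|g_t\|^2 \lesssim M^2 d/\mu^2$ gives $f_\mu(\bar\theta) - f_\mu(\theta^\star_\mu) \le D\sqrt{G}/\sqrt{T} + \beta$. Adding the smoothing gap and substituting $\mu = \sqrt{\varepsilon/(8d)}$, $T \gtrsim DM/\varepsilon$, and $K \gtrsim M^2 d^3/\varepsilon^2 \cdot \log(DM/(\varepsilon\delta))$ drives each of the three error terms to $O(\sqrt{\varepsilon L})$, yielding the stated constant $5\sqrt{\varepsilon L}$ after careful tracking.

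The main obstacle is the concentration step: since $\|u_k\|$ has heavy (chi-distributed) tails and enters quadratically into $\|\hat g_k\|^2$, a naive Hoeffding bound on the vector average is too loose. A truncation-plus-Bernstein argument, carefully tuned to preserve only a $\sqrt{d^3/K}$ dependence rather than higher powers, is what dictates the $d^3/\varepsilon^2$ sample-complexity requirement on $K$. A secondary technicality is verifying that the bias-variance trade-off is achieved exactly at $\mu = \Theta(\sqrt{\varepsilon/d})$, which requires balancing the $\mu L \sqrt{d}$ smoothing error against the $D\varepsilon\sqrt{d}/\mu$ bias inflated by the diameter $D$.
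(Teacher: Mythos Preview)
Your proposal is correct and follows the same three-error decomposition as the paper --- smoothing gap $\mu L\sqrt d$, bias from the noisy oracle $\tilde f$, and concentration of the $K$-sample average --- fed into the standard projected-subgradient telescoping. The packaging differs in two places worth noting.

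First, the gradient-construction routine in the paper draws the $u_k$ from a Gaussian \emph{restricted to the box} $E=[-4,4]^d$ (so $\operatorname{diam}(E)=\sqrt{8d}$ and $\|u_k\|$ is deterministically bounded). This is part of the algorithm being analyzed, not a post-hoc analysis device. It replaces your truncation-plus-Bernstein step with a direct sub-Gaussian argument (the restricted $u_k$ has $\psi_2$-norm at most $2$, hence each $\frac{f(\theta+\mu u_k)-f(\theta)}{\mu}u_k$ has $\psi_2$-norm $\le 4M/\mu$), and gives the bias term exactly as $\frac{2\varepsilon}{\mu}\operatorname{diam}(E)$ rather than $\frac{2\varepsilon\sqrt d}{\mu}$ via $\mathbb{E}\|u\|$. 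Your unrestricted-Gaussian route with truncation is a valid alternative, but since the algorithm already restricts to $E$ you may as well exploit it and skip the tail surgery.

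Second, rather than working on the surrogate $f_\mu$ and transferring to $f$ at the end, the paper bundles all three errors into a single statement $g_t\in\delta_\alpha f(\theta_t)$ with
\[
\alpha \;=\; \sqrt{\tfrac{C}{K}}\cdot\tfrac{4M}{\mu}\sqrt{2d\ln(2/\delta)}\;+\;\tfrac{2\varepsilon}{\mu}\operatorname{diam}(E)\;+\;\mu L\sqrt d,
\]
and then telescopes on $f$ directly. This is equivalent to your route but absorbs the $2\mu L\sqrt d$ transfer into the approximate-subgradient slack. With $\eta=\tfrac{D\mu}{M\operatorname{diam}(E)}$ and $\mu=\tfrac{\sqrt\varepsilon}{\operatorname{diam}(E)\sqrt L}$ the three terms each land at $O(\sqrt{\varepsilon L})$, giving the constant $5$.
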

\begin{proof}

    \begin{align*}
        \norm{\theta_{t+1} - \theta^\star}_2^2 &\le \norm{\theta_t - \eta g_t - \theta^\star}_2^2 = \norm{\theta_t - \theta^\star}_2^2 + \eta^2 \norm{g_t}_2^2 - 2\eta \left \langle g_t, \theta_t - \theta^\star \right \rangle
    \end{align*}
    After rearranging and dividing by $2\eta$, we obtain the following inequality.
    \begin{align*}
        \left \langle g_t, \theta_t - \theta^\star \right \rangle \le \frac{1}{2\eta} \left(\norm{\theta_t - \theta^\star}_2^2 - \norm{\theta_{t+1} - \theta^\star}_2^2 \right) + \frac{\eta}{2} \norm{g_t}_2^2
    \end{align*}
    Since $g_t = \widetilde{\nabla} f_\mu(\theta_t)$ is a noisy subgradient constructed by algorithm~\eqref{alg:gradient-construction}, using lemma~\eqref{lem:apx-subgradient} we get,
    $$
    f(\theta_t) - f(\theta^\star) \le \left \langle \theta_t - \theta^\star, g_t \right \rangle + b_t
    $$
    where 
    $$
    b_t = \sqrt{\frac{C}{K}} \frac{4M}{\mu} \sqrt{2d\ln(2/\delta)} + \frac{2\varepsilon}{\mu} \textrm{diam}(E) + \mu L \sqrt{d}
    $$
    Summing over $t=0,1,\ldots,T-1$ we obtain the following upper bound.
    \begin{align*}
        \sum_{t=0}^{T-1} f(\theta_t) - f(\theta^\star) &\le \sum_{t=0}^{T-1} \left \langle \theta_t - \theta^\star, g_t \right \rangle + \sum_{t=1}^T b_t\\
        &\le \frac{1}{2\eta} \sum_{t=0}^{T-1} \left(\norm{\theta_t - \theta^\star}_2^2 - \norm{\theta_{t+1} - \theta^\star}_2^2 \right) + \frac{\eta}{2} \sum_{t=0}^{T-1} \norm{g_t}_2^2 + \sum_{t=0}^{T-1} b_t\\
        &\le \frac{1}{2\eta} \left(\norm{\theta_0 - \theta^\star}_2^2 - \norm{\theta_{T} - \theta^\star}_2^2 \right) + \frac{\eta}{2} \sum_{t=0}^{T-1} \norm{g_t}_2^2 + \sum_{t=0}^{T-1} b_t
    \end{align*}
    From the construction of subgradient in algorithm~\eqref{alg:gradient-construction} it is clear that $\norm{g_t}_2 \le \frac{2M}{\mu} \textrm{diam}(E)$. Moreover, diameter of $S$ is at most $D$. This gives us the following result.
    \begin{align*}
        f(\Bar{\theta}) - f(\theta^\star) \le \frac{1}{T} \sum_{t=0}^{T-1} f(\theta_t) - f(\theta^\star) \le \frac{2D^2}{\eta T} + \eta \frac{2M^2}{\mu^2 T} \textrm{diam}^2(E) + \sqrt{\frac{C}{K}} \frac{4M}{\mu} \sqrt{2d\ln(2/\delta)} + \frac{2\varepsilon}{\mu} \textrm{diam}(E) + \mu L \sqrt{d}
    \end{align*}
    We now substitute $\eta = \frac{D\mu}{M \textrm{diam}(E)}$.
    \begin{align*}
         f(\Bar{\theta}) - f(\theta^\star) \le \frac{4DM}{\mu T} \textrm{diam}(E) + \sqrt{\frac{C}{K}} \frac{4M}{\mu} \sqrt{2d\ln(2/\delta)} + \frac{2\varepsilon}{\mu} \textrm{diam}(E) + \mu L \sqrt{d}
    \end{align*}
    We further substitute $\mu = \frac{\sqrt{\varepsilon}   }{\textrm{diam}(E)\sqrt{L}}$ and choose $T \ge \frac{4DM}{\varepsilon}$ and $K \ge \frac{32 C M^2 \textrm{diam}^2(E)}{\varepsilon^2} d \ln(2 T/\delta)$.
\begin{align}\label{eq:almost-final-bound}
    f(\Bar{\theta}) - f(\theta^\star) \le 4 \sqrt{\varepsilon L} + \frac{\sqrt{\varepsilon L d}}{\textrm{diam}(E)}
\end{align}
Now recall that lemma~\eqref{lem:apx-subgradient} requires that the set $E$ be such that $\int_E \exp\left( -\frac{1}{4}\norm{u}_2^2\right) du \ge \frac{1}{2}$. We choose a simple set $E = [-\ell, \ell]^d$ and show that one can pick $\ell = O(1)$. Then we have,
\begin{align*}
    \int_E \exp\left( -\frac{1}{4}\norm{u}_2^2\right) du = \left\{\int_{\ell}^{-\ell} \exp(-1/4 v^2) dv \right\}^d = \left\{2 \int_{\ell/2}^{-\ell/2} \exp(-1/2 t^2) dt \right\}^d = \left\{2 (2 \Phi(\ell/2) - 1) \right\}^d.
\end{align*}
Here $\Phi(t) = \Pro(X \le t)$ with $X$ being a standard Gaussian random variable. Substituting $\Phi(t) \ge 1 - e^{-t^2/2}$ we get the following lower bound.
\begin{align*}
    \int_E \exp\left( -\frac{1}{4}\norm{u}_2^2\right) du  \ge \left\{2 (1 - 2e^{-\ell^2/8}) \right\}^d
\end{align*}
It can be checked that picking $\ell > \sqrt{8 \ln 4}$ satisfies $\int_E \exp\left( -\frac{1}{4}\norm{u}_2^2\right) du  \ge 1/2$. Therefore, we choose $E = [-4,4]^d$. This also implies that $\textrm{diam}(E) = \sqrt{8d}$ and substituting this bound in \cref{eq:almost-final-bound} we obtain the following upper bound.
\begin{align*}
    f(\Bar{\theta}) - f(\theta^\star) \le 4 \sqrt{\varepsilon L} + \frac{\sqrt{\varepsilon L }  }{\sqrt{8}}
\end{align*}
\end{proof}

\subsection{Gradient Construction}
Given a convex function $f: E \rightarrow \R^d$, let $f_\mu$ be defined as its Gaussian approximation.
$$
f_\mu(x) = \frac{1}{\kappa} \int_E f(x + \mu u) e^{-\frac{1}{2} \norm{u}_2^2} du
$$
where $\kappa = \int_E e^{-\frac{1}{2} \norm{u}_2^2} du$. Suppose $f$ is $L$-Lipschitz then the following results are well known~\cite{NS17}.
\begin{enumerate}
    \item For any $x \in E$, $\abs{f_\mu(x) - f(x)} \le \mu L \sqrt{d}$.
    \item $\nabla f_\mu(x) = \frac{1}{\kappa} \int_E \frac{f(x + \mu u) - f(x)}{\mu} e^{-\frac{1}{2}\norm{u}_2^2} u du$.
    \item \label{eq:observation-subgradient}$\nabla f_\mu(x) \in \delta_\alpha f(x)$ for $\alpha = \mu L \sqrt{d}$ i.e. $f(y) \ge f(x) - \mu L \sqrt{d} + \left \langle \nabla f_\mu(x), y - x\right \rangle$ for all $y \in E$.
\end{enumerate}

\begin{algorithm}[!h]
\caption{Gradient Construction}
\label{alg:gradient-construction}
\KwIn{ Noisy oracle $\tilde{f}$, number of iterations $K$, input $x$.}
Generate $u_1,\ldots,u_K$ uniformly at random from the standard normal distribution (restricted to the set $E$).\\
 Let $\widetilde{\nabla} f_\mu(x) = \frac{1}{K} \sum_{k=1}^K \frac{\tilde{f}(x + \mu u_k) - \tilde{f}(x)}{\mu} u_k$.\\
 $\widetilde{\nabla} f_\mu(x)$.

\end{algorithm}

\begin{lemma}\label{lem:apx-subgradient}
    Suppose the set $E$ is chosen so that $\int_E e^{-\frac{1}{4} \norm{u_k}_2^2} \ge \frac{1}{2}$, and $\abs{\tilde{f}(x) - f(x)} \le \varepsilon$ for any $x$. Then the gradient estimate returned by algorithm~\eqref{alg:gradient-construction} satisfies
    $$
    \widetilde{\nabla} f_\mu(x) \in \delta_\alpha f(x)\ \textrm{ for }\ \alpha = \sqrt{\frac{C}{K}} \frac{4M}{\mu} \sqrt{2d\ln(2/\delta)} + \frac{2\varepsilon}{\mu} \textrm{diam}(E) + \mu L \sqrt{d}
    $$
    with probability at least $1-\delta$.
\end{lemma}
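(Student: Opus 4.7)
The strategy is to decompose the total error of $\widetilde{\nabla}f_\mu(x)$ viewed as a noisy subgradient of $f$ into three well-understood pieces: (i) the Gaussian-smoothing bias $\nabla f_\mu(x)-\partial f(x)$, (ii) the oracle-noise bias $\mathbb{E}[\widetilde{\nabla}f_\mu(x)]-\nabla f_\mu(x)$, and (iii) the Monte Carlo fluctuation $\widetilde{\nabla}f_\mu(x)-\mathbb{E}[\widetilde{\nabla}f_\mu(x)]$. Piece (i) is handed to us for free by Observation~3 of the commentary preceding the lemma, which states $\nabla f_\mu(x)\in\delta_{\mu L\sqrt d}f(x)$. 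The work is to bound (ii) deterministically and (iii) with high probability, then combine them with Observation~3 by the triangle inequality.

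For (ii), I would write
\[
\mathbb{E}[\widetilde{\nabla}f_\mu(x)] \;=\; \frac{1}{\kappa}\int_E \frac{\tilde f(x+\mu u)-\tilde f(x)}{\mu}\,u\,e^{-\|u\|_2^2/2}\,du,
\]
using that each $u_k$ is drawn from the standard Gaussian truncated to $E$ with normalizing constant $\kappa=\int_E e^{-\|u\|_2^2/2}\,du$. Subtracting the analogous integral representation of $\nabla f_\mu(x)$, the integrand difference equals $\tfrac{1}{\mu}\bigl[(\tilde f-f)(x+\mu u)-(\tilde f-f)(x)\bigr]u$, whose norm is bounded pointwise by $\tfrac{2\varepsilon}{\mu}\|u\|_2$ from the oracle bound $|\tilde f-f|\le\varepsilon$. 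Passing to norms and using $\|u\|_2\le\mathrm{diam}(E)$ on the support gives the deterministic bias $\|\mathbb{E}[\widetilde{\nabla}f_\mu(x)]-\nabla f_\mu(x)\|_2 \le \tfrac{2\varepsilon}{\mu}\mathrm{diam}(E)$.

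For (iii), define the i.i.d.\ summands $Y_k = \frac{\tilde f(x+\mu u_k)-\tilde f(x)}{\mu}u_k$, each bounded in norm by $\tfrac{2M}{\mu}\mathrm{diam}(E)$ (using $\|\tilde f\|_\infty\le M+\varepsilon\lesssim M$). Applying a vector Hoeffding-type inequality, for example Pinelis' dimension-free bound, whose right-hand side picks up the $\sqrt d$ factor through the variance control $\mathbb{E}\|u_k\|_2^2\le d$ under the truncated Gaussian, yields
\[
\Big\|\tfrac{1}{K}\sum_{k=1}^K(Y_k-\mathbb{E}Y_k)\Big\|_2 \;\le\; \sqrt{\tfrac{C}{K}}\cdot\tfrac{4M}{\mu}\sqrt{2d\ln(2/\delta)}
\]
with probability at least $1-\delta$. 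The regularity assumption $\int_E e^{-\|u\|_2^2/4}\,du\ge 1/2$ enters precisely here: it keeps $\kappa$ bounded away from $0$ and ensures that the moments of $\|u_k\|_2$ under the truncated Gaussian match the untruncated Gaussian up to constants, which is what validates the variance control used by the concentration step.

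Combining, for any $y\in E$ we have
\[
f(y) \;\ge\; f(x)-\mu L\sqrt d + \langle\nabla f_\mu(x),\,y-x\rangle \;=\; f(x)-\mu L\sqrt d + \langle\widetilde{\nabla}f_\mu(x),\,y-x\rangle + \langle\nabla f_\mu(x)-\widetilde{\nabla}f_\mu(x),\,y-x\rangle,
\]
and Cauchy--Schwarz on the last term together with the bias and concentration bounds yields $\widetilde{\nabla}f_\mu(x)\in\delta_\alpha f(x)$ for the stated $\alpha$. The main technical obstacle is step (iii): obtaining the clean $\sqrt{d\ln(1/\delta)}$ dependence in the concentration (rather than a cruder $d$ or $\sqrt d\,\ln d$) requires either Pinelis' inequality with the truncated-Gaussian variance bound or a careful coordinate-wise Hoeffding followed by a $d$-fold union bound, both of which critically rely on the mild regularity condition on $E$ to keep the normalizing constants under control.
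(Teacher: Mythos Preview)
Your three-piece decomposition (smoothing bias via Observation~3, oracle noise $\frac{2\varepsilon}{\mu}\mathrm{diam}(E)$, and Monte Carlo concentration) is exactly the paper's approach, with only a cosmetic reordering: the paper interposes the \emph{clean} sample average $\widehat{\nabla}f_\mu(x)=\frac{1}{K}\sum_k\frac{f(x+\mu u_k)-f(x)}{\mu}u_k$, bounds $\|\widetilde{\nabla}-\widehat{\nabla}\|_2\le\frac{2\varepsilon}{\mu}\mathrm{diam}(E)$ deterministically (sample-by-sample, rather than in expectation), and then concentrates $\widehat{\nabla}$ around $\nabla f_\mu$. For the concentration the paper is more explicit than your Pinelis sketch: it shows directly that the truncated Gaussian $u_k$ has sub-Gaussian norm $\|u_k\|_{\psi_2}\le 2$ by computing $\E[e^{(u_k^\top v)^2/4}]\le 2$ for unit $v$---and this is exactly where the hypothesis $\int_E e^{-\|u\|_2^2/4}\,du\ge\frac12$ is consumed---so that $V_k=\frac{f(x+\mu u_k)-f(x)}{\mu}u_k$ is sub-Gaussian with norm $\le\frac{4M}{\mu}$, after which the $\sqrt{d}$ factor appears via the passage from sub-Gaussian to norm-sub-Gaussian tail bounds.
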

\begin{proof}
    Let $\widehat{\nabla} f_\mu(x) = \frac{1}{K} \sum_{k=1}^K \frac{{f}(x + \mu u_k) - {f}(x)}{\mu} u_k$. Then we have,
    \begin{align}
        \norm{\widehat{\nabla} f_\mu(x) - \widetilde{\nabla} f_\mu(x)}_2 &= \frac{1}{K} \norm{\sum_{k=1}^K \frac{\left( \tilde{f}(x + \mu u_k) - {f}(x + \mu u_k) \right) - \left( \tilde{f}(x) - {f}(x) \right)}{\mu} u_k}_2\nonumber\\
        &\le \frac{2\varepsilon}{\mu K} \sum_{k=1}^K \norm{u_k}_2\nonumber\\
        &\le \frac{2\varepsilon}{\mu } \textrm{diam}(E) \label{eq:noisy-to-estimated}
    \end{align}
    We now show that $\widehat{\nabla} f_\mu(x)$ concentrates around $\nabla f_\mu(x)$. Let $V_k = \frac{{f}(x + \mu u_k) - {f}(x)}{\mu} u_k$. We claim that the sub-Gaussian norm of $V_k$ is at most $\frac{4M}{\mu}$. This follows from two observations. First, $\abs{\frac{{f}(x + \mu u_k) - {f}(x)}{\mu}} \le \frac{2M}{\mu}$. Second, we show that the sub-Gaussian norm of the random vector $u_k$ is at most $2$. Since $\norm{u_k}_{\psi_2} = \sup_{v \in S_{d-1}} \norm{u_k^\top v}_{\psi_2}$, consider any $v \in \R^d$ with $\norm{v}_2 = 1$.
    \begin{align*}
        \E\left[ e^{\frac{(u_k^\top v)^2}{4}}\right] = \frac{1}{\kappa} \int_E e^{\frac{(u_k^\top v)^2}{4}} e^{-\frac{1}{2} \norm{u_k}_2^2} du_k \le \frac{1}{\kappa} \int_E e^{-\frac{1}{4} \norm{u_k}_2^2} du_k = \frac{\int_E e^{-\frac{1}{4} \norm{u_k}_2^2}}{\int_E e^{-\frac{1}{2} \norm{u_k}_2^2}} \le \frac{1}{\int_E e^{-\frac{1}{4} \norm{u_k}_2^2}} \le 2
    \end{align*}
    The first inequality uses $\norm{v}_2 = 1$, and the second inequality uses Jensen's inequality. We can now use proposition 2.6.1 from \cite{Vershynin18} to bound the sub-Gaussian norm of the average vector.
    $$
    \norm{\sum_{k=1}^K \frac{f(x + \mu u_k) - f(x)}{\mu} u_k}_{\psi_2} \le \sqrt{C \sum_{k=1}^K \norm{\frac{f(x + \mu u_k) - f(x)}{\mu} u_k}_{\psi_2}^2} \le \sqrt{CK} \frac{4M}{\mu}
    $$
    for some universal constant $C > 0$. Therefore, $\norm{\widehat{\nabla} f_\mu(x)}_{\psi_2} \le \sqrt{\frac{C}{K}} \frac{4M}{\mu}$. This also means that $\norm{\widehat{\nabla} f_\mu(x)}_{\psi_2}$ is $\sqrt{\frac{C}{K}} \frac{4M}{\mu} \sqrt{d}$ norm sub-Gaussian~\cite{JNGK+19} and from the definition of norm sub-Gaussian random vectors (definition 3 from \cite{JNGK+19}) we have the following bound.
    \begin{align}\label{eq:estimated-to-mean}
        \textrm{Pr}\left( \norm{\widehat{\nabla} f_\mu(x) - {\nabla} f_\mu(x)}_2 \ge \sqrt{\frac{C}{K}} \frac{4M}{\mu} \sqrt{2d\ln(2/\delta)} \right) \le \delta
    \end{align}
    Finally, we can combine \cref{eq:noisy-to-estimated}, and \cref{eq:estimated-to-mean} and use \cref{eq:observation-subgradient} to obtain the desired bound.
\end{proof}

\section{A New Corruption Robust Offline RL Method}
We adopt the linear programming based formulation of reinforcement learning~\cite{Manne60}. We will write $\Phi \in \R^{SA \times d}$ to write the feature matrix, and $P_h \in \R^{S \times SA}$ to be the transition probability matrix at time-step $h$, which is defined as $P_h(s, (s',b')) = P_h(s \mid s',b')$. Note that we can write $P_h = \Psi_h \Phi^\top$ where $\bm{\mu}_h\in \R^{S \times d}$ is the $\bm{\mu}_h$ is the $d$-dimensional measure matrix.
\begin{align*}
    \max_q\ &\sum_{h=1}^H q_h^\top \Phi \theta_h\\
    \textrm{s.t.}\ &\sum_a q_1(s,a) = \rho(s)\ \forall s\\
    &E q_{h+1} = \bm{\mu}_h \Phi^\top q_h\ \forall h \in \set{1,2,\ldots,H-1}\\
    &q_h \ge 0 \ \forall h \in [H]
\end{align*}
The matrix $E \in \R^{S \times SA}$ is defined as $E(s, (s',a')) = \one\set{s = s'}$. We make the following substitution $\lambda_h = \Phi^\top q_h$ to obtain the following equivalent LP.
\begin{align}\label{eq:primal-LP}
\begin{split}
    \max_{\{q_h\}_{h=1}^H,\{\lambda_h\}_{h=1}^H}\ &\sum_{h=1}^H \lambda_h^\top  \theta_h\\
    \textrm{s.t.}\ &Eq_1 = \rho \\
    &E q_{h+1} = \bm{\mu}_h \lambda_h\ \forall h \in \set{1,2,\ldots,H-1}\\
    &q_h \ge 0 \ \forall h \in [H]\\
    &\lambda_h = \Phi^\top q_h \ \forall h \in [H]
    \end{split}
\end{align}
The dual problem of the above optimization problem is the following optimization problem.
\begin{align}
    \label{eq:dual-LP}
    \begin{split}
    \min_{\{v_h\}_{h=1}^H, \{w_h\}_{h=1}^H}\ &\rho^\top v_1\\
    \textrm{s.t.}\
    &E^\top v_h \ge \Phi w_{h}\ \forall h \in [H]\\
    &w_h \ge \theta_h + \bm{\mu}^\top_h v_{h+1} \ \forall h \in [H-1]\\
    &w_H \ge \theta_H
    \end{split}
\end{align}
The corresponding Lagrangian is given as $\calL(\bm{q},\bm{\lambda}; \bm{v}, \bm{w})$ where
\begin{align*}
    \calL(\bm{q},\bm{\lambda}; \bm{v}, \bm{w}) &= \rho^\top v_1 + \sum_{h=1}^H \left \langle q_h, -E^\top v_h + \Phi w_h\right \rangle + \sum_{h=1}^{H-1} \left\langle \theta_h + \bm{\mu}^\top_h v_{h+1} - w_h, \lambda_h \right \rangle + \left \langle \theta_H - w_H, \lambda_H \right \rangle\\
    &=\sum_{h=1}^H \lambda_h^\top \theta_h + \left \langle v_1, -Eq_1 + \rho \right \rangle + \sum_{h=2}^H \left \langle v_h, - Eq_{h+1} + \bm{\mu}_h \lambda_h \right \rangle + \sum_{h=1}^H \left \langle w_h,  \Phi^\top q_h - \lambda_h \right \rangle\\
\end{align*}
We aim to solve a saddle point of the Lagrangian through gradient descent-ascent method. Note that each of $\lambda_h$ and $w_h$ is $d$-dimensional. So we will only perform gradient steps over these variables, whereas we will represent high-dimensional (possible infinite) $v_h$ and $q_h$ implicitly. The gradient with respect to $\lambda_h$ is given through the following expression.
\begin{align*}
    \nabla_{\lambda_h} \calL(\bm{q},\bm{\lambda}; \bm{v}, \bm{w}) = \left\{ \begin{array}{cc}
        \theta_h + \bm{\mu}^\top_h v_{h+1} - w_h & \textrm{ if } h \in [H-1] \\
        \theta_h - w_h & \textrm{ if } h = H 
    \end{array}\right.
\end{align*}
Now we introduce a transformation of variables suggested by \cite{GNOP23}. Let $\Lambda_h = \E_{(s,a) \sim \mu_{\tref}^h}\left[\phi(s,a) \phi(s,a)^\top \right]$ be the covariance matrix under the reference policy $\mu_{\tref}$ at time step $h$. Then we can rewrite the gradient as follows.
\begin{align*}
    \nabla_{\lambda_h} \calL(\bm{q},\bm{\lambda}; \bm{v}, \bm{w}) &= \Lambda_h^{-1} \Lambda_h \left(\theta_h + \bm{\mu}^\top_h v_{h+1} - w_h \right) = \Lambda_h^{-1} \E_{(s,a) \sim \mu_{\tref}^h}\left[\phi(s,a) \phi(s,a)^\top\left(\theta_h + \bm{\mu}^\top_h v_{h+1} - w_h \right)\right]\\
    &= \Lambda_h^{-1} \E_{(s,a) \sim \mu_{\tref}^h, s' \sim P_h(\cdot \mid s, a)}\left[\phi(s,a) \left(r_h(s,a) + v_{h+1}(s') - w_h^\top \phi(s,a) \right)\right]
\end{align*}
We can build an estimator of the expectation from samples, however the covariance matrix $\Lambda_h$ might be unknown. Therefore, as proposed by \cite{GNOP23}, we substitute $\beta_h = \Lambda_h^{-1} \lambda_h$ for any $h \in [H]$ in the Lagrangian.
\begin{align}\label{eq:reparametrized-Lagrangian}
\calL(\bm{q},\bm{\beta}; \bm{v}, \bm{w}) &= \rho^\top v_1 + \sum_{h=1}^H \left \langle q_h, -E^\top v_h + \Phi w_h\right \rangle + \sum_{h=1}^{H-1} \left\langle \Lambda_h \left( \theta_h + \bm{\mu}^\top_h v_{h+1} - w_h\right), \beta_h \right \rangle + \left \langle \Lambda_H \left(\theta_H - w_H\right), \beta_H \right \rangle
\end{align}
Gradient with respect to $\beta_h$ is given as follows.
\begin{align*}
    \nabla_{\beta_h} \calL(\bm{q},\bm{\beta}; \bm{v}, \bm{w}) =\left\{
    \begin{array}{cc}
       \E_{(s,a) \sim \mu_{\tref}^h, s' \sim P_h(\cdot \mid s, a)}\left[\phi(s,a) \left(r_h(s,a) + v_{h+1}(s') - w_h^\top \phi(s,a) \right)\right]  & \textrm{ if } h \in [H-1] \\
        \E_{(s,a) \sim \mu_{\tref}^h}\left[\phi(s,a) \left(r_h(s,a)  - w_h^\top \phi(s,a) \right)\right] & \textrm{ if } h=H
    \end{array}\right.
\end{align*}
Therefore, given any data point $(s_h, a_h, s_h', r_h)$ we can define the following estimate of the gradient.
\begin{align*}
    \widetilde{g}_{\beta_h} = \widehat{\nabla}_{\beta_h} \calL(\bm{q},\bm{\beta}; \bm{v}, \bm{w}) =\left\{
    \begin{array}{cc}
       \phi(s_h,a_h) \left(r_h + v_{h+1}(s'_h) - w_h^\top \phi(s_h,a_h) \right)  & \textrm{ if } h \in [H-1] \\
        \phi(s_h,a_h) \left(r_h  - w_h^\top \phi(s_h,a_h) \right) & \textrm{ if } h=H
    \end{array}\right.
\end{align*}
On the other hand, the gradient with respect to $w_h$ is the following.
\begin{align*}
    \nabla_{w_h} \calL(\bm{q},\bm{\beta}; \bm{v}, \bm{w}) =
      \Phi^\top q_h - \Lambda_h \beta_h = \Phi^\top q_h - \E_{(s,a) \sim \mu_\tref}\left[ \phi(s,a) \cdot \beta_h^\top \phi(s,a) \right]
\end{align*}
This leads to the following estimate of the gradient with respect to $w_h$.
\begin{align*}
    \widetilde{g}_{w_h} = \widehat{\nabla}_{w_h} \calL(\bm{q},\bm{\beta}; \bm{v}, \bm{w}) = \Phi^\top q_h -  \phi(s_h,a_h) \cdot \beta_h^\top \phi(s_h,a_h) 
\end{align*}

We will also use the following symbolic representation for policy, value, and occupancy measure.
$$
\pi_h(a \mid s) = \frac{\exp(\phi(s,a)^\top w_h)}{\sum_b \exp(\phi(s,b)^\top w_h)}
$$
$$
v_h(s) = \sum_a \pi_h(a \mid s) \phi(s,a)^\top w_h
$$
and
$$
q_1(s) = \rho(s) \ \textrm{ and } \ q_{h+1}(s') = \bm{\mu}_{h}(s')^\top \Lambda_h \beta_h = \E_{(s,a) \sim \mu^h_\tref} \left[ P_h(s' \mid s,a) \phi(s,a)^\top \beta_h \right] 
$$

\begin{algorithm}[!h]
\caption{Corruption Robust Offline Primal-Dual}\label{alg:offline-primal-dual}

\KwIn{(a) Corrupted dataset $\calD$, (b) corruption parameter $\varepsilon$, (c) Step sizes $\eta_w$, $\eta_b$, and $\alpha$, and (d) Number of iterations $T$.}
Partition dataset $\calD$ uniformly at random into two datasets $\calD_m$ and $\calD_c$, where $\calD_c = \Theta(H \cdot d^2/\varepsilon^2 \log^2(d))$.\\
Partition dataset $\calD_m$ uniformly at random into $2 H T$ groups $\{\calD^{t,h}_1, \calD^{t,h}_2\}_{h\in [H], t \in [T]}$.\\
 Initialize $w^0 = \{w^0_h\}_{h=1}^H$ and $\beta^0 = \{\beta^0_h\}_{h=1}^H$.\\
\For{$t=0,\ldots,T-1$}{

\For{$h=1,\ldots,H$}{
\tcc{Take a gradient step for $w_h$}
Set $\pi_h^t(a \mid s) \propto \exp\left(\alpha  \phi(s,a)^\top w_h^t \right)$.\\
For each $j \in [K]$, set (symbolically) \begin{align*}
q_{h,j}^t(\tilde{s}, \tilde{b}) = \left\{  \begin{array}{cc}
    \pi^t_h(\tilde{b}\mid \tilde{s}) \cdot \one \set{s^{'}_j = \tilde{s}} \phi(s_{h,j},a_{h,j})^\top \beta_{h-1}^t & \textrm{ if } h > 1 \\
     \pi^t_h(\tilde{b}\mid \tilde{s}) \cdot \rho(\tilde{s}) & \textrm{ if } h = 1 
\end{array} \right.     
\end{align*}
 Set $\widetilde{g}_{w_h}^t = \textrm{RobMean}\left(\left\{ \Phi^\top q_{h,j}^t - \phi(s^2_{h,j},a^2_{h,j}) \cdot \left \langle \beta_h^t, \phi(s^2_{h,j}, a^2_{h,j}) \right \rangle \right\}_{j=1}^K \right)$.\\
 $w_h^{t+1} \leftarrow \textrm{Proj}_{\calW}(w_h^t - \eta_w \cdot \widetilde{g}_{w_h}^t)$
}
\For{$h=1,\ldots,H$}{
\tcc{Take a gradient step for $\beta_h$}
Set $\pi_h^t(a \mid s) \propto \exp\left(\alpha  \phi(s,a)^\top w_h^t \right)$.\\
Set $v_{h}^t({s}) = \sum_{a} \pi_h^t(a \mid s) \cdot \phi(s,a)^\top w_h^t$.\\
Set $\widetilde{g}_{\beta_h}^t = \widehat{\nabla}_{\beta_h} \calL(\bm{q},\bm{\beta}; \bm{v}, \bm{w})$ defined as
\begin{align*}
     \widetilde{g}_{\beta_h}^t =\left\{
    \begin{array}{cc}
       \textrm{RobMean} \left( \left\{ \phi(s_{h,j},a_{h,j}) \left(r_{h,j} + v_{h+1}(s'_{h,j}) - \left \langle w_h^t, \phi(s_h,a_h) \right \rangle \right)\right\}_{j=1}^K \right) & \textrm{ if } h \in [H-1] \\
        \textrm{RobMean}\left( \left\{ \phi(s_{h,j},a_{h,j}) \left(r_{h,j}  - \left \langle w_h^t, \phi(s_{h,j},a_{h,j}) \right \rangle  \right) \right\}_{j=1}^K \right) & \textrm{ if } h=H
    \end{array}\right.
\end{align*}
$\beta_h^{t+1} \leftarrow \textrm{Proj}_{\mathcal{B} }(\beta_h^t + \eta_b \cdot \widetilde{g}_{\beta_h}^t)$
}
}
Partition dataset $\calD_c$ uniformly at random into $H$ groups $\{\calD_c^h\}_{h \in [H]}$.\\
\For{$h=1,\ldots,H$}{
 Set $\Bar{w}_h = \frac{1}{T} \sum_{t=1}^T w_h^t$ and $\bar{\beta}_h = \frac{1}{T} \sum_{t=1}^T \beta_h^t$.\\
 Set $\widehat{v}_h = \textrm{RobCovariance}(\calD^h_c) \cdot \bar{\beta}_h$
}
 \Return $\bar{\pi} = (\bar{\pi}_1,\ldots, \bar{\pi}_H)$ and $\widehat{v} = (\widehat{v}_1,\ldots, \widehat{v}_H)$.
\end{algorithm}

Given $w_h, \beta_h$ we define policy $\pi_h$ as
\begin{align*}
    \pi_h(a \mid s) = 
        \frac{\exp(\phi(s,a)^\top w_h)}{\sum_b \exp(\phi(s,b)^\top w_h)} .
\end{align*}
We also define $q^{\pi,\beta}_h$ as
\begin{align*}
    q^{\pi,\beta}_h(s,a) = \left\{ \begin{array}{cc}
       \pi_h(a \mid s) \cdot \rho(s)  & \textrm{ if } h = 1 \\
   \pi_h(a \mid s) \cdot \bm{\mu}_h(s)^\top \Lambda_{h-1} \beta_{h-1}  & \textrm{ o.w. }
    \end{array}\right.
\end{align*}
After substituting $q_h = q^{\pi,\beta}_h$ we obtain the following form of the Lagrangian. 
\begin{align}\label{eq:Lagrangian-form-1}
    \calL(\bm{q}, \bm{\beta}; \bm{v}, \bm{w}) = f(\bm{\pi}, \bm{\beta}, \bm{w}) = \sum_{h=1}^H \left \langle \Lambda_h \theta_h, \beta_h \right \rangle + \sum_{h=1}^H \left \langle w_h, \Phi^\top q^{\pi,\beta}_h - \Lambda_h \beta_h \right \rangle
\end{align}
This also gives us the following expression for derivative with respect to $w_h$.
\begin{align}\label{eq:derivative-wrt-w_h}
    \nabla_{w_h} f(\bm{\pi}, \bm{\beta}, \bm{w}) = \Phi^\top q^{\pi,\beta}_h - \Lambda_h \beta_h
\end{align}

Additionally, if we write $v^{\pi,w}_h(s) = \sum_a \pi_h(a \mid s) \cdot w_h^\top \phi(s,a)$ and $d^\beta_h = E q^{\pi,\beta}_h$ then we obtain the following form of the Lagrangian. 
\begin{align}\label{eq:Lagrangian-form-2}
\calL(\bm{q}, \bm{\beta}; \bm{v}, \bm{w}) = f(\bm{\pi}, \bm{\beta}, \bm{w}) = \sum_{h=1}^H \left \langle \Lambda_h (\theta_h-w_h), \beta_h \right \rangle + \sum_{h=1}^H \left \langle d^\beta_h, v^{\pi,w}_h \right \rangle
\end{align}
And, we can write down the derivative with respect to $\beta_h$ for any $h > 1$ as
\begin{align}\label{eq:derivative-wrt-beta_h}
    \nabla_{\beta_h} f(\bm{\pi}, \bm{\beta}, \bm{w}) = \Lambda_h(\theta_h - w_h) + \sum_{s'} v^{\pi, w}_{h+1}(s') \nabla_{\beta_h} d^\beta_{h+1}(s') = \Lambda_h(\theta_h - w_h) + \sum_{s'} v^{\pi, w}_{h+1}(s') \Lambda_h \bm{\mu}_h(s').
\end{align}
And, for $h=1$ we have,
\begin{align}
    \nabla_{\beta_h} f(\bm{\pi}, \bm{\beta}, \bm{w}) = \Lambda_h(\theta_h - w_h).
\end{align}
Following~\cite{GNOP23} we define the following notion of regret.
\begin{equation}
    \calR(\bm{\beta}^\star, \bm{\pi}^\star, \bm{w}^\star_{1:T}) = \frac{1}{T} \sum_{t=1}^T f( \bm{\beta}^\star, \bm{\pi}^\star, \bm{w}_t) - f(\bm{\beta}_t, \bm{\pi}_t, \bm{w}^\star_t)
\end{equation}
\begin{lemma}\label{lem:regret-upper-bound}
    Suppose $\pi^\star = (\pi^\star_1,\ldots,\pi^\star_H)$ be a policy and $q^{\pi^\star}$ be its state, action occupancy measure. If we set $\beta^\star_h = \Lambda_h^{-1} \Phi^\top q^\star_h$ for each $h=1,\ldots,H$, and $w^\star_{t,h} = w^t_h$ for each $t \in [T]$ and $h \in [H]$, the the policy $\bar{\pi}$ output by algorithm~\eqref{alg:offline-primal-dual} satisfies
    $$
    \E\left[ (q^{\pi^\star} - q^{\bar{\pi}})^\top r\right] \le \calR(\bm{\beta}^\star, \bm{\pi}^\star, \bm{w}^\star_{1:T}) 
    $$
\end{lemma}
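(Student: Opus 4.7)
The plan is to establish the bound in two pieces. First I would show the identity $f(\bm{\pi}^\star,\bm{\beta}^\star,\bm{w}) = (q^{\pi^\star})^\top r$ for every choice of $\bm{w}$, which uses the specific form of $\bm{\beta}^\star$. Then I would identify $\bar\pi$ with the time-average mixture policy (sample $t\in[T]$ uniformly, then follow $\pi_t$ throughout the episode) so that $\E[q^{\bar\pi}]=\frac{1}{T}\sum_{t=1}^T q^{\pi_t}$, and reduce the claim to a comparison between $\frac{1}{T}\sum_t f(\bm{\beta}_t,\bm{\pi}_t,\bm{w}_t)$ and $\frac{1}{T}\sum_t (q^{\pi_t})^\top r$ by invoking the definition of $\calR$.

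For the first step, the key observation is that $\beta^\star_h=\Lambda_h^{-1}\Phi^\top q^{\pi^\star}_h$ is rigged so that the ``virtual'' occupancy $q^{\pi^\star,\beta^\star}_h$ appearing in the definition of $f$ coincides, inductively in $h$, with the true occupancy $q^{\pi^\star}_h$. The base case $h=1$ is immediate since $q^{\pi^\star,\beta^\star}_1(s,a)=\pi^\star_1(a\mid s)\rho(s)=q^{\pi^\star}_1(s,a)$. For the inductive step, using the linear-MDP identity $P_{h-1}(s\mid s',a')=\phi(s',a')^\top\bm{\mu}_{h-1}(s)$ together with $\Lambda_{h-1}\beta^\star_{h-1}=\Phi^\top q^{\pi^\star}_{h-1}$, one computes
\[
\bm{\mu}_{h-1}(s)^\top\Lambda_{h-1}\beta^\star_{h-1}=\sum_{s',a'}P_{h-1}(s\mid s',a')\,q^{\pi^\star}_{h-1}(s',a')=d^{\pi^\star}_h(s),
\]
so $q^{\pi^\star,\beta^\star}_h(s,a)=\pi^\star_h(a\mid s)\cdot d^{\pi^\star}_h(s)=q^{\pi^\star}_h(s,a)$. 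Consequently $\Phi^\top q^{\pi^\star,\beta^\star}_h=\Phi^\top q^{\pi^\star}_h=\Lambda_h\beta^\star_h$, which annihilates the $\bm{w}$-dependent term in the Lagrangian form~\eqref{eq:Lagrangian-form-1} and yields $f(\pi^\star,\beta^\star,\bm{w})=\sum_h\langle\Lambda_h\theta_h,\beta^\star_h\rangle=\sum_h\langle\theta_h,\Phi^\top q^{\pi^\star}_h\rangle=(q^{\pi^\star})^\top r$ independently of $\bm{w}$.

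Plugging this into the definition of $\calR$ with $\bm{w}^\star_t=\bm{w}_t$ gives $\calR = (q^{\pi^\star})^\top r - \frac{1}{T}\sum_t f(\bm{\beta}_t,\bm{\pi}_t,\bm{w}_t)$, while the mixture interpretation of $\bar\pi$ gives $\E[(q^{\pi^\star}-q^{\bar\pi})^\top r]=(q^{\pi^\star})^\top r - \frac{1}{T}\sum_t (q^{\pi_t})^\top r$. The claim therefore reduces to
\[
\frac{1}{T}\sum_t f(\bm{\beta}_t,\bm{\pi}_t,\bm{w}_t)\le\frac{1}{T}\sum_t(q^{\pi_t})^\top r,
\]
which via the performance-difference lemma applied to the linear MDP can be rewritten as
\[
\frac{1}{T}\sum_t\sum_h\langle \Phi^\top q^{\pi_t}_h - \Lambda_h\beta_{t,h},\ \theta_h+\bm{\mu}_h^\top v^{\pi_t,w_t}_{h+1}-w_{t,h}\rangle \ge 0.
\]
I expect this averaged inequality to be the main obstacle: the idea is to exploit that the specific choice $\bm{w}^\star_t=\bm{w}_t$ cancels the dual-player contribution to the duality gap, leaving only a primal-regret quantity in which these Bellman-residual cross terms collapse on average thanks to the gradient-descent update on $\bm{w}_h$ and the boundedness of the feasible sets $\calW$ and $\mathcal{B}$.
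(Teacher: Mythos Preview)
Your first step is correct and is exactly the computation underlying the GNOP23-style argument: with $\beta^\star_h=\Lambda_h^{-1}\Phi^\top q^{\pi^\star}_h$ the ``virtual'' occupancy $q^{\pi^\star,\beta^\star}_h$ coincides with the true one, so the $\bm w$-dependent part of $f$ vanishes and $f(\bm\pi^\star,\bm\beta^\star,\bm w)=(q^{\pi^\star})^\top r$ for all $\bm w$.

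The gap is in your second step. After the (correct) reduction, you are left with
\[
\frac{1}{T}\sum_t f(\bm\pi_t,\bm\beta_t,\bm w_t)\ \le\ \frac{1}{T}\sum_t (q^{\pi_t})^\top r,
\]
equivalently $\sum_t\sum_h\langle \Phi^\top q^{\pi_t}_h-\Lambda_h\beta_{t,h},\ \theta_h+\bm\mu_h^\top v^{\pi_t,w_t}_{h+1}-w_{t,h}\rangle\ge 0$. You do not prove this; you only sketch that ``gradient descent on $\bm w$ and boundedness'' should make it collapse. That cannot work here: this lemma is a \emph{structural} bound that must hold before any optimization-regret argument is invoked (those arguments are applied separately, in the subsequent regret-decomposition and gradient-descent lemmas). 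Indeed, the inequality is false in general: with $H=1$, one state, two actions with features $0$ and $1$, $\theta_1=1$, uniform behavior ($\Lambda_1=1/2$), taking $\beta_{t,1}=4$ and $w_{t,1}=0$ gives $(\pi_t(a_2)-2)\cdot 1<0$, and nothing in the iterates forces this to be nonnegative on average.

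The actual argument (Lemma~4.1 of \cite{GNOP23}) does not take $\bm w^\star_t=\bm w_t$; it takes $w^\star_{t,h}$ to be the $Q$-function parameter of $\pi_t$, i.e.\ $w^\star_{t,h}=\theta_h+\bm\mu_h^\top V^{\pi_t}_{h+1}$, so that $v^{\pi_t,w^\star_t}_h=V^{\pi_t}_h$. With this choice one gets, via the same telescoping you used for $\pi^\star$, the \emph{identity} $f(\bm\pi_t,\bm\beta_t,\bm w^\star_t)=(q^{\pi_t})^\top r$: in form~\eqref{eq:Lagrangian-form-2}, $\theta_h-w^\star_{t,h}=-\bm\mu_h^\top V^{\pi_t}_{h+1}$, and $\langle d^{\beta_t}_{h+1},V^{\pi_t}_{h+1}\rangle=\langle \Lambda_h\beta_{t,h},\bm\mu_h^\top V^{\pi_t}_{h+1}\rangle$, so all cross terms cancel and only $\langle \rho,V^{\pi_t}_1\rangle=V^{\pi_t}$ survives. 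Then $\calR$ equals the suboptimality exactly (not merely bounds it), and no residual inequality is needed. The stated choice $w^\star_{t,h}=w^t_h$ in the lemma appears to be a slip; in any case, your proposed route via an averaged Bellman-residual inequality does not go through.
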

\begin{proof}
The proof is very similar to the proof of lemma 4.1 of \cite{GNOP23}.
\end{proof}
\begin{lemma}
    \label{lem:regret-decomposition}
    With the choice of the parameters as in \Cref{lem:regret-upper-bound}, we have the following regret decomposition.
    \begin{align*}
    \calR(\bm{\beta}^\star, \bm{\pi}^\star, \bm{w}^\star_{1:T})  &= \frac{1}{T} \sum_{t=1}^T \sum_{h=1}^H \left \langle w_{t,h} - w^\star_h, \nabla_{w_h} f(\bm{\pi}_t, \bm{\beta}_t, \bm{w}_t) \right \rangle + \frac{1}{T} \sum_{t=1}^T \sum_{h=1}^H \left \langle \beta^\star_{h} - \beta^\star_{t,h}, \nabla_{\beta_h} f(\bm{\pi}_t, \bm{\beta}_t, \bm{w}_t) \right \rangle\\
    &+ \frac{1}{T} \sum_{t=1}^T \sum_{h=1}^H \sum_{s} q^{\pi^\star}_h(s) \sum_a \left( \pi^\star_h(a \mid s) - \pi_{t,h}(a \mid s)\right) \left \langle w_{t,h}, \phi(s,a) \right \rangle 
    \end{align*}
\end{lemma}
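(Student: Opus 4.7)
Fix an iteration $t$. By the choice $w^\star_{t,h} = w_{t,h}$ in \Cref{lem:regret-upper-bound}, I will interpret the first sum on the right-hand side of the stated decomposition as arising from the generic summand $\langle w_{t,h}-w^\star_{t,h},\nabla_{w_h}f(\bm{\pi}_t,\bm{\beta}_t,\bm{w}_t)\rangle$ (it simply vanishes under that specific choice). The overall strategy is to write $f(\bm{\beta}^\star,\bm{\pi}^\star,\bm{w}_t)-f(\bm{\beta}_t,\bm{\pi}_t,\bm{w}^\star_t)$ as a telescoping sum in the three arguments, introducing the intermediate points $f(\bm{\beta}^\star,\bm{\pi}_t,\bm{w}_t)$ and $f(\bm{\beta}_t,\bm{\pi}_t,\bm{w}_t)$, and then exploit the linearity of $f$ in each argument separately.

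The plan, in order, is the following. First, I would split the per-iteration gap into three differences:
\begin{align*}
&f(\bm{\beta}^\star,\bm{\pi}^\star,\bm{w}_t)-f(\bm{\beta}_t,\bm{\pi}_t,\bm{w}^\star_t) \\
&\quad = \bigl[f(\bm{\beta}^\star,\bm{\pi}^\star,\bm{w}_t)-f(\bm{\beta}^\star,\bm{\pi}_t,\bm{w}_t)\bigr] + \bigl[f(\bm{\beta}^\star,\bm{\pi}_t,\bm{w}_t)-f(\bm{\beta}_t,\bm{\pi}_t,\bm{w}_t)\bigr] + \bigl[f(\bm{\beta}_t,\bm{\pi}_t,\bm{w}_t)-f(\bm{\beta}_t,\bm{\pi}_t,\bm{w}^\star_t)\bigr].
\end{align*}
For the third bracket I would use the representation \eqref{eq:Lagrangian-form-1}, in which $f$ is affine in $\bm{w}$ for fixed $(\bm{\beta},\bm{\pi})$; the difference then equals $\sum_h\langle w_{t,h}-w^\star_{t,h},\,\Phi^\top q^{\pi_t,\beta_t}_h-\Lambda_h\beta_{t,h}\rangle$, which by \eqref{eq:derivative-wrt-w_h} is precisely $\sum_h\langle w_{t,h}-w^\star_{t,h},\nabla_{w_h}f(\bm{\pi}_t,\bm{\beta}_t,\bm{w}_t)\rangle$. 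For the second bracket, I would use the same representation \eqref{eq:Lagrangian-form-1} and note that $q^{\pi_t,\beta}_h$ is affine in $\bm{\beta}$ (indeed linear in $\beta_{h-1}$ through $\bm{\mu}_{h-1}^\top\Lambda_{h-1}\beta_{h-1}$); hence $f(\bm{\beta}^\star,\bm{\pi}_t,\bm{w}_t)-f(\bm{\beta}_t,\bm{\pi}_t,\bm{w}_t)=\sum_h\langle \beta^\star_h-\beta_{t,h},\nabla_{\beta_h}f(\bm{\pi}_t,\bm{\beta}_t,\bm{w}_t)\rangle$, where the explicit gradient formula \eqref{eq:derivative-wrt-beta_h} confirms the identification.

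For the first bracket, which is the only genuinely delicate step, I would switch to the alternative representation \eqref{eq:Lagrangian-form-2}:
\begin{equation*}
f(\bm{\pi},\bm{\beta},\bm{w})=\sum_{h=1}^H\langle\Lambda_h(\theta_h-w_h),\beta_h\rangle+\sum_{h=1}^H\langle d^\beta_h,v^{\pi,w}_h\rangle,
\end{equation*}
which makes clear that when $(\bm{\beta},\bm{w})=(\bm{\beta}^\star,\bm{w}_t)$ is held fixed, the only $\pi$-dependence lies in $v^{\pi,w_t}_h(s)=\sum_a\pi_h(a\mid s)\langle w_{t,h},\phi(s,a)\rangle$, which is linear in $\pi$. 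Therefore
\begin{equation*}
f(\bm{\beta}^\star,\bm{\pi}^\star,\bm{w}_t)-f(\bm{\beta}^\star,\bm{\pi}_t,\bm{w}_t)=\sum_{h=1}^H\sum_s d^{\beta^\star}_h(s)\sum_a\bigl(\pi^\star_h(a\mid s)-\pi_{t,h}(a\mid s)\bigr)\langle w_{t,h},\phi(s,a)\rangle.
\end{equation*}
The remaining, and arguably key, identification is $d^{\beta^\star}_h=q^{\pi^\star}_h$ (the state marginal of $\pi^\star$). This is where I would unfold the definition $d^{\beta^\star}_h(s)=\bm{\mu}_{h-1}(s)^\top\Lambda_{h-1}\beta^\star_{h-1}$ together with the choice $\beta^\star_{h-1}=\Lambda_{h-1}^{-1}\Phi^\top q^{\pi^\star}_{h-1}$ to obtain $d^{\beta^\star}_h(s)=\bm{\mu}_{h-1}(s)^\top\Phi^\top q^{\pi^\star}_{h-1}=\sum_{s',a'}P_{h-1}(s\mid s',a')q^{\pi^\star}_{h-1}(s',a')=q^{\pi^\star}_h(s)$, using $P_h=\bm{\mu}_h\Phi^\top$ and the Bellman flow. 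The base case $h=1$ follows from $d^{\beta^\star}_1=\rho=q^{\pi^\star}_1$.

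The main obstacle is this last identification, because it requires tying the auxiliary variable $\beta^\star$ (a $d$-dimensional object) to the true state-action occupancy of $\pi^\star$ (a distribution over $\calS\times\calA$). Once this bridge between primal occupancies and their low-dimensional surrogates is in place, the three pieces are combined, averaged over $t=1,\ldots,T$, and matched term-by-term with the claimed decomposition to finish.
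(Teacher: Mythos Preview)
Your proof plan is correct and matches the standard telescoping-plus-linearity argument that the paper defers to (it cites \cite{GNOP23}, Lemma 4.2, without giving details): split the per-round gap into the three brackets, use \eqref{eq:Lagrangian-form-1} for affinity in $\bm{w}$ and in $\bm{\beta}$, use \eqref{eq:Lagrangian-form-2} for linearity in $\bm{\pi}$ (after noting $d^{\beta}_h$ is $\pi$-independent), and finish with the identification $d^{\beta^\star}_h=q^{\pi^\star}_h$ via $\beta^\star_{h}=\Lambda_h^{-1}\Phi^\top q^{\pi^\star}_h$ and the Bellman flow. Your reading of the notational slips in the statement ($w^\star_h$ for $w^\star_{t,h}$ and $\beta^\star_{t,h}$ for $\beta_{t,h}$) is also the intended one.
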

\begin{proof}
    The proof is very similar to the proof of lemma 4.2 of \cite{GNOP23}.
\end{proof}

\subsection{\texorpdfstring{Formal Statement and Proof of \Cref{thm:robust-rl-oracle-first-order}}{Formal Statement and Proof of Theorem~(5.3)}}
\begin{theorem}
    Suppose assumptions~\eqref{asn:bounded_coverage}  holds, and $N \ge {\Omega}\left( \frac{H^2 d^4 \nu^4}{\varepsilon^2} (\log^2 d + \log^2 A)\right)$. Then the policy $\bar{\pi}$ output by algorithm~\eqref{alg:offline-primal-dual} is approximately optimal i.e.
    $$
    \max_\pi V^{\pi}(\theta) - \E\left[ V^{\bar{\pi}}(\theta) \right] \le O\left(\nu \sqrt{\varepsilon } H^2 d^{3/2}\right),
    $$
    and the vector $\widehat{v} = (\widehat{v}_1,\ldots, \widehat{v}_H)$ is an approximate sub-gradient to $V^\star(\theta) = \max_\pi V^\pi(\theta)$ i.e.
    $$
    V^\star(\theta') \ge V^\star(\theta) + \sum_{h=1}^H \left \langle \widehat{v}_h, \theta_h\right \rangle -  O\left(\nu \sqrt{\varepsilon } H^2 d^{3/2}\right) \ \forall \theta'.
    $$
\end{theorem}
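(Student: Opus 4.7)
The plan is to follow the classical primal-dual reduction. First I would apply Lemma~\ref{lem:regret-upper-bound} with $\beta^\star_h = \Lambda_h^{-1} \Phi^\top q^{\pi^\star}_h$ and $w^\star_{t,h} = w^t_h$, which upper-bounds the suboptimality $\E[(q^{\pi^\star}-q^{\bar\pi})^\top r]$ by the saddle-point regret $\calR(\bm{\beta}^\star,\bm{\pi}^\star,\bm{w}^\star_{1:T})$. Lemma~\ref{lem:regret-decomposition} then splits this regret into three separate online-learning subregrets: a projected-descent regret on the $w$-iterates, a projected-ascent regret on the $\beta$-iterates, and a softmax / exponential-weights regret on the policy iterates. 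Each of these is bounded by the standard $1/\sqrt{T}$ online-learning guarantee of its subroutine, perturbed by the bias introduced by robust gradient estimation under $\varepsilon$-corruption.

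For the $w$- and $\beta$-components, the key observation is that, on a clean sample, the symbolic gradients $\widetilde g_{w_h}^t$ and $\widetilde g_{\beta_h}^t$ defined in algorithm~\ref{alg:offline-primal-dual} are unbiased estimates of the true gradients derived in the re-parametrised Lagrangian \eqref{eq:reparametrized-Lagrangian}. Projecting onto $\calW$ and $\mathcal{B}$ enforces $\|w_h^t\|_2 \le O(\sqrt{d})$ and $\|\beta_h^t\|_2 \le O(\sqrt{\nu})$ at every iteration (the latter via assumption~\ref{asn:bounded_coverage}), so the per-sample second moment of each gradient is uniformly bounded. Then, standard guarantees for robust mean estimation in $\R^d$ (e.g.\ trimmed mean or spectral filtering) imply that with batch size $K = \Theta(d \log^2(d)/\varepsilon^2)$, the output of \textrm{RobMean} is within $O(\sqrt{\varepsilon})$ in $\ell_2$ (scaled by the second-moment bound) of the true gradient; independence between each batch and the corresponding iterate is enforced by the partitioning of $\calD_m$ into $2HT$ disjoint subsets. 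Plugging these biased-gradient regret bounds into projected gradient descent/ascent, balancing $\eta_w,\eta_\beta,T$ so that the $1/\sqrt{T}$ term matches $\sqrt{\varepsilon}$, and adding the standard $O(\sqrt{\log A / T})$ softmax-policy-mirror-descent regret, yields a total saddle regret of $O(\nu\sqrt{\varepsilon}H^2 d^{3/2})$ after summing over $h \in [H]$.

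The main obstacle I anticipate is tracking the effect of the reparametrisation $\lambda_h = \Lambda_h \beta_h$ on the effective geometry: the descent is on $\beta_h$ while the ``natural'' dual variable is $\lambda_h$, so the effective diameter and gradient norm pick up factors of $\|\Lambda_h\|_{\textrm{op}}$ and $\|\Lambda_h^{-1}\|_{\textrm{op}}$ that must be absorbed into $\nu$ via assumption~\ref{asn:bounded_coverage}. A second subtle point is that the symbolic occupancy $q^t_{h,j}$ at step $h$ itself depends on $\beta^t_{h-1}$, so the per-sample estimator of $\widetilde g_{w_h}^t$ propagates variance through two consecutive batches; handling this requires showing that the resulting second moment is still $\tilde O(\nu^2 d)$ under the projection, which is what drives the sample complexity $N \ge \Omega(H^2 d^4 \nu^4 / \varepsilon^2 (\log^2 d + \log^2 A))$.

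For the subgradient claim, recall from the discussion preceding assumption~\ref{asn:bounded_coverage} that the subdifferential of $V^\star(\theta) = \max_\pi V^\pi(\theta)$ is the convex hull of vectors $(\Phi^\top q^{\pi'}_1,\ldots,\Phi^\top q^{\pi'}_H)$ for $\theta$-optimal occupancies. The saddle-point analysis above shows that the average iterate $\bar{\beta}_h$ and the average policy $\bar{\pi}$ satisfy $\|\Lambda_h \bar{\beta}_h - \Phi^\top q^{\bar\pi}_h\|_2 \le O(\nu\sqrt{\varepsilon}H^2 d^{3/2})$, because this primal feasibility residual is exactly the $w$-gradient that the saddle-point dynamics drives to zero. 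Replacing the unknown $\Lambda_h$ by the robust covariance estimate $\widehat\Lambda_h$ built from the held-out set $\calD_c$ introduces an additional error $\|\bar\beta_h\|_2 \cdot \|\widehat\Lambda_h - \Lambda_h\|_{\textrm{op}} = O(\sqrt{\nu\varepsilon})$, using a standard robust-covariance bound with $|\calD_c| = \Omega(Hd^2/\varepsilon^2 \log^2 d)$ samples. Thus $\widehat v_h = \widehat\Lambda_h \bar{\beta}_h$ is an $O(\nu\sqrt{\varepsilon}H^2 d^{3/2})$-approximate average-feature vector of the near-optimal policy $\bar\pi$; by convexity of $V^\star$ and near-optimality of $\bar\pi$ at $\theta$, we obtain the approximate subgradient inequality for every $\theta'$.
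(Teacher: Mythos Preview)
Your outline tracks the paper's proof: Lemmas~\ref{lem:regret-upper-bound} and~\ref{lem:regret-decomposition} reduce suboptimality to three online-learning regrets, which the paper bounds via Lemmas~\ref{lem:bound-regret-1}, \ref{lem:bound-regret-2}, and~\ref{lem:mirror-descent} with robust-mean gradient estimates (Lemma~\ref{lem:robust-mean-estimation}, batch size $K=\Theta((d/\varepsilon)\log d)$), and the subgradient claim is finished by robust covariance estimation (Lemma~\ref{lem:robust-covariance-estimation}) plus Lemma~\ref{lem:apx-subgradient-additive}.

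Two points where your details depart from the paper. First, the projection radius on $\calW$ is not $O(\sqrt d)$: the paper derives $\|w_h\|_2 = O(Hd)$ from the dual feasibility relation $w_h = \theta_h + \bm{\mu}_h^\top v_{h+1}$ together with the value bound $|v_{h+1}|\le H\sqrt d$, and uses $W = 2H\sqrt d$ throughout the regret lemmas; this extra $H$ factor is precisely what produces the $H^2$ in the final rate. Second, for the subgradient step the paper does \emph{not} argue that the saddle-point dynamics drives the primal feasibility residual $\|\Lambda_h\bar\beta_h - \Phi^\top q^{\bar\pi}_h\|_2$ small. Your argument has two gaps there: the regret bound only controls inner products $\langle w_t - w^\star,\nabla_{w_h} f\rangle$, not the gradient norm, and the $w$-gradient involves the \emph{symbolic} occupancy $q^{\pi,\beta}_h$ (which is built from $\beta_{h-1}$), not the true occupancy $q^{\bar\pi}_h$ under the MDP dynamics. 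The paper instead writes $V^{\bar\pi}(\theta) = \sum_h \langle \Lambda_h\bar\beta_h,\theta_h\rangle$ directly, bounds $\|\widehat v_h - \Lambda_h\bar\beta_h\|_2 \le O(\sqrt\varepsilon\,\nu)$ via robust covariance, and then invokes Lemma~\ref{lem:apx-subgradient-additive} on the linear function $\theta'\mapsto V^{\bar\pi}(\theta')$ combined with the near-optimality of $\bar\pi$ already established in the first part.
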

\begin{proof}
Let $\Lambda_h$ be the feature covariance matrix under the offline policy $\pi_{\tref}$ at time step $h$. Moreover, let $d^\star_h = \E_{(s,a) \sim \pi^\star} [\phi(s,a)]$ and $\beta^\star_h = d^\star_h \Lambda_h^{-1}$. Then by assumption~\eqref{asn:bounded_coverage}, $\norm{\beta^\star}_2 \le \nu$. Therefore, it is sufficient to take diameter of the set $\mathcal{B}$ to be $\nu$. We now bound the diameter of the set $\mathcal{W}$ from the feasiblity condition in the optimization problem~\eqref{eq:dual-LP}. It can be easily seen that given any optimal solution ($\{v_h\}_{h=1}^H, \{w_h\}_{h=1}^H$), we can always choose $w_h = \theta_h + \bm{\mu}_h^\top v_{h+1}$ for any $h \in [H-1]$, and $w_H = \theta_H$. Indeed, if this condition is not satisfied, then we can define the following new set of variables.
$$
\tilde{w}_H = \theta_H \ \textrm{ and } \ \tilde{w}_h = \theta_h + \bm{\mu}_h^\top \tilde{v}_{h+1},\ \tilde{v}_h(s) = \sum_{a} \phi(s,a)^\top \tilde{w}_h \ \textrm{ for } h=H-1,\ldots,1
$$
This new set of variables is feasible to the optimization problem~\eqref{eq:dual-LP} and has objective value bounded above by $\rho^\top v_1$. For linear MDP, the reward at every step is at most $\sqrt{d}$, and hence the value function $v_h(s)$ is at most $H\sqrt{d}$. This implies that for any $h$, $\norm{w_h}_2 \le \norm{\theta_h} + \norm{\bm{\mu}_h^\top \widetilde{v}_{h+1}}_2 \le \sqrt{d} + H \sqrt{d} \norm{\bm{\mu}_h}_2 \le 2Hd$. Therefore, $\norm{w}_2^2 = \sum_{h=1}^H \norm{w_h}_2^2 \le 2 H^2d$, and we can take the diameter of the set $\mathcal{W}$ to be at most $2H\sqrt{d}$.

By lemma~\eqref{lem:regret-upper-bound} and \eqref{lem:regret-decomposition} we can express the suboptimality of value function as follows.
\begin{align*}
    V^{\pi^\star}(\theta) - \E\left[ V^{\bar{\pi}}(\theta)\right] \le & \underbrace{ \frac{1}{T} \sum_{t=1}^T \sum_{h=1}^H \left \langle w_{t,h} - w^\star_h, \nabla_{w_h} f(\bm{\pi}_t, \bm{\beta}_t, \bm{w}_t) \right \rangle}_{:= \reg_1} + \underbrace{\frac{1}{T} \sum_{t=1}^T \sum_{h=1}^H \left \langle \beta^\star_{h} - \beta^\star_{t,h}, \nabla_{\beta_h} f(\bm{\pi}_t, \bm{\beta}_t, \bm{w}_t) \right \rangle}_{:= \reg_2}\\
    &+ \underbrace{\frac{1}{T} \sum_{t=1}^T \sum_{h=1}^H \sum_{s} q^{\pi^\star}_h(s) \sum_a \left( \pi^\star_h(a \mid s) - \pi_{t,h}(a \mid s)\right) \left \langle w_{t,h}, \phi(s,a) \right \rangle }_{:= \reg_3}
\end{align*}
We now apply \Cref{lem:bound-regret-1} with $W = 2 H \sqrt{d}$, $B= \nu$, and $\eta_w = \frac{W}{Bd}\frac{1}{\sqrt{T}} = \frac{H}{\nu \sqrt{dT}}$ to obtain the following bound on the term $\reg_1$.
\begin{equation}
    \reg_1 \le O\left(\nu \sqrt{\varepsilon d} H  \sum_{h=1}^H \norm{\Lambda_h}_2  + \frac{\nu H^2 d^{3/2}}{\sqrt{T}}\right)
\end{equation}
We apply \Cref{lem:bound-regret-2} with $W=2H\sqrt{d}$, $B=\nu$ and $\eta_b = \sqrt{\frac{HB^2}{2T}} \cdot \frac{1}{\sqrt{(d + W^2) Hd^2}} = \frac{\nu}{d^{3/2} \sqrt{2(H^2 + 1)}} \frac{1}{\sqrt{T}}$ to obtain the following bound on the term $\reg_2$.
\begin{equation}
    \reg_2 \le O\left( \sqrt{\varepsilon d} H \sum_{h=1}^H \norm{\Lambda_h}_2 + \frac{H^2 \nu d^{3/2}}{\sqrt{T}}\right)
\end{equation}
For the third term, we apply \Cref{lem:mirror-descent} separately for each $h \in [H]$. In particular, we set $q_t^h = \Phi w_{t,h}$, and $D = \norm{q_t^h}_\infty \le W$.
\begin{align*}
    \reg_3 \le \frac{1}{T}\sum_{h=1}^H\frac{\calH(\pi^\star_h \lVert \pi^h_1)}{\alpha} + \frac{H \alpha W^2}{2}
\end{align*}
We now substitute $W = H\sqrt{d}$, $\calH(\pi^\star_h \lVert \pi^h_1) \le \log A$ and $\alpha = \frac{1}{H}\cdot \sqrt{\frac{2\log A}{dT}}$ to obtain the following bound.
\begin{equation}
    \reg_3 \le O\left( H^2 \sqrt{\frac{d \log A}{T}} \right)
\end{equation}
Using the upper bounds on $\reg_1$, $\reg_2$, and $\reg_3$, we obtain the following upper bound on the suboptimality gap.
\begin{align*}
    V^{\pi^\star}(\theta) - \E\left[ V^{\bar{\pi}}(\theta) \right] \le O\left(\nu \sqrt{\varepsilon d} H  \sum_{h=1}^H \norm{\Lambda_h}_2  + \frac{\nu H^2 d^{3/2}}{\sqrt{T}} + H^2 \sqrt{\frac{d \log A}{T}}\right)
\end{align*}
Now we substitute $\norm{\Lambda_h}_2 \le \mathrm{Trace}(\Lambda_h) \le d$, for any $h \in [H]$. Moreover, we must have $K \ge \Theta((d/\varepsilon) \log d)$ and $N \ge K T H$. If we use $T = \sqrt{N}$ then we need $N \ge \tilde{O}\left( \frac{H^2 d^2}{\varepsilon^2}\right)$. This substitution gives us the following upper bound.
\begin{align*}
    V^{\pi^\star}(\theta) - \E\left[ V^{\bar{\pi}}(\theta) \right] \le O\left(\nu \sqrt{\varepsilon } H^2 d^{3/2}   + \frac{\nu H^2 d^{3/2} + H^2 \sqrt{d \log A}}{N^{1/4}} \right)
\end{align*}
If $N \ge \frac{(\nu d + \sqrt{\log A})^4   }{\nu^4 \varepsilon^2}$ then the second term dominates the first term and we get the following bound.
\begin{align*}
    V^{\pi^\star}(\theta) - \E\left[ V^{\bar{\pi}}(\theta) \right] \le O\left(\nu \sqrt{\varepsilon } H^2 d^{3/2}    \right)
\end{align*}
For any $h \in [H]$, the average of the feature distribution at time-step $h$ is $\E_{(s,a) \sim \bar{\pi}_h}\left[ \phi(s,a) \right] = \frac{1}{T} \sum_{t=1}^T  \E_{(s,a) \sim \pi^t_h}\left[ \phi(s,a) \right] = \frac{1}{T} \sum_{t=1}^T \Phi^\top q^{\pi^t}_h = \frac{1}{T} \sum_{t=1}^T \Lambda_h \beta^t_h = \Lambda_h \bar{\beta}_h$. Algorithm~\eqref{alg:offline-primal-dual} performs a robust covariance estimation of $\Lambda_h$, and then multiplies this estimator to $\bar{\beta}_h$ to obtain the average feature distribution. Give any feature $\phi = \phi(s,a)$, let $X$ be the flattened vector $\phi \phi^\top$. Then each entry of the matrix $XX^\top$ can be expressed as $\phi_i \phi_j \phi_k \phi_\ell$ where $1\le i,j,k,\ell \le m$. This means that $\norm{XX^\top}_F^2 = \sum_{i,j,k,\ell} \phi_i^2 \phi_j^2 \phi_k^2 \phi_\ell^2 = \norm{\phi}_2^8 \le 1$, and $\textrm{cov}(X) \le 2\cdot \Identity$. So we can apply \Cref{lem:robust-covariance-estimation} and conclude that $\norm{\hat{\Lambda}_h - \Lambda_h}_2 \le O(\sqrt{\varepsilon})$ for any $h \in [H]$. Therefore, for any $h \in [H]$, $\norm{\hat{v}_h - \Lambda_h \bar{\beta}_h}_2 \le \norm{\hat{\Lambda}_h - \Lambda_h}_2  \norm{\bar{\beta}_h}_2 \le O\left( \sqrt{\varepsilon} \nu \right)$. This bound also implies that $\norm{(\widehat{v}_1,\ldots,\widehat{v}_H) - \left(\Lambda_1 \bar{\beta}_1,\ldots, \Lambda_H \bar{\beta}_H \right) }_2 \le O\left( \nu \sqrt{H \varepsilon}\right)$.

Now recall that we can write $V^{\bar{\pi}}(\theta) = \sum_{h=1}^H \left \langle \Lambda_h \bar{\beta}_h, \theta_h \right \rangle \ge \sum_{h=1}^H \left \langle \widehat{v}_h, \theta_h \right \rangle - O\left( \nu \sqrt{H \varepsilon}\right) \norm{(\theta_1,\ldots,\theta_H)}_2 \ge \sum_{h=1}^H \left \langle \widehat{v}_h, \theta_h \right \rangle - O\left( \nu H \sqrt{d \varepsilon}\right)$. Since $\bar{\pi}$ is an approximate $O\left( \nu \sqrt{\varepsilon} H^2 d^{3/2}\right)$ optimal policy, and $\widehat{v} = (\widehat{v}_1,\ldots, \widehat{v}_H)$ is an approximate $O\left( \nu \sqrt{\varepsilon} H d^{1/2}\right)$ subgradient of $V^{\bar{\pi}}(\theta)$, we can apply lemma~\eqref{lem:apx-subgradient-additive} to conclude that $\widehat{v} = (\widehat{v}_1,\ldots, \widehat{v}_H)$ is also an approximate  $O\left( \nu \sqrt{\varepsilon} H^2 d^{3/2}\right)$ of the optimal value function with respect to the reward parameter $\theta$.
\end{proof}
We now bound the three terms appearing in lemma~\eqref{lem:regret-decomposition}.

\begin{lemma}\label{lem:bound-regret-1}
Assume $\textrm{diam}(\mathcal{B}) \le B$, $\textrm{diam}(\mathcal{W}) \le W$, and $K \ge \Theta\left( (d/\varepsilon) \log d\right)$. Then we have,
$$
\frac{1}{T} \sum_{t=1}^T \sum_{h=1}^H \left\langle w_{t,h} - w^\star_h, \nabla_{w_h} f(\bm{\pi}^t, \bm{\beta}^t, \bm{w}_t) \right \rangle \le O\left( \sqrt{\varepsilon} W B \sum_{h=1}^H \norm{\Lambda_h}_2   + \frac{HW^2}{ \eta_w T} +  \eta_w B^2 \sum_{h=1}^H \norm{\Lambda_h}_2^2 \right)
$$
with constant probability.
\end{lemma}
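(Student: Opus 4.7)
The plan is to reduce the stated bound to a standard online projected gradient descent (OGD) analysis with an approximate gradient, where the approximation error is controlled by the robust mean guarantee. Write $\nabla_t := \nabla_{w_h} f(\bm{\pi}^t,\bm{\beta}^t,\bm{w}_t) = \Phi^\top q^{\pi^t,\beta^t}_h - \Lambda_h \beta^t_h$ (from \eqref{eq:derivative-wrt-w_h}) and $\widetilde{g}_t := \widetilde{g}^t_{w_h}$ for its robust estimate in line 8 of Algorithm \eqref{alg:offline-primal-dual}. Start from the decomposition
\begin{align*}
\sum_{t=1}^T \langle w_{t,h} - w^\star_h, \nabla_t\rangle \;=\; \underbrace{\sum_{t=1}^T \langle w_{t,h} - w^\star_h, \widetilde{g}_t\rangle}_{(A)} \;+\; \underbrace{\sum_{t=1}^T \langle w_{t,h} - w^\star_h, \nabla_t - \widetilde{g}_t\rangle}_{(B)},
\end{align*}
and handle each piece.

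For $(A)$, since $w_{t+1,h}$ is the Euclidean projection of $w_{t,h} - \eta_w \widetilde g_t$ onto $\calW$ and $\mathrm{diam}(\calW)\le W$, the Pythagorean-inequality telescoping argument used in \Cref{thm:projected-subgradient-first-order} gives the classical bound
\begin{align*}
(A) \;\le\; \frac{W^2}{2\eta_w} \;+\; \frac{\eta_w}{2}\sum_{t=1}^T \|\widetilde{g}_t\|_2^2 .
\end{align*}
For $(B)$, I use Cauchy–Schwarz with $\|w_{t,h} - w^\star_h\|_2 \le W$, reducing everything to a uniform bound on $\|\nabla_t - \widetilde g_t\|_2$.

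The next step is to control $\|\nabla_t - \widetilde g_t\|_2$ and $\|\widetilde g_t\|_2$ through the robust mean guarantee. Denote $X_j := \Phi^\top q^t_{h,j} - \phi(s^2_{h,j},a^2_{h,j})\langle \beta^t_h, \phi(s^2_{h,j},a^2_{h,j})\rangle$; unpacking the symbolic definition of $q^t_{h,j}$ shows $\E[X_j]=\nabla_t$. Using $\|\phi\|_2\le 1$, $\|\beta^t_{h-1}\|_2,\|\beta^t_h\|_2 \le B$, the term $\phi\phi^\top\beta$ has second moment dominated by $B^2\Lambda_h$, and the $\Phi^\top q^t_{h,j}$ term similarly factorizes through $\Lambda_h$ via the one-step transition, yielding $\|\mathrm{Cov}(X_j)\|_{\mathrm{op}}\le O(B^2\|\Lambda_h\|_2^2)$. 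Applying a standard robust mean estimator for distributions with bounded covariance with $K \ge \Omega((d/\varepsilon)\log d)$ samples, plus a union bound over $t,h$ absorbed into the constants, gives with constant probability
\begin{align*}
\|\widetilde{g}_t - \nabla_t\|_2 \;\le\; O\bigl(B\|\Lambda_h\|_2\sqrt{\varepsilon}\bigr),
\qquad
\|\widetilde{g}_t\|_2 \;\le\; \|\nabla_t\|_2 + O\bigl(B\|\Lambda_h\|_2\sqrt{\varepsilon}\bigr) \;\le\; O\bigl(B\|\Lambda_h\|_2\bigr),
\end{align*}
the last step using $\|\nabla_t\|_2 \le \|\Phi^\top q^{\pi^t,\beta^t}_h\|_2 + \|\Lambda_h\|_2 B \le O(B\|\Lambda_h\|_2)$ (unfolding $q^{\pi^t,\beta^t}_h$ via the one-step transition and invoking $\|\bm{\mu}_{h-1}\|\le\sqrt{d}$ together with $\|\beta^t_{h-1}\|_2\le B$).

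Plugging these bounds into $(A)$ and $(B)$, summing over $h\in[H]$, and dividing by $T$ yields the three advertised terms: $HW^2/(\eta_w T)$ from the OGD warm-up, $\eta_w B^2 \sum_h \|\Lambda_h\|_2^2$ from the gradient-norm contribution, and $\sqrt{\varepsilon}W B \sum_h \|\Lambda_h\|_2$ from the corruption bias. The main technical obstacle is the second step: identifying the covariance of $X_j$ carefully enough to see the $\|\Lambda_h\|_2$ scaling (rather than a worst-case $\|X_j\|_\infty$-type bound) and pushing it through the robust mean oracle. A crude analysis would replace $\|\Lambda_h\|_2$ by $1$ in all three places and degrade the final proposition's sample complexity; the non-trivial observation is that the random part of $X_j$ is essentially $\phi\phi^\top \beta$, whose covariance is controlled by the behavior-policy covariance $\Lambda_h$, which is exactly what makes the bound instance-adaptive.
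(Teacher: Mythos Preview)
Your proposal is correct and follows essentially the same approach as the paper: compute the mean and second moment of the per-sample gradient estimate, invoke the robust-mean guarantee (\Cref{lem:robust-mean-estimation}) to control $\|\widetilde g_t-\nabla_t\|_2$ and hence $\|\widetilde g_t\|_2$, then feed these into the standard projected-OGD telescoping bound and a Cauchy--Schwarz bias term. The only cosmetic difference is that the paper phrases the OGD step via the stochastic-gradient lemma (\Cref{lem:online-sgd}) applied to $\E[\widetilde g^t_{w_h}]$, whereas you run the deterministic telescoping argument directly on the realized $\widetilde g_t$; your version is arguably cleaner, since the robust-mean output is not an unbiased estimator.
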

\begin{proof}
    Let $\bar{g}^t_{w_h} =\frac{1}{K} \sum_{j=1}^K \Phi^\top q_{h,j}^t - \phi(s^2_{h,j},a^2_{h,j}) \cdot \left \langle \beta_h^t, \phi(s^2_{h,j}, a^2_{h,j}) \right \rangle $. From the definition of $q^t_h$ in algorithm~\eqref{alg:offline-primal-dual}, we have for any $h > 1$,
    \begin{align*}
    \E_{\mu^h_\tref} \left[q^t_{h,j}(\tilde{s},\tilde{b})\right] &= 
    \E_{\mu^h_\tref} \left[\pi^t_h(\tilde{b}\mid \tilde{s}) \cdot  \one \set{s'_{h,j} = \tilde{s}} \phi(s_{h,j},a_{h,j})^\top \beta_{h-1}^t \right]\\
    &= \pi^t_h(\tilde{b}\mid \tilde{s}) \cdot \E_{(s,a) \sim \mu^h_\tref} \left[ P_{h-1}(\tilde{s} \mid s,a) \phi(s,a)^\top \beta^t_{h-1}\right]\\
    &= \pi^t_h(\tilde{b}\mid \tilde{s}) \cdot \bm{\mu}_{h-1}(\tilde{s})^\top \E_{(s,a) \sim \mu^{h-1}_\tref} \left[ \phi(s,a) \phi(s,a)^\top \beta^t_{h-1}\right]\\
    &=\pi^t_h(\tilde{b}\mid \tilde{s}) \cdot \bm{\mu}_{h-1}(\tilde{s})^\top \Lambda_{h-1} \beta^t_{h-1} = q^{\pi^t,\beta_t}_h(\tilde{s},\tilde{b})
    \end{align*}
    Additionally $\E_{\mu^h_\tref} \left[q^t_{h,j}(\tilde{s},\tilde{b})\right] = \pi^t_h(\tilde{b}\mid \tilde{s}) \cdot \rho(\tilde{s}) = q^{\pi^t, \beta^t}_h(\tilde{s}, \tilde{b})$. 
    We now bound on the deviation of the estimator $\widetilde{g}^t_{w_h}$ from $\nabla_{w_h} f(\bm{\pi}_t, \bm{\beta}^t, \bm{w}_t)$.
    \begin{align*}
       \E_{\mu^h_\tref, \calD^{t,h}} \left[ \bar{g}^t_{w_h} \right] &= \frac{1}{K} \sum_{j=1}^K \cdot \E_{\mu^h_\tref, \calD^{t,h}} \left[\Phi^\top q_{h,j}^t - \phi(s_{h,j},a_{h,j}) \cdot \left \langle \beta_h^t, \phi(s_{h,j}, a_{h,j}) \right \rangle \right]\\
       &= \Phi^\top q^{\pi^t, \beta^t}_h - \E_{(s,a) \sim \mu^h_\tref}\left[ \phi(s,a) \phi(s,a)^\top \beta^t_h \right]\\
       &= \Phi^\top q^{\pi^t, \beta^t}_h - \Lambda_h \beta^t_h\\
       &= \nabla_{w_h} f(\bm{\pi}_t, \bm{\beta}^t, \bm{w}_t) \quad \textrm{[By \cref{eq:derivative-wrt-w_h}]}
    \end{align*}
    Let $\phi_{h,j} = \phi(s_{h,j}, a_{h,j})$. Then we have,
    \begin{align*}
        &\E_{\mu^h_\tref}\left[ \norm{\Phi^\top q^t_{h,j} - \phi_{h,j} \cdot \left \langle \phi_{h,j}, \beta^t_h\right \rangle  }_2^2\right] \le 2 \E_{\mu^h_\tref}\left[ \norm{\Phi^\top q^t_{h,j} }_2^2 \right] + 2 \E_{\mu^h_\tref} \left[ \norm{\phi_{h,j} \cdot \left \langle \phi_{h,j}, \beta^t_h\right \rangle }_2^2 \right]\\
        &\le 2 + 2  \cdot \E_{\mu^h_\tref} \left[ (\beta^t_h)^\top \phi_{h,j} \phi_{h,j}^\top \beta^t_h \right] = 2 + 2 \cdot \norm{\beta^t_h}^2_{\Lambda_h} \le 2 \cdot \left(1 + B^2 \norm{\Lambda_h}_2^2\right)
    \end{align*}
    The second inequality uses the fact that the norm of the features is bounded by one, and exactly one entry of $q^t_{h,j}$ is set to one. The above bound also implies that $\E_{\mu^h_\tref}\left[ \norm{\bar{g}^t_{w_h}}_2^2\right] \le 2 \cdot \left(1 + B^2 \norm{\Lambda_h}_2^2\right)$.
   Now, observe that $\eps$-fraction of the dataset $\calD_1^{t,h}$ is corrupted, and we apply robust mean to obtain the estimator $\widetilde{g}^t_{w_h}$. Therefore, we can apply \cref{lem:robust-mean-estimation} with $\sigma^2 = 4 \cdot \left(1 + B^2 \norm{\Lambda_h}_2^2\right)$ to obtain the following bound (as long as $K \ge \Theta((d/\varepsilon) \log d)$).
   \begin{equation}
       \label{eq:bound-robust-mean}
       \norm{\widetilde{q}^t_{w_h} - \nabla_{w_h} f(\bm{\pi}^t, \bm{\beta}^t, \bm{w}_t)}_2 \le O\left( \sqrt{\varepsilon}  B \norm{\Lambda_h}_2\right)
   \end{equation}
   The above bound also implies the following upper bound on the $L_2$-norm $\widetilde{g}^t_{w_h}$.
   \begin{align*}
       \norm{\widetilde{g}^t_{w_h}}_2 &\le O\left( \sqrt{\varepsilon}  B \norm{\Lambda_h}_2\right) + \norm{\nabla_{w_h} f(\bm{\pi}^t, \bm{\beta}^t, \bm{w}_t)}_2\\
       &\le O\left( \sqrt{\varepsilon}  B \norm{\Lambda_h}_2\right) + \norm{\Phi^\top q^{\pi^t, \beta^t}_h - \Lambda_h \beta^t_h}_2\\
       &\le O\left( \sqrt{\varepsilon}  B \norm{\Lambda_h}_2\right) + \norm{\Phi^\top q^{\pi^t, \beta^t}_h}_2 + \norm{ \Lambda_h \beta^t_h}_2\\
       &\le O\left( \sqrt{\varepsilon}  B \norm{\Lambda_h}_2\right) + \sum_{s,a} q^{\pi^t, \beta^t}_h(s,a) \norm{\phi(s,a)}_2 + \norm{ \Lambda_h}_2 \norm{ \beta^t_h}_2\\
       &\le O\left( \sqrt{\varepsilon}  B \norm{\Lambda_h}_2\right) + 1 + B\norm{ \Lambda_h}_2 = O\left( B\norm{ \Lambda_h}_2 \right)
   \end{align*}
   The penultimate inequality uses the fact that $q^{\pi^t, \beta^t}_h$ is a probability distribution over the state, action pairs and the feature norms are bounded by one. 

   Let us write $\widetilde{g}^t_{\mathbf{w}} = (\widetilde{g}^t_{w_1}, \ldots, \widetilde{g}^t_{w_H})$. Then $\norm{\widetilde{g}^t_{\mathbf{w}}}_2^2 \le O\left(  B^2 \sum_{h=1}^H \norm{\Lambda_h}_2^2 \right)$. Furthermore, for any $t$ and $h$, $\norm{w^t_h}_2^2 \le W^2 $. Therefore, $\norm{\mathbf{w}^t}_2^2 \le HW^2$. So we can apply lemma~\eqref{lem:online-sgd} to obtain the following bound.
   \begin{align*}
       &\frac{1}{T} \sum_{t=1}^T \sum_{h=1}^H \left\langle w_{t,h} - w^\star_h, \nabla_{w_h} f(\bm{\pi}^t, \bm{\beta}^t, \bm{w}_t) \right \rangle \\
       &\le  \frac{1}{T} \sum_{t=1}^T \sum_{h=1}^H \left\langle w_{t,h} - w^\star_h, \E\left[ \widetilde{g}^t_{w_h}\right] \right \rangle +  \frac{1}{T} \sum_{t=1}^T \sum_{h=1}^H \norm{ w_{t,h} - w^\star_h}_2 \cdot O\left( \sqrt{\varepsilon}B \norm{\Lambda_h}_2 \right)\\
       &\le O\left( \sqrt{\varepsilon} W B \sum_{h=1}^H \norm{\Lambda_h}_2 \right) + \frac{HW^2}{2 \eta_w T} + O\left( \eta_w B^2 \sum_{h=1}^H \norm{\Lambda_h}_2^2 \right)
   \end{align*}
\end{proof}

\begin{lemma}\label{lem:bound-regret-2}
    Assume $\textrm{diam}(\mathcal{B}) \le B$, $\textrm{diam}(\calW) \le W$, and $K \ge \Theta((d/\varepsilon) \log d)$. Then we have,
    $$
    \frac{1}{T} \sum_{t=1}^T \sum_{h=1}^H \left\langle \beta^\star_h - \beta_{t,h}, \nabla_{\beta_h} f(\bm{\pi}^t, \bm{\beta}^t, \bm{w}_t) \right \rangle  \le O\left( \sqrt{\varepsilon} (\sqrt{d} + W)  \sum_{h=1}^H \norm{\Lambda_h}_2 \right) + \frac{HB^2}{2 \eta_b T} + O\left( \eta_b (d+ W^2) \sum_{h=1}^H \norm{\Lambda_h}_2^2 \right)
    $$
    with constant probability.
\end{lemma}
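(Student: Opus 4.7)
The proof mirrors the structure of \cref{lem:bound-regret-1}, with the primal and dual players swapped. First I would introduce the uncorrupted sample-mean analogue
\begin{align*}
\bar g^t_{\beta_h} = \frac{1}{K}\sum_{j=1}^{K}\phi(s_{h,j},a_{h,j})\bigl(r_{h,j}+v^t_{h+1}(s'_{h,j})-\langle w^t_h,\phi(s_{h,j},a_{h,j})\rangle\bigr),
\end{align*}
(omitting the $v^t_{h+1}$ term when $h=H$) and verify $\E[\bar g^t_{\beta_h}]=\nabla_{\beta_h}f(\bm{\pi}^t,\bm{\beta}^t,\bm{w}^t)$ using the linear-MDP identities $\E_{(s,a)\sim\mu^h_\tref}[\phi\phi^\top]=\Lambda_h$, $\E_{s'\sim P_h(\cdot\mid s,a)}[v^t_{h+1}(s')]=\phi(s,a)^\top \bm{\mu}_h^\top v^t_{h+1}$, and $\E[\phi\,\langle w^t_h,\phi\rangle]=\Lambda_h w^t_h$; comparing with \eqref{eq:derivative-wrt-beta_h} gives unbiasedness. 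The independence of $\mathcal{D}^{t,h}_2$ from the current iterates $(\bm{\pi}^t,\bm{\beta}^t,\bm{w}^t)$ follows from the two-phase per-round schedule of \cref{alg:offline-primal-dual}, which fixes all $w^t_{h'}$ before any $\beta$-update in round $t$.

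\textbf{Second moment and robust mean.} Using $|r_h|\le\sqrt{d}$ (since $r_h=\phi^\top\theta_h$ with $\norm{\theta_h}_2\le\sqrt{d}$), $|v^t_{h+1}(s')|\le W$ (from $v^t_h(s)=\sum_a\pi^t_h(a\mid s)\langle w^t_h,\phi(s,a)\rangle$ and $\norm{w^t_h}_2\le W$), and $|\langle w^t_h,\phi\rangle|\le W$, the scalar inside the summand is bounded by $\sqrt{d}+2W$; combined with $\norm{\phi}_2\le 1$ this yields $\E[\norm{X}_2^2]\le (\sqrt{d}+2W)^2\norm{\Lambda_h}_2^2$ in the same form as the variance bound in \cref{lem:bound-regret-1}. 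With $K\ge\Theta((d/\varepsilon)\log d)$, \cref{lem:robust-mean-estimation} then gives $\norm{\widetilde g^t_{\beta_h}-\nabla_{\beta_h}f}_2 = O(\sqrt{\varepsilon}(\sqrt{d}+W)\norm{\Lambda_h}_2)$. Combined with the deterministic bound $\norm{\nabla_{\beta_h}f}_2 \le \norm{\Lambda_h}_2(\norm{\theta_h-w^t_h}_2+\norm{\bm{\mu}_h^\top v^t_{h+1}}_2) = O((\sqrt{d}+W)\norm{\Lambda_h}_2)$, which uses \eqref{eq:derivative-wrt-beta_h} and the linear-MDP bound $\norm{\bm{\mu}_h(\mathcal{S})}_2\le\sqrt{d}$ applied to the bounded function $v^t_{h+1}$, a triangle inequality then gives $\norm{\widetilde g^t_{\beta_h}}_2 = O((\sqrt{d}+W)\norm{\Lambda_h}_2)$.

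\textbf{Online SGD aggregation and main difficulty.} Stacking across $h$ so that $\widetilde{\bm g}^t_{\bm\beta}=(\widetilde g^t_{\beta_1},\ldots,\widetilde g^t_{\beta_H})$ satisfies $\norm{\widetilde{\bm g}^t_{\bm\beta}}_2^2 = O((d+W^2)\sum_h \norm{\Lambda_h}_2^2)$, and using $\norm{\bm\beta^\star-\bm\beta^t}_2^2\le HB^2$, I would invoke \cref{lem:online-sgd} on the $\bm\beta$-ascent sequence with step size $\eta_b$. Splitting each inner product as $\langle\beta^\star_h-\beta^t_h,\E[\widetilde g^t_{\beta_h}]\rangle + \langle\beta^\star_h-\beta^t_h,\nabla_{\beta_h}f - \E[\widetilde g^t_{\beta_h}]\rangle$ and bounding the bias term via Cauchy--Schwarz and the robust-mean guarantee produces
\begin{align*}
\frac{1}{T}\sum_{t=1}^{T}\sum_{h=1}^{H}\langle\beta^\star_h-\beta^t_h,\nabla_{\beta_h}f(\bm{\pi}^t,\bm{\beta}^t,\bm{w}_t)\rangle
\le \frac{HB^2}{2\eta_b T} + O\Bigl(\eta_b(d+W^2)\sum_h\norm{\Lambda_h}_2^2\Bigr) + O\Bigl(\sqrt{\varepsilon}(\sqrt{d}+W)B\sum_h\norm{\Lambda_h}_2\Bigr),
\end{align*}
as required. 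The main difficulty relative to \cref{lem:bound-regret-1} is that $\nabla_{\beta_h}f$ combines the $\phi\phi^\top$ contribution with a next-state value $v^t_{h+1}(s')$, so the variance analysis must also absorb the transition randomness; this is resolved by observing that $v^t_{h+1}$ is a uniformly bounded scalar function of $s'$, which keeps every per-sample summand of the form ``$\phi\cdot(\text{bounded scalar})$'' and thereby reduces the robust-mean step to the same template as the $w$-update.
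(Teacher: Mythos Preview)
Your proposal is correct and follows essentially the same approach as the paper: define the uncorrupted sample mean $\bar g^t_{\beta_h}$, verify unbiasedness via the linear-MDP identities, bound the per-sample second moment, invoke \cref{lem:robust-mean-estimation} to control the bias of $\widetilde g^t_{\beta_h}$, bound $\norm{\widetilde g^t_{\beta_h}}_2$ by triangle inequality, and then apply \cref{lem:online-sgd} with the bias split off via Cauchy--Schwarz. The only difference is cosmetic: the paper bounds the second moment term-by-term through $\norm{\theta_h}^2_{\Lambda_h}$ and $\norm{w^t_h}^2_{\Lambda_h}$, whereas you bound the scalar factor uniformly by $\sqrt{d}+2W$ and use $\norm{\phi}_2\le 1$.

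One small slip: your uniform-bound argument gives $\E[\norm{X}_2^2]\le(\sqrt{d}+2W)^2$, not $(\sqrt{d}+2W)^2\norm{\Lambda_h}_2^2$ as you wrote; the $\norm{\Lambda_h}_2$ factor only emerges if you push the expectation through $\phi\phi^\top$ as the paper does. Since $\norm{\Lambda_h}_2\le 1$ here (features have unit norm), the factor is harmless and both routes lead to the same $O(\sqrt{\varepsilon}(\sqrt{d}+W))$ robust-mean error, so this does not affect the final bound.
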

\begin{proof}
    Recall that algorithm~\eqref{alg:offline-primal-dual} defines $v^t_h(s) = \sum_a \pi^t_h(a \mid s) \cdot \phi(s,a)^\top w^t_h$. Let us define the gradient $\bar{g}^t_{\beta_h}$ as follows.
    \begin{align*}
     \bar{g}_{\beta_h}^t =\left\{
    \begin{array}{cc}
       \frac{1}{K} \sum_{j=1}^K  \phi(s_{h,j},a_{h,j}) \left(r_{h,j} + v_{h+1}(s'_{h,j}) - \left \langle w_h^t, \phi(s_h,a_h) \right \rangle \right) & \textrm{ if } h \in [H-1] \\
        \frac{1}{K} \sum_{j=1}^K \phi(s_{h,j},a_{h,j}) \left(r_{h,j}  - \left \langle w_h^t, \phi(s_{h,j},a_{h,j}) \right \rangle  \right)  & \textrm{ if } h=H
    \end{array}\right.
\end{align*}
We will write $\phi_{h,j} = \phi(s_{h,j}, a_{h,j})$. Then for any $h \in [H-1]$ we have,
\begin{align*}
    &\E_{\mu^h_\tref}\left[ \frac{1}{K} \sum_{j=1}^K \phi_{h,j} \left(\theta_h^\top \phi_{h,j} + v^t_{h+1}(s'_{h,j}) - \left \langle w^t_h, \phi_{h,j} \right \rangle \right)\right]\\
    &= \E_{\mu^h_\tref}\left[ \phi_{h,j} \left(\theta_h^\top \phi_{h,j} + v^t_{h+1}(s'_{h,j}) - \left \langle w^t_h, \phi_{h,j} \right \rangle \right) \right]\\
    &= \E_{(s,a) \sim \mu^h_\tref}\left[ \phi(s,a) \phi(s,a)^\top \theta_h \right] + \E_{(s,a) \sim \mu^h_\tref}\left[ \sum_{s'} P_h(s' \mid s,a) v^t_{h+1}(s') \phi(s,a)\right] - \E_{(s,a) \sim \mu^h_\tref}\left[ \phi(s,a) \phi(s,a)^\top w^t_h\right]\\
    &= \Lambda_h (\theta_h - w^t_h) + \sum_{s'}  \Lambda_h \bm{\mu}_h(s') v^{\pi^t, w}_{h+1}(s')\\
    &= \nabla_{\beta_h} f(\bm{\beta}^t, \bm{\pi}^t, \bm{w}^\star_t) \quad \textrm{[By \cref{eq:derivative-wrt-beta_h}]}
\end{align*}
Moreover,
\begin{align*}
    &\E_{\mu^h_\tref}\left[ \norm{\phi_{h,j} \left(\theta_h^\top \phi_{h,j} + v^t_{h+1}(s'_{h,j}) - \left \langle w^t_h, \phi_{h,j} \right \rangle \right)}_2^2 \right]\\
    &\le 2\E_{\mu^h_\tref}\left[ \norm{\phi_{h,j} \left(\theta_h^\top \phi_{h,j} + v^t_{h+1}(s'_{h,j})\right)}_2^2 \right] + 2\E_{\mu^h_\tref}\left[ \norm{\phi_{h,j} \left \langle w^t_h, \phi_{h,j} \right \rangle }_2^2 \right]\\
    &\le 4 \E_{\mu^h_\tref}\left[ \norm{\phi_{h,j} \cdot \theta_h^\top \phi_{h,j}}_2^2 \right] + 4 \E_{\mu^h_\tref}\left[ \norm{\phi_{h,j} \cdot v^t_{h+1}(s'_{h,j})}_2^2 \right] + 2\E_{\mu^h_\tref}\left[ \norm{\phi_{h,j} \left \langle w^t_h, \phi_{h,j} \right \rangle }_2^2 \right]\\
    &\le 4 \E_{\mu^h_\tref}\left[ \norm{\phi(s,a)}_2^2 \theta_h^\top  \phi(s,a) \phi(s,a)^\top \theta_h \right] + 4 \E_{\mu^h_\tref}\left[ \norm{\sum_{s'} P_h(s' \mid s,a) v^t_{h+1}(s') \cdot \phi(s,a)}_2^2\right]\\
    &+ 2\E_{\mu^h_\tref}\left[ (w_h^t)^\top \phi(s,a) \phi(s,a)^\top w_h^t \right]\\
    &\le 4 \norm{\theta_h^t}_{\Lambda_h}^2 + 2 \norm{w^t_h}_{\Lambda_h}^2 + 4\E_{\mu^h_\tref}\left[ (w_h^t)^\top \phi(s,a) \phi(s,a)^\top w_h^t \right]\\
    &\le 4 \norm{\theta_h^t}_{\Lambda_h}^2 + 6 \norm{w^t_h}_{\Lambda_h}^2 \le \left(4d + 6W^2\right) \norm{\Lambda_h}_2^2
\end{align*}
The fourth inequality uses the definition of $v^t_{h+1}$ and $\norm{\phi(s,a)}_2 \le 1$. The final inequality uses $\norm{\theta^t_h}_2 \le \sqrt{d}$ and $\norm{w^t_h}_2 \le W$. The above bound implies that for any $h \in [H]$, $\E_{\mu^h_\tref}\left[ \norm{\bar{g}^t_{\beta_h}}_2^2\right] \le \left(4d + 6W^2\right) \norm{\Lambda_h}_2^2$. Now, observe that $\varepsilon$-fraction of the dataset $\calD^{t,h}_2$
 is corrupted, and we apply robust mean to obtain the estimator $\tilde{g}^t_{\beta_h}$.
 Therefore, we can apply Lemma ~\eqref{lem:robust-mean-estimation} with $\sigma^2 = \left(4d + 6W^2\right) \norm{\Lambda_h}_2^2$ to obtain the following bound (as long as $K \ge \Theta((d/\varepsilon) \log d)$.
 \begin{align}
     \norm{\tilde{g}^t_{\beta_h} - \nabla_{\beta_h} f(\bm{\pi}^t, \bm{\beta}^t, \bm{w}_t)}_2 \le O(\sqrt{\varepsilon (d + W^2} \norm{\Lambda_h}_2)
 \end{align}
 Furthermore, the above bound also implies the following upper bound on the $L_2$-norm of $\tilde{g}^t_{\beta_h}$.
 \begin{align*}
     \norm{\tilde{g}^t_{\beta_h}}_2 &\le O\left(\sqrt{\varepsilon} (\sqrt{d} + W) \norm{\Lambda_h}_2\right) + \norm{ \nabla_{\beta_h} f(\bm{\pi}^t, \bm{\beta}^t, \bm{w}_t)}_2\\
     &\le O\left(\sqrt{\varepsilon} (\sqrt{d} + W) \norm{\Lambda_h}_2\right) + \norm{\Lambda_h(\theta_h - w_h) + \sum_{s'} v^{\pi, w}_{h+1}(s') \Lambda_h \bm{\mu}_h(s')}_2
 \end{align*}
 From the definition of value function we have $v^{\pi,w}_{h+1}(s') \le \abs{\sum_{b'} \pi^t_{h+1}(b' \mid s') \phi(s',b')^\top w^t_{h+1}} \le \sum_{b'} \pi^t_{h+1}(b' \mid s') \norm{\phi(s',b')}_2 \norm{w^t_{h+1}}_2 \le W$ as feature norms are bounded by one. This result gives us the following upper bound.
  \begin{align*}
     \norm{\tilde{g}^t_{\beta_h}}_2 \le O\left(\sqrt{\varepsilon} (\sqrt{d} + W) \norm{\Lambda_h}_2\right) + \norm{\Lambda_h(\theta_h - w_h)}_2 + \norm{\bm{\mu}_h \Lambda_h }_2 \le O\left( (\sqrt{d} + W) \norm{\Lambda_h}_2\right)
    \end{align*}
    Let us now write $\tilde{g}^t_{\bm{\beta}} = (\tilde{g}^t_{w_1}, \ldots, \tilde{g}^t_{w_H})$. Then $\norm{\tilde{g}^t_{\bm{\beta}}}_2^2 \le O\left( (d + W^2)\sum_{h=1}^H \norm{\Lambda_h}_2^2\right)$. Furthermore, for any $t$ and $h$, $\norm{\beta_h}_2 \le B$. Therefore, $\norm{\bm{\beta}}_2^2 \le HB^2$. So we can apply \Cref{lem:online-sgd} to obtain the following bound.
    \begin{align*}
       &\frac{1}{T} \sum_{t=1}^T \sum_{h=1}^H \left\langle \beta^\star_h - \beta_{t,h}, \nabla_{\beta_h} f(\bm{\pi}^t, \bm{\beta}^t, \bm{w}_t) \right \rangle \\
       &\le  \frac{1}{T} \sum_{t=1}^T \sum_{h=1}^H \left\langle \beta^\star_h - \beta_{t,h}, \E\left[ \widetilde{g}^t_{\beta_h}\right] \right \rangle +  \frac{1}{T} \sum_{t=1}^T \sum_{h=1}^H \norm{ \beta_{t,h} - \beta^\star_h}_2 \cdot O\left( \sqrt{\varepsilon}(\sqrt{d} + W) \norm{\Lambda_h}_2 \right)\\
       &\le O\left( \sqrt{\varepsilon} (\sqrt{d} + W)  \sum_{h=1}^H \norm{\Lambda_h}_2 \right) + \frac{HB^2}{2 \eta_b T} + O\left( \eta_b (d+ W^2) \sum_{h=1}^H \norm{\Lambda_h}_2^2 \right)
   \end{align*}
\end{proof}

\begin{lemma}[Online Stochastic Gradient Descent]\label{lem:online-sgd}
    Let $y_1 \in W$, and $\eta > 0$. Define the sequence $y_2, \ldots, y_{n+1}$ and $h_1,\ldots,h_n$ such that for $k=1,\ldots,n$ 
    $$
    y_{k+1} = \textrm{Proj}_W\left( y_k + \eta \widehat{h}_k\right)
    $$
    and $\widehat{h}_k$ satisfies $\E\left[ \widehat{h}_k \mid \calF_{k-1}\right] = h_k$ and $\E\left[ \norm{\widehat{h}_k}_2^2 \mid \calF_{k-1}\right] \le G^2$. Then for any $y^\star \in W$,
    $$
    \E\left[ \sum_{k=1}^n \left\langle y^\star - y_k, h_k \right \rangle  \right] \le \frac{\norm{y_1 - y^\star}_2^2}{2\eta} + \frac{\eta n G^2}{2}.
    $$
\end{lemma}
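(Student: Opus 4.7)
The statement is the classical regret bound for (projected) online stochastic gradient ascent applied to the linear ``losses'' $\langle -h_k, y\rangle$, so the plan is to carry out the standard one-step potential argument with potential function $\Phi_k = \tfrac{1}{2}\norm{y_k - y^\star}_2^2$.

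First I would exploit the non-expansiveness of the Euclidean projection onto the convex set $W$. Since $y^\star \in W$,
\begin{equation*}
\norm{y_{k+1} - y^\star}_2^2 = \norm{\textrm{Proj}_W(y_k + \eta \widehat{h}_k) - y^\star}_2^2 \le \norm{y_k + \eta \widehat{h}_k - y^\star}_2^2.
\end{equation*}
Expanding the right-hand side and rearranging gives the one-step inequality
\begin{equation*}
\left\langle \widehat{h}_k,\, y^\star - y_k \right\rangle \le \frac{1}{2\eta}\left(\norm{y_k - y^\star}_2^2 - \norm{y_{k+1} - y^\star}_2^2 \right) + \frac{\eta}{2}\norm{\widehat{h}_k}_2^2.
\end{equation*}

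Next I would take conditional expectations with respect to the filtration $\calF_{k-1}$. Since $y_k$ and $y^\star$ are $\calF_{k-1}$-measurable, the unbiasedness assumption $\E[\widehat{h}_k \mid \calF_{k-1}] = h_k$ turns the left-hand side into $\langle h_k, y^\star - y_k\rangle$, while the variance bound gives $\E[\norm{\widehat{h}_k}_2^2 \mid \calF_{k-1}] \le G^2$. Taking a further outer expectation and summing the resulting inequalities for $k=1,\ldots,n$ makes the first term on the right telescope:
\begin{equation*}
\sum_{k=1}^n \E\left[\left\langle h_k, y^\star - y_k\right\rangle\right] \le \frac{1}{2\eta}\left(\norm{y_1 - y^\star}_2^2 - \E\norm{y_{n+1} - y^\star}_2^2\right) + \frac{\eta n G^2}{2}.
\end{equation*}
Dropping the nonnegative term $\E\norm{y_{n+1} - y^\star}_2^2$ on the right yields the claimed bound.

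There is no genuine obstacle here; the only care needed is to make sure that the measurability conditions are properly invoked so that conditioning on $\calF_{k-1}$ converts $\widehat{h}_k$ into its mean inside the inner product $\langle \widehat{h}_k, y^\star - y_k\rangle$ without introducing coupling with $y_k - y^\star$. The non-expansiveness of $\textrm{Proj}_W$ is also essential, because it is what lets us pass from the projected iterate back to the unprojected update in the very first step; without it the telescoping breaks.
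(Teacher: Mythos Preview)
Your proof is correct and is exactly the standard potential-function/telescoping argument for projected online SGD. The paper in fact states Lemma~\ref{lem:online-sgd} without proof, treating it as a classical result; the identical one-step inequality (non-expansiveness of the projection, expand the square, rearrange, telescope) appears verbatim in the paper's proofs of Theorem~\ref{thm:projected-subgradient-first-order} and Theorem~\ref{thm:proj-subgradient-zero-order}, so your approach is precisely the one the authors rely on elsewhere.
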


\begin{lemma}[Mirror Descent, Lemma D.2 of \cite{GNOP23}]\label{lem:mirror-descent}
    Let $q_1,q_2,\ldots,q_T$ be a sequence of functions from $\calS \times \calA \rightarrow \R$ so that $\norm{q_t}_\infty \le D$. Given an initial policy $\pi_1$, and a learning rate $\alpha > 0$, define a sequence of policies 
    $$\pi_{t+1}(a \mid s) \propto \pi_t(a \mid s) e^{\alpha q_t(s,a)}$$
    for $t=1,2,\ldots,T-1$. Then for any comparator policy $\pi^\star$,
    $$
    \frac{1}{T}\sum_{t=1}^T \sum_{s \in \calS} q^{\pi^\star}(s) \left \langle \pi^\star(\cdot \mid s) - \pi_t(\cdot \mid s), q_t(s,\cdot) \right \rangle \le \frac{\calH(\pi^\star \lVert \pi_1)}{T \alpha} + \frac{\alpha D^2}{2}
    $$
\end{lemma}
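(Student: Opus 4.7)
The plan is to recognize the update as per-state exponential weights / multiplicative weights on the action simplex, apply the classical Hedge regret bound state by state, and then take a convex combination across states weighted by $q^{\pi^\star}$.

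First I would observe that for each fixed $s \in \calS$, the update $\pi_{t+1}(a \mid s) \propto \pi_t(a \mid s)\, e^{\alpha q_t(s,a)}$ is exactly the exponential weights / mirror descent iterate on $\Delta(\calA)$ with the negative entropy as the mirror map and gain vector $q_t(s,\cdot)$. Accordingly, it suffices to prove, for every state $s$, the per-state regret bound
\[
\sum_{t=1}^T \langle \pi^\star(\cdot \mid s) - \pi_t(\cdot \mid s),\, q_t(s,\cdot) \rangle \le \frac{\mathrm{KL}(\pi^\star(\cdot \mid s) \,\|\, \pi_1(\cdot \mid s))}{\alpha} + \frac{\alpha T D^2}{2},
\]
and then to aggregate it against $q^{\pi^\star}(s)$.

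The per-state bound follows from the one-step KL identity. Writing $Z_t(s) = \sum_a \pi_t(a \mid s) e^{\alpha q_t(s,a)}$ for the normalizer, a direct computation yields
\[
\mathrm{KL}(\pi^\star(\cdot \mid s) \,\|\, \pi_{t+1}(\cdot \mid s)) - \mathrm{KL}(\pi^\star(\cdot \mid s) \,\|\, \pi_t(\cdot \mid s)) = \log Z_t(s) - \alpha \langle \pi^\star(\cdot \mid s), q_t(s,\cdot)\rangle.
\]
Because $q_t(s,\cdot) \in [-D,D]^{|\calA|}$, Hoeffding's lemma applied to the centered random variable $q_t(s,a) - \langle \pi_t(\cdot \mid s), q_t(s,\cdot)\rangle$ (which lies in $[-2D, 2D]$) under $a \sim \pi_t(\cdot \mid s)$ gives $\log Z_t(s) \le \alpha \langle \pi_t(\cdot \mid s), q_t(s,\cdot)\rangle + \alpha^2 D^2/2$. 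Substituting and rearranging yields the one-step inequality
\[
\langle \pi^\star(\cdot \mid s) - \pi_t(\cdot \mid s),\, q_t(s,\cdot)\rangle \le \frac{1}{\alpha}\bigl[\mathrm{KL}(\pi^\star(\cdot \mid s) \,\|\, \pi_t(\cdot \mid s)) - \mathrm{KL}(\pi^\star(\cdot \mid s) \,\|\, \pi_{t+1}(\cdot \mid s))\bigr] + \frac{\alpha D^2}{2},
\]
and summing over $t = 1, \ldots, T$ telescopes the KL differences, leaving the displayed per-state bound after dropping the nonnegative terminal $\mathrm{KL}$ term.

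To finish, I would multiply the per-state inequality by $q^{\pi^\star}(s) \ge 0$, sum over $s \in \calS$, and divide by $T$. By the paper's convention, $\sum_s q^{\pi^\star}(s)\cdot \mathrm{KL}(\pi^\star(\cdot \mid s) \,\|\, \pi_1(\cdot \mid s)) = \calH(\pi^\star \,\|\, \pi_1)$, which produces exactly the claimed bound. The only genuinely quantitative ingredient is the Hoeffding / log-MGF estimate $\log Z_t(s) \le \alpha \langle \pi_t, q_t\rangle + \alpha^2 D^2/2$; everything else is a per-state telescoping plus a weighted aggregation against $q^{\pi^\star}$, so no real obstacle is expected beyond keeping the boundedness constant $D$ (rather than $2D$) by the standard centering trick.
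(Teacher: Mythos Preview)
Your proposal is correct and is the standard exponential-weights/mirror-descent derivation: per-state Hoeffding log-MGF bound, KL telescoping, then aggregation against $q^{\pi^\star}$. The paper does not give its own proof of this lemma but simply cites it from \cite{GNOP23}, so there is nothing further to compare.
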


\begin{lemma}[\cite{DKKL+17}, Theorem 3.2]\label{lem:robust-mean-estimation}
    Let $P$ be a distribution on $\R^d$ with unknown mean vector $\mu$ and unknown covariance matrix $\Sigma \preccurlyeq \sigma^2 \cdot \Identity$. Let $S$ be an $\varepsilon$-corrupted set of samples from $P$ of size $\Theta((d/\varepsilon) \log d)$. There exists an efficient algorithm that, on input $S$ and $\varepsilon > 0$, with probability $9/10$ outputs $\widehat{\mu}$ with $\norm{\widehat{\mu} - \mu}_2 \le O\left(\sqrt{\varepsilon}\sigma\right)$.
\end{lemma}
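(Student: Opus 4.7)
The plan is to follow the iterative-filtering strategy that underlies Diakonikolas et al.~\cite{DKKL+17}. Let $T \subseteq S$ be the (unknown) set of clean samples drawn i.i.d.\ from $P$, and $E = S \setminus T$ the adversarial ones, with $|E| \le \varepsilon N$ and $N = |S| = \Theta((d/\varepsilon)\log d)$. First I would establish a high-probability ``good sample'' event: by standard matrix Bernstein together with the bound $\Sigma \preccurlyeq \sigma^2 \cdot \Identity$, for this choice of $N$ the empirical mean and covariance of the clean points satisfy $\norm{\hat\mu_T - \mu}_2 = O(\sigma\sqrt{\varepsilon})$ and $\norm{\hat\Sigma_T - \Sigma}_{\mathrm{op}} = O(\sigma^2\sqrt{\varepsilon})$ with probability at least $19/20$. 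Conditioning on this event, it suffices to produce an estimate $\hat\mu$ close to $\hat\mu_T$.

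The structural backbone is the identity $\hat\mu_S - \hat\mu_T = \alpha(\hat\mu_E - \hat\mu_T)$ with $\alpha = |E|/N \le \varepsilon$, together with
\[
v^\top \hat\Sigma_S v \;\ge\; (1-\alpha)\, v^\top \hat\Sigma_T v \;+\; \alpha(1-\alpha)\,\langle v,\hat\mu_E-\hat\mu_T\rangle^2
\]
for every unit $v$. Consequently, if the top eigenvalue of $\hat\Sigma_S$ is at most $C\sigma^2$ for a suitable constant $C$, then for every direction $v$ one has $\langle v,\hat\mu_S-\hat\mu_T\rangle^2 = O(\sigma^2 \varepsilon)$, so simply returning $\hat\mu_S$ gives the claimed $O(\sqrt{\varepsilon}\sigma)$ bound. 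The nontrivial case is when the adversary has inflated the variance in some direction in order to shift the mean.

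To handle that case I would run a filter loop. At iteration $t$, compute the top eigenpair $(\lambda^*, v^*)$ of the empirical covariance on the current sample set $S^{(t)}$. If $\lambda^* \le C\sigma^2$, halt and output the empirical mean. Otherwise, project each sample onto $v^*$ and assign it the score $\tau_i = (\langle v^*, x_i\rangle - m)^2$, where $m$ is a robust location statistic (median of the projections). Then either remove each sample with probability proportional to $\tau_i$ (soft filter) or truncate above a threshold tuned using the concentration of the clean projections. The key lemma to verify is that this step removes, in expectation, strictly more total score from $E^{(t)}$ than from $T^{(t)}$: by the good-sample event the clean projections have second moment $\approx \sigma^2$ around $\mu$, so almost all of the excess $\lambda^* - \sigma^2$ must be contributed by corrupted points.

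The main technical obstacle will be the potential argument that keeps the corruption fraction $|E^{(t)}|/|S^{(t)}| \le \varepsilon$ throughout the iterations while guaranteeing termination. I would follow the weighted-filter analysis of \cite{DKKL+17}: introduce nonnegative weights on the samples and show that the potential $\Psi^{(t)} = \sum_{i \in E^{(t)}} w_i - \sum_{i \in T^{(t)}} w_i$ is monotone nonincreasing whenever the spectral violation is present, while the total weight strictly decreases, so the procedure halts after at most $O(N)$ rounds. Combining the deterministic correctness of this halt condition (via the structural inequality above) with the probability-$19/20$ good-sample event and a union bound yields the claimed success probability $9/10$ and the error $\norm{\hat\mu - \mu}_2 = O(\sqrt{\varepsilon}\sigma)$.
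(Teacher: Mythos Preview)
The paper does not prove this lemma at all: it is stated purely as a citation of Theorem~3.2 in \cite{DKKL+17}, with no accompanying argument. So there is nothing in the paper to compare your proposal against; any proof you give is necessarily ``different'' in the sense that the authors simply invoke the external result.

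That said, your sketch has a real gap if you intend it as a self-contained argument. You claim that, by ``standard matrix Bernstein together with the bound $\Sigma \preccurlyeq \sigma^2 \cdot \Identity$,'' the empirical covariance of the clean samples satisfies $\norm{\hat\Sigma_T - \Sigma}_{\mathrm{op}} = O(\sigma^2\sqrt{\varepsilon})$. Matrix Bernstein requires either almost-sure bounds on the summands or sub-exponential moment control; a bare second-moment bound $\Sigma \preccurlyeq \sigma^2 \cdot \Identity$ gives neither, and in fact the empirical covariance need not concentrate at this rate under only bounded covariance. The actual argument in \cite{DKKL+17} for the bounded-covariance setting does not rely on covariance concentration of the full clean set; it instead establishes a weaker \emph{stability} condition (after possibly discarding an $O(\varepsilon)$-fraction of clean points) that holds with constant probability from second moments alone, and the filter is analyzed relative to that stable subset. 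Your filter loop and potential argument are otherwise faithful to the source, but the ``good sample'' step as written would not go through and needs to be replaced by the stability formulation.
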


\begin{lemma}\label{lem:robust-covariance-estimation}
    Let $P$ be a distribution on $\R^d$ with unknown mean vector $\mu$ and unknown covariance matrix $\Sigma$. Suppose $\textrm{cov}_{X\sim P}(XX^\top) \preccurlyeq \sigma^4 \Identity$. Let $S$ be an $\varepsilon$-corrupted set of samples from $P$ of size $\Theta((d^2/\varepsilon^2) \log^2 d)$. There exists an efficient algorithm that, on input $S$ and $\varepsilon > 0$, with probability $9/10$ outputs $\widehat{\mu}$ with $\norm{\widehat{\Sigma} - \Sigma}_2 \le O\left(\sqrt{\varepsilon}\sigma^2\right)$.
\end{lemma}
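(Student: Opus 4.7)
The plan is to reduce robust covariance estimation to two applications of robust mean estimation (\Cref{lem:robust-mean-estimation}): one in $\R^d$ for the mean, and one in $\R^{d^2}$ for the second moment matrix, and then combine them.

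First I would split $S$ into two halves of size $\Theta((d^2/\varepsilon^2)\log^2 d)$ each. On the first half I would run \Cref{lem:robust-mean-estimation} in dimension $d$ with variance parameter $\sigma^2$ (which is dominated by $\sigma^4$ under the hypothesis, since $\mathrm{cov}(X) = \Sigma$ has operator norm at most $\sqrt{\|\mathrm{cov}(XX^\top)\|_2}$ up to constants after centering) to obtain $\widehat{\mu}$ with $\|\widehat{\mu}-\mu\|_2 \le O(\sqrt{\varepsilon}\,\sigma)$. Next, I would view each sample $X_i$ as producing a vector $Y_i = \mathrm{vec}(X_i X_i^\top) \in \R^{d^2}$. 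Under the assumption $\mathrm{cov}_{X\sim P}(XX^\top) \preccurlyeq \sigma^4 I$, the random vector $Y = \mathrm{vec}(XX^\top)$ has covariance bounded by $\sigma^4 I_{d^2}$, and a corruption of $X_i$ induces a corruption of $Y_i$, so $\{Y_i\}$ is itself an $\varepsilon$-corrupted sample of $Y$. Applying \Cref{lem:robust-mean-estimation} in dimension $d^2$ with variance $\sigma^4$ yields an estimate $\widehat{M}$ of $\E[XX^\top]$ satisfying
\[
\|\widehat{M}-\E[XX^\top]\|_F \;=\; \|\mathrm{vec}(\widehat{M})-\E[Y]\|_2 \;\le\; O(\sqrt{\varepsilon}\,\sigma^2)
\]
with probability at least $9/10$.

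I would then return $\widehat{\Sigma} = \widehat{M} - \widehat{\mu}\widehat{\mu}^\top$ and decompose
\[
\widehat{\Sigma}-\Sigma \;=\; \bigl(\widehat{M}-\E[XX^\top]\bigr) \;+\; \bigl(\mu\mu^\top-\widehat{\mu}\widehat{\mu}^\top\bigr).
\]
The first term has operator norm at most its Frobenius norm, which is $O(\sqrt{\varepsilon}\,\sigma^2)$ by the previous step. For the second term, write $\mu\mu^\top - \widehat{\mu}\widehat{\mu}^\top = (\mu-\widehat{\mu})\mu^\top + \widehat{\mu}(\mu-\widehat{\mu})^\top$, so
\[
\|\mu\mu^\top - \widehat{\mu}\widehat{\mu}^\top\|_2 \;\le\; \|\widehat{\mu}-\mu\|_2\bigl(\|\mu\|_2 + \|\widehat{\mu}\|_2\bigr).
\]
In the intended application of \Cref{thm:robust-rl-oracle-first-order} (with $X = \phi\phi^\top$ and $\|\phi\|_2 \le 1$) we have $\|\mu\|_2 \le 1$ and $\sigma = O(1)$, so this contribution is also $O(\sqrt{\varepsilon}\,\sigma^2)$; more generally one absorbs $\|\mu\|_2$ into the big-$O$. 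Summing the two error terms yields $\|\widehat{\Sigma}-\Sigma\|_2 \le O(\sqrt{\varepsilon}\,\sigma^2)$, as required.

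The main obstacle is the sample-complexity accounting. A direct black-box application of \Cref{lem:robust-mean-estimation} in dimension $d^2$ needs only $\Theta((d^2/\varepsilon)\log d)$ samples, so the stated $\Theta((d^2/\varepsilon^2)\log^2 d)$ is in fact the natural bound that arises when one goes through the standard filter-based analysis of robust second-moment estimation: the hypothesis $\mathrm{cov}(XX^\top)\preccurlyeq \sigma^4 I$ by itself does not give higher-moment control on $Y$, and a preliminary deterministic truncation/preprocessing step is used to enforce the stability condition needed by the filter. This preprocessing costs an additional $(\log d)/\varepsilon$ factor in the sample complexity. To avoid re-deriving the filter analysis, I would simply cite the specialized robust second-moment / covariance estimator of \citet{DKKL+17} (their theorem for robust covariance estimation under a fourth-moment-type bound), which delivers exactly the claim with the stated sample complexity.
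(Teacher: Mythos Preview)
Your core idea---flatten $X_iX_i^\top$ into a vector in $\R^{d^2}$ and apply \Cref{lem:robust-mean-estimation} there, using the hypothesis $\mathrm{cov}(XX^\top)\preccurlyeq\sigma^4 I$ as the required covariance bound---is exactly what the paper does. The paper's proof is a single sentence: ``Apply robust mean estimation on the set of flattened vectors $\{xx^\top:x\in S\}$,'' followed by a citation to Diakonikolas--Kane.

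You add two layers the paper omits. First, you separately estimate $\mu$ and then subtract $\widehat\mu\widehat\mu^\top$ to recover the true covariance. In the paper's only use of this lemma (estimating $\Lambda_h=\E[\phi\phi^\top]$), it is the \emph{second moment} that is needed, not the centered covariance, so the subtraction step is unnecessary; the lemma's wording ``covariance matrix $\Sigma$'' is loose. Your first step also relies on the claim that $\mathrm{cov}(XX^\top)\preccurlyeq\sigma^4 I$ forces $\|\Sigma\|_2\lesssim\sigma^2$, which is not true in general and which you do not justify---but since the subtraction is not needed, this gap is moot.

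Second, your discussion of the sample complexity is overcomplicated. A direct black-box application of \Cref{lem:robust-mean-estimation} in dimension $d^2$ already needs only $\Theta((d^2/\varepsilon)\log d)$ samples, which is \emph{below} the stated $\Theta((d^2/\varepsilon^2)\log^2 d)$; the latter is simply slack (and is what the algorithm allocates to $\calD_c$), not the consequence of an additional preprocessing step. There is no need to invoke a filter analysis or a specialized fourth-moment theorem.
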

\begin{proof}
    Apply robust mean estimation on the set of flattened vectors $\set{ xx^\top : x \in S}$. See also \cite{DK19}, subsection 3.2.
\end{proof}

\begin{lemma}[Approximate Subgradient]\label{lem:apx-subgradient-additive}
    Let $f(x) = \max_{i \in [m]} f_i(x)$ where each $f_i$ is closed and convex. Let $j \in [m]$ be a $\beta_1$-approximate optimizer i.e. $f_j(x) \ge f(x) - \beta_1$. If $v$ is a $\beta_2$-approximate subgradient of $f_j$ at $x$, then $v$ is a $(\beta_1 + \beta_2)$-approximate subgradient of $f$ at $x$.
\end{lemma}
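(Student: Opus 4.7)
The plan is to give a direct three-line chain of inequalities that combines the two approximation assumptions with the defining property $f \ge f_j$ of a pointwise maximum. Formally, the goal is to verify that for every $y$,
\begin{equation*}
f(y) \ge f(x) + \langle v,\, y-x\rangle - (\beta_1+\beta_2),
\end{equation*}
which is the definition of $v$ being a $(\beta_1+\beta_2)$-approximate subgradient of $f$ at $x$.

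First I would unpack the hypothesis on $v$: since $v$ is a $\beta_2$-approximate subgradient of $f_j$ at $x$, for every $y$ we have
\begin{equation*}
f_j(y) \ge f_j(x) + \langle v,\, y-x\rangle - \beta_2.
\end{equation*}
Next, I would use the two bookkeeping inequalities $f(y) \ge f_j(y)$ (immediate from $f=\max_i f_i$) and $f_j(x) \ge f(x) - \beta_1$ (the approximate-optimality hypothesis). Chaining these with the display above yields
\begin{equation*}
f(y) \;\ge\; f_j(y) \;\ge\; f_j(x) + \langle v,\, y-x\rangle - \beta_2 \;\ge\; f(x) - \beta_1 + \langle v,\, y-x\rangle - \beta_2,
\end{equation*}
which is exactly the claim after rearranging.

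There is essentially no obstacle here: convexity and closedness of the $f_i$ are not needed for the additivity of the approximation errors (they are implicitly used only to make ``approximate subgradient'' a meaningful notion, via the global linear lower-bound formulation). The only thing to be careful about is stating the approximate-subgradient inequality in its global lower-bound form (valid for all $y$) rather than an infinitesimal form, so that taking $y$ arbitrary on both sides is justified. Since $y$ is arbitrary throughout the chain above, the conclusion holds uniformly in $y$ and completes the proof.
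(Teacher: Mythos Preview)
Your proof is correct and essentially identical to the paper's own argument: both chain $f(y)\ge f_j(y)\ge f_j(x)+\langle v,y-x\rangle-\beta_2\ge f(x)-(\beta_1+\beta_2)+\langle v,y-x\rangle$ in a single line. Your remark that closedness and convexity are not actually used in the additivity step is a fair observation.
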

\begin{proof}
    Since $v$ is a $\beta_2$-approximate subgradient of $f_j$ at $x$, for any $y$ we have,
    $$
    f(y) = \max_i f_i(y) \ge f_j(y) \ge f_j(x) - \beta_2 + \left \langle v, y - x\right \rangle \ge f(x) - (\beta_1 + \beta_2) + \left \langle v, y - x\right \rangle.
    $$
\end{proof}

\end{document}